\newcommand\schapter[1]{%
    \if@openright\cleardoublepage\else\clearpage\fi
    \phantomsection 
    \@schapter{#1}%
    \addcontentsline{toc}{chapter}{#1} 
}
\providecommand*\toclevel@schapter{0} 
\providecommand*\Hy@toclevel@schapter{0} 
\setlist[enumerate]{leftmargin=2em}
\setlist[itemize]{leftmargin=2em}
\setlist[description]{leftmargin=*}
\newcommand{\cmark}{\ding{51}}%
\newcommand{\xmark}{\ding{55}}%
\definecolor{lightblue}{rgb}{0.6 , 0.7, 0.9}
\definecolor{lightpurple}{rgb}{0.65 , 0.6, 0.9}
\definecolor{lightcyan}{rgb}{0.5 , 0.85, 0.8}
\definecolor{darkpurple}{rgb}{0.35 , 0.15, 0.6}
\newtheorem{theorem}{Theorem}[section]
\newtheorem{definition}[theorem]{Definition}
\newtheorem{lemma}[theorem]{Lemma}
\newtheorem{corollary}[theorem]{Corollary}
\newtheorem{proposition}[theorem]{Proposition}
\NewDocumentCommand{\optionalbracket}{m m}{%
    \NewDocumentCommand{#1}{g}{%
        \IfNoValueTF{##1}{%
            #2%
        }{%
            #2(##1)%
        }%
    }%
}
\NewDocumentCommand{\oneortwo}{m m m}{%
    \NewDocumentCommand{#1}{g}{%
        \IfNoValueTF{##2}{%
            #3(##1)%
        }{%
            #3(##1,##2)%
        }%
    }%
}
\newcommand{\distrsym}{\bigtriangleup}
\optionalbracket{\distributions}{\distrsym}
\optionalbracket{\distributionsO}{\distrsym_0}
\optionalbracket{\distributionsF}{\distrsym^\otimes}
\optionalbracket{\distributionsFO}{\distrsym^*_0}
\newcommand{\factors}{\Omega}
\newcommand{\indep}{\mathrel{\perp\!\!\!\perp}}
\newcommand{\orth}{\mathrel{\perp}}
\newcommand{\orthF}{\mathrel{\perp}^\FS}
\newcommand{\sect}{\cap}
\newcommand{\sectbig}{\bigcap}
\newcommand{\union}{\cup}
\newcommand{\indepstar}{\indep^\otimes}
\newcommand{\timesbig}{\mathop{\text{\Large$\times$}}}%
\newcommand{\comp}[1]{{\overline{#1}}}
\newcommand{\impl}{\Rightarrow}
\newcommand{\nothing}{\varnothing}
\newcommand{\given}{\mid}
\optionalbracket{\interiorset}{S'}
\optionalbracket{\dirac}{\delta_x}
\newcommand{\functionOf}{\triangleleft}
\newcommand{\functionTo}{\triangleright}
\newcommand{\without}{\setminus}
\optionalbracket{\PA}{\text{Pa}}
\newcommand{\pa}{\text{pa}}
\optionalbracket{\range}{\text{range}}
\newcommand{\FS}{\Omega} 
\newcommand{\FSModel}{\mathcal{M}} 
\newcommand{\Omg}{\Omega} 
\newcommand{\omg}{\omega} 
\newcommand{\Obs}{{\textnormal{Obs}}} 
\newcommand{\A}{\text{an}} 
\newcommand{\Val}{\textnormal{Val}} 
\newcommand{\Bayes}{\mathcal{B}}
\newcommand{\Xb}{X}
\newcommand{\pav}{{\pa(v)}}%
\newcommand{\xpav}{x_\pav}%
\newcommand{\Xpav}{X_\pav}%
\newcommand{\vxpav}{{(v,\xpav)}}%
\newcommand{\Pomg}{P^\Omg}%
\newcommand{\iinI}{{i\in I}}%
\newcommand{\vinV}{{v\in V}}%
\newcommand{\merge}{\cdot}%
\newcommand{\timesOmg}{\bigtimes_\iinI\Omg_i}%
\newcommand{\FSG}{\FS}%
\newcommand{\x}{\alpha}%
\newcommand{\y}{\beta}%
\newcommand{\FSMG}{\FSModel^G}%
\renewcommand{\epsilon}{\varepsilon}%
\renewcommand{\paragraph}{\@ifstar\@paragraphbold\@paragraphbold}
\newcommand{\@paragraphbold}[1]{\vspace{4mm}\noindent\textbf{{#1}.}}
\optionalbracket{\cvar}{C}
\optionalbracket{\base}{\text{Base}}
\optionalbracket{\history}{\mathcal{H}}
\optionalbracket{\relevant}{\mathcal{R}}
\optionalbracket{\supp}{\mathrm{supp}}
\optionalbracket{\proj}{\pi}
\optionalbracket{\marg}{\textnormal{marg}}
\newcommand{\compose}{\circ}
\newcommand{\before}{\leq}
\newcommand{\strictlybefore}{<}
\newcommand{\strictlybeforeF}{\strictlybefore^\FS}
\newcommand{\beforeF}{\before^\FS}
\DeclareMathOperator{\historyC}{Cohistory}
\newtcolorbox{definitiontext}[1][]{
  colback=white,
  colframe=darkgray,
  boxrule=0.5mm,
  #1
}
\author{
  Scott Garrabrant\thanks{Equal contribution}\\
  Machine Intelligence Research Institute \\
  \texttt{scott.garrabrant@gmail.com}\\
   \And
  Matthias G. Mayer\footnotemark[1] \\
  Machine Intelligence Research Institute, \\
  Independent \\
  \texttt{matthias.georg.mayer@gmail.com}\\
   \And
  Magdalena Wache\footnotemark[1] \\
  Machine Intelligence Research Institute, \\
  Principles of Intelligent Behavior in \\
  Biological and Social Systems\\
  \texttt{magdalena-wache@mailbox.org}\\
  \And
  Leon Lang \\
  University of Amsterdam\\
  \texttt{l.lang@uva.nl}\\
  \qquad\qquad\qquad\qquad\qquad\qquad\qquad\\
   \And
  Sam Eisenstat \\
  Machine Intelligence Research Institute\\
  \texttt{sam@intelligence.org}\\
   \And
  Holger Dell \\
  Goethe University Frankfurt, Germany, \\
  IT University of Copenhagen, Denmark \\
  \texttt{hold@itu.dk} \\
}
\title{Factored space models: Towards causality between levels of abstraction}
\begin{document}

\maketitle
\begin{abstract}
{Causality plays an important role in understanding intelligent behavior, and there is a wealth of literature on mathematical models for causality, most of which is focused on causal graphs. Causal graphs are a powerful tool for a wide range of applications, in particular when the relevant variables are known and at the same level of abstraction.
However, the given variables can also be unstructured data, like pixels of an image. Meanwhile, the causal variables, such as the positions of objects in the image, can be arbitrary deterministic functions of the given variables. Moreover, the causal variables may form a hierarchy of abstractions, in which the macro-level variables are deterministic functions of the micro-level variables.
Causal graphs are limited when it comes to modeling this kind of situation.
In the presence of deterministic relationships there is generally
no causal graph that satisfies both the Markov condition and the faithfulness condition.
We introduce \emph{factored space models} as an alternative to causal graphs which naturally represent both probabilistic and deterministic relationships at all levels of abstraction.
Moreover, we introduce \emph{structural independence} and establish that it is equivalent to statistical independence in every distribution that factorizes over the factored space. This theorem generalizes the classical soundness and completeness theorem for d-separation.
}
\end{abstract}
  \keywords{factored space models \and deterministic causality \and abstraction \and structural independence} 

\allowdisplaybreaks
\section{Introduction}

Learning causal relationships plays a central role in human intelligence \cite{sloman2015causality} \cite{gopnik2012reconstructing}, and learning a causal model is necessary for any system that can generalize out of distribution \cite{richens2024robust}. 
Therefore, a rigorous mathematical theory of causality is key to understanding intelligent behavior.

There is a large body of literature about the mathematical foundations of causality \cite{pearl2009causality}, a highly successful area of research, most of which is based on causal graphs.
An important feature of causal graphs is that they can encode statistical independence in their structure via \emph{d-separation}~\cite{koller2009probabilistic}. It is commonly assumed that a graph $G$ is a useful model for a probability distribution $P$ if it is a \emph{perfect map}  of $P$, that is, if d-separation in $G$ implies statistical independence in $P$ (Markov condition), and the converse (faithfulness condition) \cite{spirtes2001causation}.

\begin{figure}[ht]
\centering
\includegraphics[width=0.75\textwidth]{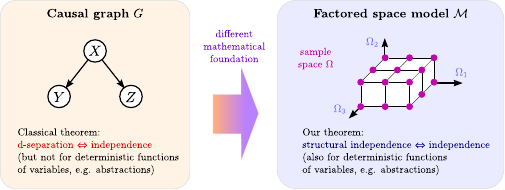}
\caption{Summary of our results. In a causal graph, the standard criterion for independence is d-separation. However, d-separation is undefined for variables that are deterministic functions of other variables, such as $A\coloneq X+Y+Z$. To address this issue, we introduce factored space models (FSMs) as an alternative to causal graphs. FSMs are based on expressing the sample space as a Cartesian product, and can be visualized as a hyper-rectangle. FSMs allow us to define \emph{structural independence} as an independence criterion which is also defined for deterministic functions of variables. Further, we establish a theorem that generalizes the soundness and completeness of d-separation \cite{koller2009probabilistic} to structural independence.\vspace{-4mm}
}
\label{fig:fig1}
\end{figure}

However, as we show in \Cref{sec:background}, when variables are deterministically related, there is generally no perfect map with these variables.
This is an important limitation because deterministic relationships appear in many applications. For example, mathematical relationships, equations in physics, and computer programs all contain deterministic relationships between variables. In particular, deterministic relationships are inherent when the causal variables are unknown, such as in \emph{causal representation learning}~\cite{scholkopf2021toward}. For example, if the given variables are the pixels of an image, then the causal variables such as the positions of objects in the image can be arbitrary deterministic functions of the given variables.
Moreover, when we model a system at \emph{different levels of abstraction}, there are deterministic relationships between the detailed, micro-level variables and the abstract, macro-level variables.
For example, the temperature of gas in a container is a deterministic function of the kinetic energy of all the individual gas particles in the container. 

We introduce a framework in which works naturally with deterministic relationships, and treats functions of variables on equal footing.
Our main contributions are the following:
\begin{enumerate}[itemsep=0pt]
    \item \textbf{Factored space models} (FSMs), an alternative to causal graphs. FSMs naturally represent all random variables, including deterministic functions of other variables, rather than distinguishing those variables that appear in a particular graph. FSMs are a reframing of \emph{finite factored set models} \cite{garrabrant2021temporal}, and they are based on representing the sample space as a Cartesian product, as depicted in \Cref{fig:fig1}.
    \item \textbf{Structural independence}, a criterion that characterizes statistical independence. 
    \item \textbf{Generalizing the soundness and completeness of d-separation.}     We establish \Cref{thrm:soundcomplete}, which  generalizes the soundness and completeness theorem for d-separation \cite{koller2009probabilistic} to variables that can be deterministically related. 
\end{enumerate}

\paragraph*{Overview} The paper proceeds as follows. In section \Cref{sec:background}, we show the limitations of causal graphs when it comes to modeling deterministic relationships. 
In \Cref{sec:fsm}, we introduce the FSM framework, prove that structural independence forms a semigraphoid, and show how to construct an FSM from a causal graph. In \Cref{sec:soundcomplete}, we prove the soundness and completeness of structural independence. 
Finally, in \Cref{sec:discussion}, we discuss the implications of our results and provide directions for future work.

\section{Related work}
\label{sec:relatedwork}
In this section we review the related work on causal abstractions, and on representing deterministic relationships between variables. We also compare FSMs with causal graphs.

\subsection{Causal abstractions}
The \emph{causal abstractions} framework \cite{beckers2019abstracting} \cite{beckers2020approximate} \cite{rubenstein2017causal} \cite{zennaro2022abstraction} shares our broad aim, that of understanding relationships between variables not well modeled by causal graphs, and in particular when different variables reside at different levels of abstraction.
In the causal abstractions framework, different levels of abstractions are conceived as different causal models.
\emph{Transformations} between such models are then used to relate them.

In our work, we seek to provide a language that can simultaneously do the work of causal models and d-separation, and also express the sort of relations between variables that can occur when we speak of microscale and macroscale variables together. For example, we may describe a gas both on a very detailed level in terms of the kinetic energy of each gas particle, and on a more abstract level in terms of the overall temperature of the gas. 
In particular, a representation for different levels of abstractions should include deterministic relations---we may want to model a macroscale variable as being determined by the microscale variables.
One way to regard the division that exists in causal abstractions framework between the causal models on the one hand and the transformations on the other is that it stems from an inability to place variables that stand in such deterministic relations together in one causal graph.
This is because no such graph can be a perfect map once we account for these deterministic relations, as we show in \Cref{sec:background}.
We do not develop this here, but we are interested in the application of such ideas to neural network interpretability. The causal abstraction framework has been used in this way \cite{geiger2021causal} \cite{geiger2023causal}.

\subsection{Representing deterministic relationships}
In a causal graph, the nodes are random variables, and the independences between those variables are represented by d-separation of the nodes. To represent independences between deterministic functions of variables, we need to represent these functions of variables as their own nodes. However, as illustrated in \Cref{sec:background}, when we allow variables that are functions of other variables, d-separation becomes incomplete. There have been various attempts to model causality with variables that are deterministically related. There are approaches like Causal Constraints models \cite{blom2020beyond} and an axiomatization of causal models with constraints \cite{beckers2023causal}, which extend causal models to include variables that are deterministically related by a constraint.

However, these approaches only consider an additional set of functional constraints on a fixed set of given variables rather than modeling any deterministic function of variables as its own variable. Moreover, both of these approaches do not provide a criterion for independence in the same way that d-separation provides an independence criterion nodes in a causal graph. 
In contrast, structural independence on an FSM, our analog of d-separation, is a criterion for independence that correctly represents independence even when variables are deterministically related.

There is work which does consider the independence between deterministically related variables \cite{lee2017causal}, which provides a causal discovery method based on independence of functions of variables.
This type of independence between functions of variables is what we represent in FSMs.
However, the existing work only applies to the special case of 2 binary variables, and does not provide a general model for how to represent such independences.
In contrast, FSMs can model situations with an arbitrary number of discrete variables.
The closest to an independence criterion for variables which can have deterministic relationships is D-separation \cite{geiger1990identifying} (note the capital `D' to distinguish from d-separation). But just like for d-separation there are simple counterexamples for variables that are independent but not D-separated, as shown in \Cref{sec:background}.

\subsection{Causal graphs and FSMs}
Like a causal graph, an FSM models a set of probability distributions on some variables, and implies certain conditional independence relations between them.
Other research has also examined ways of representing probability distributions.
In particular, the factors of an FSM can be regarded as factors of a \emph{factor graph} \cite{witten2002data}.
FSMs generalize causal graphs; we can construct an FSM from a causal graph in order to represent the same independence properties, as we demonstrate in \Cref{sec:dagconstruction}.
Here, the factors are independent, which is in line with the spirit of causal modeling, as expressed by the principle of independent mechanisms \cite{peters2017elements}.

\subsubsection{Soundness and completeness of d-separation}
The theory of causal graphs provides us with the d-separation criterion \cite{koller2009probabilistic}, which characterizes which conditional independence relations are implied by a causal graph.
Given three sets of variables $X, Y, Z$ corresponding to sets of nodes of a causal graph, $X$ and $Y$ are conditionally independent given $Z$ in all probability distributions that factorize according to the graph if and only if $X$ and $Y$ are d-separated by $Z$ in the graph.

We prove a more general version of this theorem, in which d-separation in a graph $G$ is replaced by \emph{structural independence} in a \emph{factored space}. Our theorem is more general because in the classic theorem, $X$, $Y$, and $Z$ are nodes in a graph, while in our version they can be arbitrary variables, including deterministic functions of variables such as $X+Y$. In particular, when an FSM is constructed from a causal graph, the classic theorem follows from our theorem.

\section{Causal graphs cannot capture deterministic relationships}
\label{sec:background}
Methods based on causal graphs are highly successful in modeling independence, intervention and counterfactuals. However, as we illustrate in this section,
when there is a deterministic relationship between random variables, there is generally no \emph{perfect map} of the probability distribution. That is, there is no graph $G$ such that d-separation in $G$ is equivalent to statistical independence.

Let $\vec{X}\coloneq(X_1, \dots, X_n)$ be a vector-valued random variable, and let $Y\coloneq \frac{1}{n}\sum_{i=1}^n X_i$ be the mean of $\vec{X}$. One can think of $Y$ as an abstraction of $X$. Further, let $Z$ be a variable that is causally downstream of $Y$, for example $Z\coloneq Y+N$ wherein $N$ is independent noise. Then $\vec{X}$ and $Z$ are independent given $Y$ (denoted $\vec{X}\indep Z\given Y$), since when $Y$ is known, $\vec{X}$ provides no further information about $Z$. Moreover, $Y \indep Z\given\vec{X}$ holds, since knowing $\vec{X}$ fully determines $Y$, so $Z$ provides no further information about $Y$. Furthermore, we have $Z\not\indep (Y, \vec{X})$.

\begin{figure}[ht]
\centering
\includegraphics[width=0.2\textwidth]{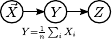}
\caption{This graph violates the faithfulness condition: $Y$ and $Z$ are not d-separated by $\vec{X}$, despite $Y\indep Z\given \vec{X}$.}
\label{fig:not_perfect}
\end{figure}
These three independence statements together contradict the \emph{intersection axiom} \cite{pearl2022graphoids} of d-separation ($\text{d-sep}(X, Y\given Z) \land \text{d-sep}(X,Z\given Y)\impl \text{d-sep}(X, (Y,Z))$). Therefore, there is no graph in which the d-separations are equivalent to the statistical independencies.
In particular, the graph in \Cref{fig:not_perfect} is not a perfect map, because it does not reflect that $Y \indep Z\given\vec{X}$ holds.

Even \emph{D-separation} \cite{geiger1990identifying}, which extends d-separation to include nodes that are deterministic functions of their parents, does not cover this case.

\section{Factored space models}
\label{sec:fsm}
In this section, we formally introduce the factored space model (FSM) framework. We define the building blocks of the framework, including \emph{derived variables}, \textit{history}, \textit{structural independence} and \textit{structural time}.

\paragraph*{Derived variables}
Random variables are a central object of study in statistics and causality. Formally, a \emph{random variable} on a sample space $\Omega$ is a measurable function $X\colon\Omg \to \Val(X)$ from $\Omg$ to a value space $\Val(X)$. When we speak of variables in this paper, we always mean discrete random variables. Any subset of $\Omg$ is called \emph{event}.

Our contribution is a better way to represent variables that are deterministically related. That is, some variables are a deterministic function of other variables. We define a variable being a deterministic function of another variable  as follows:

\begin{definition}[Derived Variable]\label{def:derived}
  Let $X \colon  \Omg \to \Val(X)$ and $Y \colon  \Omg \to \Val(Y)$ be two random variables, and let $C\subseteq\Omg$ be an event. Then, we say that \emph{$Y$ is derived from $X$ on $C$}, or that it is a \emph{deterministic function of} $X$ on $C$ if there is a function $f\colon  \Val(X) \to \Val(Y)$, such that for all $\omg \in C$, it holds that $Y(\omg) = f(X(\omg))$. We also write this as $X \functionTo_C Y$. We write $X \functionTo Y$ as a shorthand for $X\functionTo_\Omg Y$, and say that $Y$ is derived from $X$ or a deterministic function of $X$.

\end{definition}

Given multiple variables $X_1, \dots, X_n$, we define the variable $(X_1,\dots, X_n)\colon  \Omg \to \Val(X_1)\times \dots\times\Val(X_n)$ by $\omg \mapsto (X_1(\omg), \dots, X_n(\omg))$. Then, if $Y\functionOf (X_1, \dots, X_n)$, we say that $Y$ is a deterministic function of the variables $X_1, \dots, X_n$.

\paragraph*{Operations on indexed families}
\label{par:indexedfamilies}
As we make heavy use of indexed families, we introduce some operations on indexed families and probability distributions over sets of indexed families. Formally, an indexed family $a$ with the index set $I$, written as $a=(a_i)_\iinI $ is a function $f_a\colon I\to T$ from an index set $I$ to a target set $T$ of possible values. However, rather than being viewed as a function, $a$ is treated as a collection of indexed elements $a_i \coloneqq f_a(i)$. For example, when we write $x\in a$, that means there is an $i\in I$ with $x=a_i$. We define the following operations.
\begin{enumerate}
    \item \textbf{Projection.}
Let $a=(a_i)_\iinI $ be an indexed family. Let $j\in I$, and $J\subseteq I$.  Then the \textit{projection} of $a$ to $j$ is defined as $\proj_j(a)\coloneqq a_j$. The projection of $a$ to $J$ is defined as $\proj_J(a)\coloneqq (a_j)_{j\in J}$.
Further, let $A$ be a set of indexed families which are all defined over the same index set $I$. Then, we also define $\proj_j(A)\coloneqq \{\proj_j(a)\mid a\in A\}$. Analogously, $\proj_J(A)\coloneqq \{\proj_J(a)\mid a\in A\}$. (Note that when $J$ is empty, $\proj_J(A)$ is a singleton with the empty family as its only element.) We also use the notation $A_j\coloneqq \proj_j(A)$ and $A_J\coloneqq \proj_J(A)$ as a shorthand.
\item \textbf{Merge.} Let $a=(a_j)_{j\in J}$, $b=(b_k)_{k\in K}$ be two indexed families over disjoint index sets $J$ and $K$. Then $a\merge b\coloneqq  (c_i)_{i\in J\merge K}$ with $c_i\coloneqq  a_i$ if $i\in J$ and $c_i\coloneqq b_i$ if $i\in K$. Note that the merge $a\merge b$ is similar to a concatenation of vectors. A concatenation of vectors can be seen as the special case of $a\merge b$ when $J\merge K$ is ordered and $j<k$ for all $j\in J$ and $k\in K$. Unlike concatenation, merging is commutative.
\item
\textbf{Cartesian product.} Let $(A_i)_\iinI $ be a family of sets $A_i$ over the index set $I$. Then its \emph{Cartesian product} is $\timesbig_\iinI A_i\coloneqq \{(a_i)_\iinI \mid  \forall i\in I: a_i\in A_i\}$, which is a singleton if $I$ is empty.
For a set $B$ of indexed families over $J$, and a set $C$ of indexed families over $K$, wherein $J$ and $K$ are disjoint, we write $B \times C \coloneqq  \{b \merge c\mid b\in B, c\in C\}$. Note that unlike the Cartesian product over vectors, the Cartesian product over families is  commutative because it uses the merge rather than concatenation. Also note that the projection of a Cartesian product is again a Cartesian product. That is, if $A=\timesbig_\iinI A_i$, then the projection $A_J$ equals $\timesbig_{i\in J}A_i$.
\end{enumerate}
Using these definitions, we now define factored spaces and factored space models.
\subsection{Factored spaces and factored space models}
We now define two central mathematical structures of our framework --- \emph{factored spaces} and \emph{factored space models}.

\begin{definition}[Factored Space]
    A finite sample space~$\Omg$ is called a \emph{factored space} if there is a finite \emph{index set $I$} and finite sets $\Omg_i$ for all $i\in I$ such that \(\Omg = \timesOmg\) holds.
    The sets $\Omega_i$ are called the \emph{factors} of $\Omega$.
    Moreover, the random variables $U_i\colon\Omega\to\Omega_i$ with $U_i(\omega)=\pi_i(\omega)$ are called the \emph{background variables} of $\Omega$, and we write $U=(U_i)_\iinI$.
\end{definition}
The elements of $\Omg$ are indexed families of the form $\omg = (\omg_i)_\iinI $ with $\omg_i\in \Omg_i$.
When using factored spaces for modeling probability distributions, we consider those distributions that \emph{factorize} over~$\Omg$.
\begin{definition}[Factorizing Distribution]
    \label{def:factorizes}
    Let $P$ be a probability distribution on a factored space~$\Omega$. We say that $P$ \textnormal{factorizes} over~$\Omg$ if 
    \(P(\omega)=\prod_{i\in  I}P(\omega_i)\) holds for all $\omega\in\Omega$, wherein $P(\omega_i)\coloneq P(\proj_i^{-1}(\omg_i))$.
    
\end{definition}
Note that $P$ factorizes over $\Omg$ if and only if all background variables $U_i$ are mutually independent under $P$.
As illustrated in \Cref{fig:fsm}, one can think of the factors as axes, and the sample space as a hyper-rectangle along those axes.
We remark that $U\colon\Omega\to\Omega$ is the identity function on $\Omega$, so any variable $X:\Omg\to \Val(X)$ is a deterministic function of~$U$. This is analogous to every variable in a structural causal model \cite{pearl2009causality} being a deterministic function of the independent background variables $U$.
\begin{figure}[ht]
\centering
\includegraphics[width=0.75\textwidth]{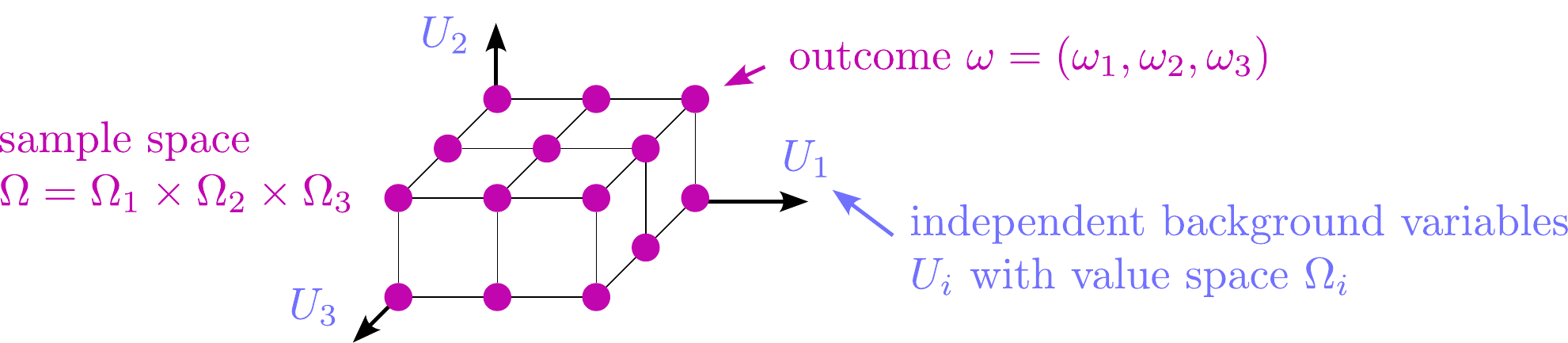} 
\caption{Factored space with three factors.}
\label{fig:fsm}
\end{figure}%

Next, we formally define the notion of a \emph{factored space model} for a probability distribution~$P$ on a finite set~$\Obs$ of possible observations.
\begin{definition}[Factored Space Model]  \label{def:FSM}
    Let $\Obs$ be a finite set and let $P$ be a distribution on $\Obs$.
    Furthermore, let $O\colon\Omega\to\Obs$ be a random variable on a factored space~$\Omega$.
    Then we say that the tuple $\FSModel\coloneqq(\Omega,O)$ is a \emph{factored space model for~$P$} if there is a distribution $P^\Omg$ that factorizes over $\Omega$ and satisfies $P^\Omg(O=o)=P(o)$ for all $o\in\Obs$.
\end{definition}
We call a variable $X$ on $\Omg$ \emph{observed} if it is a deterministic function of $O$, and \emph{unobserved} otherwise.
We remark that every probability distribution~$P$ over $\Obs$ has a trivial factored space model with a single factor (where $\Omega=\Obs$ and $|I|=1$), but it may have many other factored space models.

\subsection {History}

In the following, we build up to a notion of \textit{structural independence} of variables on factored space models which is an analog of d-separation and which applies to arbitrary variables. To this end, we first define the \textit{history} $\history(X\mid C)$ of a random variable $X$ given an event $C\subseteq \Omg$. The letter $C$ indicates that this is the event we condition on.
We would like to define the history such that it is the set of indices $i$ of the background variables $U_i$ that $X$ depends on if we condition on $C$ in a distribution that factorizes over $\Omg$.
One might think that the set of background variables that $X$ depends on should be defined as the smallest set of variables $U_J$ such that $U_J\functionTo_C X$. However, this condition is not enough, since conditioning on $C$ may make the background variables dependent. To see this, consider the following example.

Let $\Omg_1\coloneq\Omg_2\coloneq\{0,1\}$ be the outcomes of two independent coin flips, and let $P$ be the joint distribution over the factored space $\Omg=\Omg_1\times \Omg_2$. Then, the background variables $U_1$ and $U_2$ are the results of first and second coin respectively. Since the coins are independent, $P$ factorizes over $\Omg$. Further, let $C\coloneq \{00, 11\}$ be the event that both coins have the same result. Note that given $C$, we have that $U_1$ and $U_2$ are dependent in $P$. Thus, the set of background variables that $U_1$ depends on given $C$, is $\{U_1, U_2\}$, rather than just $\{U_1\}$. When $C$ is such that it does not introduce a dependence between $U_J$ and $U_{I\setminus J}$ for $J\subseteq I$, we say that $J$ \emph{disintegrates} $C$.

\begin{definition}[Disintegration]\label{def:disintegration}
Let $\FS=\timesOmg$ be a factored space, let $C\subseteq\Omg$ be an event and let $J\subseteq I$. Then $J$ \emph{disintegrates} $C$ if
    $C=C_J\times C_{I\backslash J}\,.$
\end{definition}
Note that in the previous example, $\{1,2\}$ trivially disintegrates $C$ but $\{1\}$ does not disintegrate $C$. We can now formally define the history.
\begin{definition}[History, Generation]
\label{def:history}
Let $\FS=\timesOmg$ be a factored space with the background variables~$U$. Let $J\subseteq I$, let $X\colon \Omg\to \Val(X)$, and let $C\subseteq \Omg$. Then $J$ \textnormal{generates} $X$ given $C$ if $U_J \functionTo_{C} X$ and $C=C_J\times C_{I\backslash J}$.
The \textnormal{history} $\history(X\mid C)$ of $X$ given $C$ is the intersection of all $J\subseteq I$ 
that generate $X$ given $C$. That is,
\begin{align}\label{eq:def-history}%
\history(X\mid C)\coloneqq \sectbig \big\{J \subseteq I: J  \textnormal{ generates } X \textnormal{ given } C\big\}\,.
\end{align}
\end{definition}

Note that, for any event~$C$, we have $C=C_I\times C_\emptyset$ and thus the index set $I$ trivially disintegrates $C$. Moreover, for any variable~$X$, the index set $I$ trivially generates~$X$ because $U_I$ is the identity function on $\Omg$ and $X$ is a function $\Omg\to\Val(X)$, so $X$ is derived from $U_I$ on~$C$ in the sense of \Cref{def:derived}. Therefore, the set on the right side of \eqref{eq:def-history} contains at least the set $I$.

\begin{restatable}[History is minimal generating set]{lemma}{historygenerates}
    \label{lemma:historygenerates}
    For a factored space $\FS=\timesOmg$, let $X\colon \Omg\to \Val(X)$, and let $C\subseteq\Omg$. Then, $\history(X\given C)$ is the unique minimal set which generates $X$ given $C$.
\end{restatable}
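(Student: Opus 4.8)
The plan is to show that the family of subsets of~$I$ that generate~$X$ given~$C$ is closed under pairwise intersection. Since this family is finite and nonempty (it contains~$I$, as already noted below \eqref{eq:def-history}), closure under pairwise intersection implies closure under total intersection, so the set $\history(X\given C)=\sectbig\{J: J\text{ generates }X\text{ given }C\}$ itself generates $X$ given~$C$; being contained in every generating set, it is then automatically the unique minimal one. Thus the entire lemma reduces to the claim: if $J_1$ and $J_2$ both generate $X$ given $C$, then so does $M\coloneq J_1\cap J_2$.

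To prove this I would first isolate a \emph{swapping} primitive: if $J$ disintegrates $C$ and $\omega,\omega'\in C$, then $\proj_J(\omega)\merge\proj_{I\setminus J}(\omega')\in C$. This is immediate from $C=C_J\times C_{I\setminus J}$, since $\proj_J(\omega)\in C_J$ and $\proj_{I\setminus J}(\omega')\in C_{I\setminus J}$. I would then fix the partition $M\coloneq J_1\cap J_2$, $A\coloneq J_1\setminus J_2$, $B\coloneq J_2\setminus J_1$, $D\coloneq I\setminus(J_1\cup J_2)$, so that $J_1=A\cup M$ and $J_2=B\cup M$, and treat the disintegration and the function condition separately.

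For disintegration of $M$, the inclusion $C\subseteq C_M\times C_{I\setminus M}$ is automatic, so only $C_M\times C_{I\setminus M}\subseteq C$ needs proof. Given $\omega$ with $\proj_M(\omega)\in C_M$ and $\proj_{I\setminus M}(\omega)\in C_{I\setminus M}$, pick witnesses $q\in C$ agreeing with $\omega$ on $M$ and $p\in C$ agreeing with $\omega$ on $I\setminus M$. Two swaps then finish it: first swap $q$ and $p$ along $J_2$ to obtain $r\in C$ agreeing with $\omega$ on $A\cup M\cup D$ (differing only possibly on $B$); then swap $r$ and $p$ along $J_1$, which lands exactly on $\omega$, so $\omega\in C$. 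This is the step I expect to be the main obstacle: it is the only place where the two disintegrations are genuinely combined, and composing the swaps onto the target coordinate pattern requires careful bookkeeping of the four blocks $A,B,M,D$.

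For the function condition $U_M\functionTo_C X$, it suffices to show $X$ is constant on each fibre of $\proj_M$ inside $C$. Given $\omega,\omega'\in C$ with $\proj_M(\omega)=\proj_M(\omega')$, I would swap them along $J_1$ to get $\omega^{(1)}\in C$ agreeing with $\omega$ on $J_1=A\cup M$ and with $\omega'$ on $I\setminus J_1\supseteq B$. Then $U_{J_1}\functionTo_C X$ yields $X(\omega^{(1)})=X(\omega)$, while $\omega^{(1)}$ and $\omega'$ agree on $J_2=B\cup M$ (sharing the $M$-coordinate by hypothesis and the $B$-coordinate by construction), so $U_{J_2}\functionTo_C X$ yields $X(\omega^{(1)})=X(\omega')$; hence $X(\omega)=X(\omega')$. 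Note this part uses only a single swap and does not rely on the disintegration of $M$. Combining the two parts shows $M$ generates $X$ given $C$, which establishes the closure claim and therefore the lemma.
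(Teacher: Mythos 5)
Your proposal is correct and takes essentially the same approach as the paper: you reduce the lemma to closure of generation under pairwise intersection (using finiteness of $I$ and the fact that $I$ itself generates), establish disintegration of $J_1\sect J_2$ by composing swaps along $J_1$ and $J_2$, and prove $U_{J_1\sect J_2}\functionTo_C X$ with a single swap along $J_1$ followed by an appeal to $J_2$-generation, exactly as in \Cref{lemma:generationintersection}. The only cosmetic difference is that the paper phrases the disintegration step as a four-witness product-set identity (\Cref{lemma:disintegrationintersection}) where you use a two-witness double swap; the underlying argument is the same.
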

\begin{proof}[Proof (sketch)]
    We first show that generation is closed under intersection, see \Cref{lemma:generationintersection} in the appendix.
    It follows that $\history(X\given C)$ generates $X$ given $C$. As $I$ is finite, $\history(X\given C)$ is the unique minimum of $\{J\subseteq I \mid J\text{ generates } X\text{ given }C\}$. Therefore, $\history(X\given C)$ can be equivalently defined as \emph{the minimal set} that generates $X$ given $C$.
\end{proof}

\paragraph{Shorthand notation for the history}
For an event $A\subseteq \Omg$, we write $\history(A
\given C)$ as a shorthand for $\history(1_A\given C)$, wherein $1_A$ is the variable that is $1$ for $\omg \in A$, and $0$ otherwise.
When $x$ and $y$ are values of the random variables $X$ and $Y$, we use the notation $\history(x\given C), \history(X\given y), \history(A\given y)$ or $\history(x\given y)$. Here, $x$ is a shorthand for the event $\{\omg \in \Omg\mid X(\omg)=x\}$ which is a common abbreviation when denoting probabilities, such as $P(x\given y)$.
We write $\history(X), \history(A)$ and $\history(x)$ as a shorthand for the unconditional history $\history(X\mid \Omg), \history(A\given \Omg)$ and $\history(x\given\Omg)$ respectively.

We remark that the disintegration condition is vacuous for the unconditional history, as $\Omg$ always satisfies $\Omg=\Omg_J\times \Omg_{I\backslash J}$.

\paragraph{History of joint variables}
The following lemma formalizes the fact that if a variable~$X\colon\Omg\to\Val(X)$ depends on exactly the positions $J_1\subseteq I$ in $\Omg$ and a variable~$Y\colon\Omg\to\Val(Y)$ depends on exactly the positions $J_2\subseteq I$, then the joint variable~$(X,Y)$ depends on exactly the positions $J_1\cup J_2$.
\begin{restatable}[History of joint variable]{lemma}{jointhistory}\label{lemma:jointhistory}
Let $X$ and $Y$ be  variables defined on a factored space $\FS=\timesOmg$. For all events $C\subseteq \Omg$, we have
\[\history{(X,Y) \given C}=\history{X \given C}\cup \history(Y \given C)\,.\]
\end{restatable}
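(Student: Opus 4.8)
The plan is to derive both inclusions from \Cref{lemma:historygenerates}, which characterizes each history as the \emph{unique minimal generating set}. Write $J_X \coloneqq \history{X \given C}$ and $J_Y \coloneqq \history{Y \given C}$, and recall that a set $J \subseteq I$ generates a variable given $C$ exactly when it satisfies the functional condition (that variable is derived from $U_J$ on $C$) together with the disintegration condition $C = C_J \times C_{I \setminus J}$. I will show $J_X \union J_Y \subseteq \history{(X,Y) \given C}$ and $\history{(X,Y) \given C} \subseteq J_X \union J_Y$ separately, each time reducing to a statement about generating sets and invoking minimality.

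For the reverse inclusion $J_X \union J_Y \subseteq \history{(X,Y) \given C}$, let $J \coloneqq \history{(X,Y) \given C}$. By \Cref{lemma:historygenerates}, $J$ generates $(X,Y)$ given $C$, so $U_J \functionTo_C (X,Y)$ and $C = C_J \times C_{I \setminus J}$. Composing the deterministic map $U_J \functionTo_C (X,Y)$ with the coordinate projections $\Val(X)\times\Val(Y)\to\Val(X)$ and $\Val(X)\times\Val(Y)\to\Val(Y)$ yields $U_J \functionTo_C X$ and $U_J \functionTo_C Y$; since $J$ also disintegrates $C$, it generates each of $X$ and $Y$ given $C$. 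Minimality of $J_X$ and $J_Y$ then gives $J_X \subseteq J$ and $J_Y \subseteq J$, hence $J_X \union J_Y \subseteq J$. This direction is routine.

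For the inclusion $\history{(X,Y) \given C} \subseteq J_X \union J_Y$, I would show that $J_X \union J_Y$ itself generates $(X,Y)$ given $C$; minimality of the joint history then forces the containment. The functional half is immediate: since $U_{J_X}$ and $U_{J_Y}$ are both projections (hence deterministic functions) of $U_{J_X \union J_Y}$, from $U_{J_X} \functionTo_C X$ and $U_{J_Y} \functionTo_C Y$ we obtain $U_{J_X \union J_Y} \functionTo_C X$ and $U_{J_X \union J_Y} \functionTo_C Y$, and therefore $U_{J_X \union J_Y} \functionTo_C (X,Y)$. The disintegration half is the crux and the main obstacle: I must show that $J_X \union J_Y$ disintegrates $C$, i.e. that disintegration is closed under union. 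I would prove this at the level of outcomes. Given $\omg$ with $\proj_{J_X \union J_Y}(\omg) \in C_{J_X \union J_Y}$ and $\proj_{I \setminus (J_X \union J_Y)}(\omg) \in C_{I \setminus (J_X \union J_Y)}$, I first pick witnesses $c, c' \in C$ that agree with $\omg$ on $J_X \union J_Y$ and on $I \setminus (J_X \union J_Y)$ respectively, and then splice in two stages: using the disintegration of $C$ by $J_X$, I build an intermediate outcome in $C$ agreeing with $\omg$ on $J_X$ (and on $I \setminus (J_X \union J_Y)$); then, using the disintegration of $C$ by $J_Y$, I correct the remaining coordinates of $K \setminus J_X$ to produce an element of $C$ that equals $\omg$ on all of $I$. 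This forces $\omg \in C$, which together with the trivial containment $C \subseteq C_{J_X \union J_Y} \times C_{I \setminus (J_X \union J_Y)}$ gives the required equality $C = C_{J_X \union J_Y} \times C_{I \setminus (J_X \union J_Y)}$. The delicate point in this gluing is bookkeeping over the four-way partition of $I$ into $J_X \sect J_Y$, $J_X \setminus J_Y$, $J_Y \setminus J_X$, and $I \setminus (J_X \union J_Y)$, and checking that the two successive splices really restore agreement with $\omg$ on every block; it is plausible this closure (or its generation analogue) is already available as an appendix lemma paralleling the intersection-closure result cited for \Cref{lemma:historygenerates}, in which case it can simply be invoked.
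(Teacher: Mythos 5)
Your proposal is correct and takes essentially the same route as the paper: both inclusions are reduced to generation plus minimality via \Cref{lemma:historygenerates}, with the crux being closure of disintegration under union --- which, exactly as you anticipated, the paper proves as a standalone appendix lemma (\Cref{lemma:disintegrationintersection}, stated for both intersection and union) and simply invokes. Your two-stage splicing argument for the union-closure step is sound and is only a minor variant of the paper's four-way Cartesian product argument, so nothing substantive differs.
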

The proof is a straightforward application of \Cref{lemma:historygenerates}
and can be found in \Cref{sec:proofjointhistory}.
Next, we introduce a lemma which establishes the relationship between the history of a variable~$X$ and the histories of its values $x\in \Val(X)$.
\begin{restatable}[History of a variable is the union of the histories of events]{lemma}{lemmaunionhistory}%
\label{lemma:unionhistory}%
Let $X$ be a random variable defined on a factored space $\FS$. For all events $C\subseteq \Omg$, we have
\[\history{X \given C}=\bigcup_{x\in\Val(X)}\history{x \given C}\,.\]
\end{restatable}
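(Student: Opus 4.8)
The plan is to reduce the claim to \Cref{lemma:jointhistory} by replacing $X$ with the joint variable built from all of its indicators. Since $X$ is discrete, its value set is finite, say $\Val(X)=\{x_1,\dots,x_n\}$, so I would consider the joint variable $W\coloneq(1_{x_1},\dots,1_{x_n})$, where $1_{x_k}$ is the indicator of the event $\{X=x_k\}$. The key observation is that $X$ and $W$ are deterministic functions of one another on all of $\Omg$: each indicator $1_{x_k}$ is a function of $X$, so $X\functionTo W$; conversely, the events $\{X=x_k\}$ partition $\Omg$, so for every $\omg$ exactly one coordinate of $W(\omg)$ equals $1$, and its index recovers $X(\omg)$, giving $W\functionTo X$.

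Next I would record the monotonicity of the history under derivation, which is immediate from the definitions: if $J$ generates $Z$ given $C$ and $Z\functionTo_C Z'$, then $J$ also generates $Z'$ given $C$. Indeed, $U_J\functionTo_C Z$ and $Z\functionTo_C Z'$ compose to $U_J\functionTo_C Z'$ (composition of the two deterministic functions in the sense of \Cref{def:derived}), while the disintegration condition $C=C_J\times C_{I\setminus J}$ depends only on $J$ and $C$, not on the variable. Hence the collection of generating sets of $Z'$ contains that of $Z$, and taking intersections in \eqref{eq:def-history} yields $\history{Z'\given C}\subseteq\history{Z\given C}$. Applying this in both directions to the mutually derivable pair $X,W$ gives $\history{X\given C}=\history{W\given C}$.

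Finally, I would expand $\history{W\given C}$ using \Cref{lemma:jointhistory}. That lemma treats two variables, so I would iterate it over the $n$ coordinates of $W$, which is legitimate precisely because $\Val(X)$ is finite, obtaining
\[
\history{W\given C}=\bigcup_{k=1}^{n}\history{1_{x_k}\given C}=\bigcup_{x\in\Val(X)}\history{x\given C},
\]
where the last equality is just the shorthand $\history{x\given C}=\history{1_x\given C}$. Combining this with the equality $\history{X\given C}=\history{W\given C}$ from the previous step completes the argument.

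No single step is hard. The only point demanding a little care is the invariance of the history under mutual derivation: one must verify both that composing deterministic functions preserves generation and that the disintegration requirement is untouched when passing from a variable to one derived from it. Everything else is bookkeeping, together with the finiteness of $\Val(X)$ needed to iterate the binary joint-history lemma.
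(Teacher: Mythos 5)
Your proposal is correct and follows essentially the same route as the paper's proof: form the joint indicator variable $(1_x)_{x\in\Val(X)}$, note that it and $X$ are derived from one another so their histories coincide, then inductively apply \Cref{lemma:jointhistory}. The only difference is that you spell out explicitly why mutual derivation forces equal histories (generation is preserved under composing deterministic functions, and disintegration depends only on $J$ and $C$), a step the paper compresses into an appeal to transitivity of $\functionTo_C$.
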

\noindent
The proof is an inductive application of \Cref{lemma:jointhistory} and can be found in \Cref{sec:proofunionhistory}.

\subsection{Structural independence}

In the following, we define two variables~$X\colon\Omg\to\Val(X)$ and $Y\colon\Omg\to\Val(Y)$ to be structurally independent if they depend on disjoint factors of the factored space $\Omg=\bigtimes_\iinI  \Omg_i$. We use our notion of history to formalize this definition and extend it to the situation of conditioning on a third variable. This is our analog of d-separation in Bayes nets.
\begin{definition}[Structural Independence]\label{def:structuralindependence}
  Let $X$, $Y$, and $Z$ be random variables in a factored space~$\FS$. Then, $X$ and $Y$ are \emph{structurally independent} in $\FS$, denoted as $X \orthF Y$, if $\history(X) \sect  \history{Y} = \nothing$.
  Moreover, $X$ and $Y$ are \emph{structurally independent given $Z$}, denoted as
  $X \orthF Y \given Z$, if we have
  \[\history(X \mid z) \sect \history{Y \given z} = \nothing \text{ for all } z\in \Val(Z)\,.\]
\end{definition}

We choose the name \emph{structural independence} because it represents those statistical independences which come from a structure in the process that generated the distribution. For example, when throwing two fair coins $X_1$ and $X_2$, then $X_1$ is statistically independent of the XOR variable $X_\oplus=X_1\oplus X_2$, because $P(x_1, x_\oplus)=0.25=P(x_1)P(x_\oplus)$ holds for all $x_1,x_\oplus \in\{0,1\}$. However, this independence is not structural because it relies on the exact parameters of the process. If one coin is slightly unfair and $P(X_1=0)=51\%$, then $X_1$ and $X_\oplus$ are not independent anymore. In contrast, $X_1$ and $X_2$ are still independent, no matter the distribution of $X_1$ and $X_2$. In that sense, the independence of $X_1$ and $X_2$ is a property of the process which generates a class of distributions rather than a property of a particular distribution. The formal connection between statistical independence and structural independence is established in \Cref{sec:soundcomplete}, where we prove that $X$ and $Y$ are independent given $Z$ in all product probability distributions over $\FS$ if and only if $X$ and $Y$ are structurally independent given~$Z$ in~$\FS$.

\subsection{Structural time}
Let $X$ and $Y$ be two random variables. In the following, we formalize a notion of $X$ being \emph{before} $Y$ in the sense that $X$ is always determined before $Y$ in the process that generated $X$ and $Y$.
\begin{definition}[Structural Time]\label{def:structuraltime}
The \emph{structural time} $\beforeF$ of a factored space $\FS$ compares two variables $X$ and $Y$ on $\Omg$. We say that \emph{$X$ is before $Y$} in $\FS$, denoted $X\beforeF Y$, if $\history(X)\subseteq \history(Y)$.
We say that~$X$ is \emph{strictly before} $Y$, denoted $X\strictlybeforeF Y$ if $\history(X)\subsetneq\history(Y)$.
\end{definition}
We can interpret the history $\history(X)$ as the set of \emph{sources of randomness} that $X$ depends on. If $\history(X)\subseteq\history(Y)$ holds, then $X$ depends on a subset of the sources of randomness that $Y$ depends on, which fits with the intuition that $X$ is determined before $Y$. More precisely, in any process in which the values of $U_i$ become known one after the other, the value of $X$ will always be determined before the value of $Y$, or at the same time.

In \Cref{sec:dagconstruction} we show that when we construct an FSM from a causal graph, then for variables that are nodes in a causal graph, the structural time is equivalent to the ancestor relation. Structural time is more general than the ancestor relation, since it is defined on all variables on $\Omg$, not only the ones which are nodes in the graph.

In the following lemma, we show that the structural time can be equivalently expressed in terms of structural independence.
We hold that by this lemma the name \emph{structural time} is philosophically justified, but this topic is outside the scope of this paper.

\begin{restatable}[Structural time and structural independence]{lemma}{structuraltimehistory}
\label{lemma:structuraltimehistory}
Let $X$ and $Y$ be random variables on~$\FS$. Then $X$ is structurally before $Y$ if and only if
\(Y \orthF Z \impl X\orthF Z \textnormal{ holds for all variables }Z\textnormal{ on }\Omg.\)
\end{restatable}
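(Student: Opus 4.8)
The plan is to unfold both sides of the equivalence into statements about the unconditional histories $\history(X),\history(Y)\subseteq I$ via \Cref{def:structuraltime,def:structuralindependence}. Since $X\beforeF Y$ abbreviates $\history(X)\subseteq\history(Y)$ and $W\orthF Z$ abbreviates $\history(W)\cap\history(Z)=\nothing$, the lemma reduces to the purely set-theoretic equivalence
\[
\history(X)\subseteq\history(Y)\iff\Big(\forall Z\colon\ \history(Y)\cap\history(Z)=\nothing\implies\history(X)\cap\history(Z)=\nothing\Big),
\]
where $Z$ ranges over all variables on $\Omg$. I would prove the two directions separately.

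The forward implication is pure monotonicity of intersection and uses nothing about the factored structure: if $\history(X)\subseteq\history(Y)$ and $\history(Y)\cap\history(Z)=\nothing$, then $\history(X)\cap\history(Z)\subseteq\history(Y)\cap\history(Z)=\nothing$, so $Y\orthF Z$ forces $X\orthF Z$ for every $Z$.

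For the converse I would argue by contraposition, exhibiting a single witness variable. Suppose $\history(X)\not\subseteq\history(Y)$ and fix an index $i\in\history(X)\setminus\history(Y)$. Taking $Z\coloneqq U_i$, the $i$-th background variable, I claim $\history(U_i)=\{i\}$. Granting this, $i\notin\history(Y)$ yields $\history(Y)\cap\history(U_i)=\nothing$, i.e.\ $Y\orthF U_i$, whereas $i\in\history(X)\cap\history(U_i)$ gives $X\orthnotF U_i$. Thus $U_i$ satisfies the antecedent but violates the consequent of the right-hand implication, which establishes the contrapositive.

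The only step drawing on the product structure — and the main obstacle — is the claim $\history(U_i)=\{i\}$. By \Cref{lemma:historygenerates} the history is the unique minimal generating set, so it suffices to check that $\{i\}$ generates $U_i$ and that every generating set of $U_i$ contains $i$. The former is immediate: $U_i$ is a deterministic function of $U_{\{i\}}$ and the disintegration condition is vacuous unconditionally. For the latter, if some $J$ with $i\notin J$ generated $U_i$, then in the product $\Omg=\timesbig_\iinI\Omg_i$ one could vary the $i$-th coordinate while holding the coordinates in $J$ fixed, keeping $U_J$ constant but changing $U_i$, contradicting $U_J\functionTo U_i$ — provided $\Omg_i$ is non-trivial. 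Finally I would observe that this non-triviality is automatic here: if $\Omg_i$ were a singleton then $U_i$ would be constant and hence could be dropped from any generating set of $X$, contradicting $i\in\history(X)$ together with the minimality from \Cref{lemma:historygenerates}. This disposes of the edge case and completes the argument.
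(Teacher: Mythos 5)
Your proof is correct and follows essentially the same route as the paper's: the forward direction by monotonicity of intersection, and the converse by taking the witness $Z=U_i$ for an index $i\in\history(X)$ and using $\history(U_i)=\{i\}$. You are in fact more careful than the paper, which asserts $\history(U_i)=\{i\}$ without justification; your verification of that claim, including ruling out the singleton-factor edge case (impossible since $i\in\history(X)$ forces $|\Omg_i|\ge 2$), fills a small gap the paper leaves implicit.
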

\noindent
The lemma follows from the definition of structural independence, and the proof can be found in \Cref{sec:proofstructuraltime}.

\section{Bayesian networks and factored space models}
\label{sec:dag}

In this section, we compare factored space models (FSMs) with Bayesian Networks. In \Cref{sec:semigraphoid} we compare the properties of d-separation and structural independence in terms of the graphoid axioms they fulfill. In \Cref{sec:dagconstruction} we construct an FSM from a causal graph, and show that node variables in the graph are d-separated if and only if they are structurally independent in the corresponding FSM.

\subsection{Graphoid and semigraphoid axioms}
\label{sec:semigraphoid}

Structural independence is our analog of d-separation. In this section, we compare the properties of d-separation, which forms a \textit{compositional graphoid} and structural independence, which forms a \textit{compositional semigraphoid}. This comparison also provides an intuition as to why structural independence represents deterministic relationships correctly while d-separation does not.

\begin{definition}[Compositional Semigraphoid]
A \emph{semigraphoid} is a set of triplets $(X,Y,Z)$, usually denoted as $X\perp Y\mid Z$, that satisfy the symmetry, decomposition, weak union, and contraction axioms, as listed in \Cref{tab:semigraphoid}. A \emph{graphoid} \cite{pearl2022graphoids} is a semigraphoid that also satisfies the intersection axiom. A \emph{compositional} graphoid or semigraphoid additionally satisfies the composition axiom.
\end{definition}

\begin{table}[ht]
\resizebox{\textwidth}{!}{
\begin{tabular}{@{}lllccc@{}}
\toprule
\multicolumn{2}{l}{Axioms ({\color[HTML]{00009B} Semigraphoid: 1.-4.}, Graphoid: 1.-5.)}                                                        &                                                    & \begin{tabular}[c]{@{}c@{}}Indep. \\ (Probability)\end{tabular} & \begin{tabular}[c]{@{}c@{}}d-Sep. \\ (DAG)\end{tabular} & \begin{tabular}[c]{@{}c@{}}Str. Indep.\\ (FSM)\end{tabular} \\ \midrule
{\color[HTML]{00009B} 1. Symmetry:}      & {\color[HTML]{00009B} $X\perp Y \mid W $}                                     & {\color[HTML]{00009B} $\impl Y\perp X\mid W$}      & {\color[HTML]{32CB00} \cmark}                                   & {\color[HTML]{32CB00} \cmark}                           & {\color[HTML]{32CB00} \cmark}                               \\
{\color[HTML]{00009B} 2. Decomposition:} & {\color[HTML]{00009B} $X\perp Y, Z\mid W$}                                  & {\color[HTML]{00009B} $\impl X\perp Y\mid W$}      & {\color[HTML]{32CB00} \cmark}                                   & {\color[HTML]{32CB00} \cmark}                           & {\color[HTML]{32CB00} \cmark}                               \\
{\color[HTML]{00009B} 3. Weak Union:}    & {\color[HTML]{00009B} $X\perp Y,Z\mid W$}                                   & {\color[HTML]{00009B} $\impl X\perp Z\mid Y ,W$} & {\color[HTML]{32CB00} \cmark}                                   & {\color[HTML]{32CB00} \cmark}                           & {\color[HTML]{32CB00} \cmark}                               \\
{\color[HTML]{00009B} 4. Contraction:}   & {\color[HTML]{00009B} $(X\perp Y\mid W)\land (X\perp Z\mid Y ,W)$}            & {\color[HTML]{00009B} $\impl X\perp Y,Z\mid W$}  & {\color[HTML]{32CB00} \cmark}                                   & {\color[HTML]{32CB00} \cmark}                           & {\color[HTML]{32CB00} \cmark}                               \\
{\color[HTML]{000000} 5. Intersection:}  & {\color[HTML]{000000} $(X\perp Y\mid Z, W) \land (X\perp Z\mid Y,W) \land (Y\ne Z)$} & {\color[HTML]{000000} $\impl X\perp Y,Z\mid W$}  & {\color[HTML]{32CB00} \xmark}                                   & {\color[HTML]{CB0000} \cmark}                           & {\color[HTML]{32CB00} \xmark}                               \\
{\color[HTML]{000000} 6. Composition:}   & {\color[HTML]{000000} $(X\perp Y\mid W)\land (X\perp Z\mid W)$}               & {\color[HTML]{000000} $\impl X\perp Y,Z\mid W$}  & {\color[HTML]{32CB00} \xmark}                                   & {\color[HTML]{CB0000} \cmark}                           & {\color[HTML]{CB0000} \cmark}                               \\ \midrule
                                         &                                                                               &                                                    & \multicolumn{3}{c}{{\color[HTML]{32CB00} green: same as independence}}                                                                                                                  \\
                                         &                                                                               &                                                    & \multicolumn{3}{c}{{\color[HTML]{CB0000} red: different from independence}}                                                                                                             
\end{tabular}
}
\caption{\label{tab:semigraphoid}%
Comparison of \textbf{independence} in a probability distribution, \textbf{d-separation} in a directed acyclic graph (DAG), and \textbf{structural independence} on an FSM with regard to the graphoid and composition axioms. Independence satisfies 1.-4., which means it forms a \emph{semigraphoid} \cite{pearl1988probabilistic}, d-separation satisfies 1.-6., which means it forms a \emph{compositional graphoid}. Structural independence satisfies 1.-4. and 6., which means it forms a \emph{compositional semigraphoid}. Note that in terms of these axioms, structural independence is closer to independence than d-separation, as d-separation differs from independence for both the intersection and the composition axiom, while for structural independence the only difference is the composition axiom.}
\end{table}

\begin{restatable}{proposition}{compositionalsemigraphoid}
\label{prop:compositionalsemigraphoid}
Structural independence is a compositional semigraphoid.
\end{restatable}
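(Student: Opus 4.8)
The plan is to verify the five axioms defining a compositional semigraphoid (symmetry, decomposition, weak union, contraction, composition) directly from the history-based definition of structural independence; I do not attempt intersection, which the table shows fails. Throughout I fix the conditioning variable~$W$ and unfold $X \orthF Y \given W$ as the assertion that $\history{X \given w} \sect \history{Y \given w} = \nothing$ for every value~$w$ of~$W$, reading the pair $(y,w)$ as the event $\{Y=y\}\sect\{W=w\}$ whenever we condition on $(Y,W)$. Symmetry is immediate since $\sect$ is symmetric, and decomposition and composition both fall out of \Cref{lemma:jointhistory}, which gives $\history{(Y,Z)\given w}=\history{Y\given w}\union\history{Z\given w}$: for decomposition, disjointness of $\history{X\given w}$ from the union forces disjointness from $\history{Y\given w}$; for composition, disjointness from each part yields disjointness from the union, hence from $\history{(Y,Z)\given w}$.

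The real work is in weak union and contraction, where conditioning on $(Y,W)$ refines $C=\{W=w\}$ to $C'=C\sect\{Y=y\}$. The key technical lemma I would isolate is: if $\history{X\given C}\sect\history{Y\given C}=\nothing$, then $\history{X\given C'}=\history{X\given C}$. Writing $A=\history{X\given C}$ and $B=\history{Y\given C}$, the fact that $A$ and $B$ each disintegrate $C$ and are disjoint lets $C$ split as a product $C_A\times C_B\times C_{\mathrm{rest}}$; since $Y$ is a deterministic function of $U_B$ on $C$, conditioning on $Y=y$ only shrinks the $B$-block, so $C'=C_A\times C_B^y\times C_{\mathrm{rest}}$. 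The untouched $A$-block shows $A$ still generates $X$ on $C'$, hence $\history{X\given C'}\subseteq A$; conversely any generating set $J\subseteq A$ of $X$ on $C'$ must disintegrate the shared block $C_A$, and therefore also disintegrates $C$ and generates $X$ on $C$, forcing $J=A$ by minimality (\Cref{lemma:historygenerates}). With this lemma, weak union is quick: decomposition applied to $X\orthF(Y,Z)\given W$ gives $\history{X\given w}\sect\history{Y\given w}=\nothing$, so the lemma yields $\history{X\given (y,w)}\subseteq\history{X\given w}$, while the set $\history{Y\given w}\union\history{Z\given w}=\history{(Y,Z)\given w}$ disintegrates $C'$ and determines $Z$, giving $\history{Z\given(y,w)}\subseteq\history{Y\given w}\union\history{Z\given w}$; the two bounds are disjoint because $\history{X\given w}$ misses both $\history{Y\given w}$ and $\history{Z\given w}$.

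Contraction needs one further ingredient, a law of total history for conditioning on a variable: $\history{Z\given C}\subseteq\history{Y\given C}\union\bigcup_y\history{Z\given C\sect\{Y=y\}}$. I would prove this by showing the right-hand set $J$ both determines $Z$ on $C$ (from $U_{\history{Y\given C}}$ read off $y=Y(\omg)$, then recover $Z$ from $U_{\history{Z\given C\sect\{Y=y\}}}$) and disintegrates $C$; the latter, after splitting each $\history{Z\given C\sect\{Y=y\}}$ into its part inside $B$ and its part in the complementary block, reduces to the fact that disintegrating subsets of a fixed block are closed under finite union. Given both lemmas, contraction is immediate: fixing $w$ and setting $A=\history{X\given w}$, $B=\history{Y\given w}$, $D=\history{Z\given w}$, the hypothesis $X\orthF Y\given W$ gives $A\sect B=\nothing$, so the equality lemma converts $X\orthF Z\given(Y,W)$ into $A\sect\history{Z\given(y,w)}=\nothing$ for every $y$; combining with $A\sect B=\nothing$ and the total-history bound $D\subseteq B\union\bigcup_y\history{Z\given(y,w)}$ gives $A\sect D=\nothing$, which is $X\orthF(Y,Z)\given W$ by \Cref{lemma:jointhistory}.

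The main obstacle is the exact equality in the conditioning lemma, namely that $\history{X\given C'}$ does not shrink below $\history{X\given C}$: this is precisely what lets a hypothesis conditioned on $(Y,W)$ be transferred to the history conditioned on $W$ alone, which is the crux of contraction. All the delicate bookkeeping lives in the disintegration arguments—that a generating set for the refined event $C'$ disintegrates the shared block and hence the original event $C$, and that the disintegrating sets $\history{Z\given C\sect\{Y=y\}}$ combine under union—whereas the remaining semigraphoid content is routine once \Cref{lemma:jointhistory} is in hand.
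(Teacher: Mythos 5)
Your proof is correct, but it takes a genuinely different route from the paper's. The overlap is only in the easy axioms: the paper also derives composition (and implicitly decomposition) directly from \Cref{lemma:jointhistory}. For the remaining semigraphoid axioms, however, the paper does no combinatorics at all: it invokes the soundness and completeness theorem (\Cref{thrm:soundcomplete}), notes that statistical independence is a semigraphoid distribution-by-distribution, and observes that each axiom is a Horn clause and therefore survives the universal quantification over all $P\in\distributionsF(\Omg)$ that characterizes structural independence. You instead verify weak union and contraction directly from the definition of history, via two auxiliary lemmas --- invariance of $\history(X\given C)$ under conditioning on a value of a variable $Y$ with $\history(X\given C)\sect\history(Y\given C)=\nothing$, and the ``law of total history'' $\history(Z\given C)\subseteq\history(Y\given C)\union\bigcup_{y}\history(Z\given C\sect\{Y=y\})$. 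I checked both: they hold, and your sketched arguments (the three-way splitting $C=C_A\times C_B\times C_{\mathrm{rest}}$, the transfer of generation from $C'$ back to $C$, and the fact that each $\history(Z\given C\sect\{Y=y\})\sect\comp{B}$ disintegrates the block $C_{\comp{B}}$ so that their union does, by \Cref{lemma:disintegrationintersection}) go through. What your route buys: it is self-contained and elementary, resting only on \Cref{lemma:historygenerates,lemma:jointhistory,lemma:disintegrationintersection}, so the semigraphoid property does not depend on the completeness direction (\Cref{lemma:completeevent}), which is the paper's hardest result; your two lemmas are also of independent interest. What the paper's route buys: brevity --- given the main theorem, the four axioms are a one-line corollary with none of the disintegration bookkeeping your weak-union and contraction arguments require. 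One repair you should make: your equality lemma is false as stated when $C'=C\sect\{Y=y\}=\nothing$, since then $\history(X\given C')=\nothing$ while $\history(X\given C)$ need not be empty; state it for nonempty $C'$. This is harmless in the applications, because when the conditioning event is empty all histories given it are empty and the required disjointness holds trivially, but the case split must be said.
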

The proof is straightforward and we defer it to \Cref{sec:proofcomposition}.

\paragraph*{Composition} The composition axiom $(X\orth Y\mid W)\land(  X\orth Z\mid W)\impl (X\orth Y,Z\mid W)$ illustrates that structural independence is not the same as independence: Structural independence satisfies this axiom intuitively, because if $X$ and $Y$ are influenced by disjoint sources of randomness, and $X$ and $Z$ are influenced by disjoint sources of randomness, then $X$ and $Y,Z$ are influenced by disjoint sources of randomness as well. To see that statistical independence does not satisfy the composition axiom, consider the following example:
Let $X$ be a message, $Y$ be a string of uniformly distributed bits, and let $Z$ be their bitwise XOR: $Z=X\oplus Y$. Then $X$ and $Y$ are independent, and $X$ and $Z$ are independent, but $X$ and $(Y,Z)$ are dependent, so composition does not hold. This situation could be modeled with an FSM as follows. The variables $X$ and $Y$ are structurally independent, but as $Z$ is computed from $X$, the histories of $X$ and $Z$ overlap and $X$ and $Z$ are structurally dependent, so composition is not violated. Independence is about the question ``Does $X$ provide information about $Y$?'' while structural independence is about the question ``Are $X$ and $Y$ influenced by the same source of randomness?''.

\paragraph*{Intersection} The intersection axiom $(X\perp Y\mid  Z,W)\land (X\perp Z\mid Y,W)\land (Y\ne Z)\impl (X\perp Y,Z\mid W)$ is particularly interesting. Statistical independence does not in general satisfy the intersection axiom, though intersection does hold in those cases where $P(x,y,z,w)$ is never zero \cite{pearl1988probabilistic}. However, we \textit{want} to allow $P(x,y,z,w)=0$. In particular, if any variable is a deterministic function of another variable, there must be values $x,y,z,w$ with $P(x,y,z,w)=0$. The fact that d-separation satisfies intersection means that d-separation cannot represent the independence relations among a set of variables, some of which are deterministically related. It is therefore an important property of structural independence that the intersection axiom does \textit{not} hold.

In the next section, we continue our comparison of d-separation and structural independence by constructing an FSM from a causal graph.
\subsection{From Bayesian networks to factored space models}
\label{sec:dagconstruction}

In this section, we construct a factored space from a directed acyclic graph (DAG), and we construct an FSM from a Bayesian network. We show that a distribution $P$ factorizing over a DAG is equivalent to our construction being a factored space model of $P$.
We also show that in our construction, structural independence applied to node variables is equivalent to d-separation, and structural time applied to node variables is equivalent to the ancestor relationship. We also introduce the notion of a \emph{perfect map}, and use it to show that factored space models are more expressive than Bayesian networks. We start by formally introducing Bayesian networks.

\paragraph*{Bayesian networks}
Let $G=(V,E,\Val)$ be a directed acyclic graph (DAG), wherein $V$ is the vertex set, and $E$ is the edge set. Moreover, each vertex $v\in V$ is associated with a set of possible values $\Val_v$ with at least $2$ elements, and $\Val=\timesbig_\vinV \Val_v$ is the set of all combinations of values for $V$.

For a vertex~$v$, we write $\pav\subseteq V$ for the set of parents of $v$, and $\A(v)$ for the set of ancestors of $v$.
If $P$ is a distribution over $\Val$ and $x_v\in \Val_v$ is a value of $v$, we write $P(x_v)$ as a shorthand for the probability $P(\{(y_u)_{u\in V}  \in \Val_V \colon x_v=y_v\})$.
We say that $P$
\emph{factorizes} over $G$, if for all $x=(x_v)_\vinV \in \Val$, we have
\begin{align}\label{eq:Bayes-factorizes}%
P(x)=\prod_\vinV P(x_v\given \xpav)\,.
\end{align}
Following the notation in \cite{koller2009probabilistic}, a pair $\Bayes=(G, P)$ where $P$ factorizes over $G$ is called Bayesian network or Bayes net.
In the following, we show how to construct a factored space $\FSG$ and a factored space model $\FSMG$ from $G$.
Factors in this construction are closely related to factor graphs \cite{witten2002data}.

\subsubsection{Our construction of a factored space model from a Bayesian network}
  
Given the DAG $G=(V,E,\Val)$,
we now construct a factored space model $\FSMG=(\FSG,O)$ for the observation space $\Obs=\Val$.
We first construct a factored space $\FSG$.
To this end, we define the index set~$I$ via $I\coloneqq\bigcup_\vinV  I_v$, where for all $v\in V$, we let
\begin{align}
I_v \coloneqq  \{(v,\xpav) \;:\;  \xpav \in
\Val_\pav\}\,.
\end{align}
In other words, the index set $I$ is partitioned into the sets $I_v$, and each element of $I_v$ corresponds to a possible value for the vertices in $\pav$.
If $v$ is a root, we remark that $\pav=\nothing$ holds and that
$\Val_\pav$
contains the empty tuple $()$ as its only element.
Finally, for each $i=(v, \xpav)\in I$, we define the set $\FSG_i$ via $\FSG_i\coloneqq\Val_v$. Then $\FSG\coloneqq\timesbig_\iinI \FSG_i$ is the factored space constructed from $G$.

We now define the random variable $O\colon \Omg\to\Val$.
To this end, we first define $\Xb=(X_v)_\vinV $, wherein $X_v\colon \Omg\to\Val_v$ is a random variable associated with each node $v$.
We define $X_v$ recursively as
\begin{align}\label{eq:def-graph-X}
    X_v(\omg)\coloneqq \omg_{(v, \Xpav(\omg))}\,.
\end{align}
In particular, for roots~$v$ we have $X_v(\omg)=\omg_{(v,())}\in\Val_v$.
Since $G$ is acyclic, $X_v$ is well-defined also for non-roots~$v$. Note that $\Val(\Xb)=\Val$ and $\Val(X_v)=\Val_v$.
We call $\FSMG=(\FSG, O)$ with $O=\Xb$ the \emph{factored space model constructed from $G$}. This concludes the construction.

To show that $\FSMG$ is in fact a factored space model of a distribution $P$ over $\Val$ in the sense of \Cref{def:FSM} when $P$ factorizes over $G$, we first state the following lemma about the relationship between $G$ and $\FSMG$.

\begin{restatable}{lemma}{taubijective}
    \label{lemma:taubijective}
Let $\Delta^*(G)$ be the set of distributions on $\Val$ that factorize over $G$ and let $\distributionsF(\Omg)$ be the set of distributions on~$\Omega$ that factorize over $\Omg$.
Let $\tau\colon\distributionsF(\Omg)\to \Delta^*(G)$ be the function defined as follows for all $P^\Omg\in \distributionsF(\Omg)$ and $x\in \Val$:
\begin{align*}
  \tau(P^\Omg)(x)
  &=P^\Omg(X=x)
  \,.\\
  \intertext{Then $\tau$ is bijective and its inverse $\tau^{-1}$ satisfies the following for all $P\in \Delta^*(G)$ and $\omg\in\Omg$:}
  \tau^{-1}(P)(\omg)
  &=
  \prod_{(v,x_{\pa(v)})\in I} P(\omg_{v,x_{\pa(v)}}\given x_{\pa(v)})\,.
\end{align*}
\end{restatable}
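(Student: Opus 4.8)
The plan is to exhibit the map defined by the displayed formula as a two-sided inverse of $\tau$. Write $\sigma$ for this candidate inverse, so that $\sigma(P)(\omg)=\prod_{(v,\xpav)\in I}P(\omg_{(v,\xpav)}\given \xpav)$, and prove three things: that $\tau$ sends $\distributionsF(\Omg)$ into $\Delta^*(G)$, that $\sigma$ sends $\Delta^*(G)$ into $\distributionsF(\Omg)$, and that $\tau\circ\sigma$ and $\sigma\circ\tau$ are both the identity. The whole argument rests on one structural fact about the fibers of $X$, which I would establish first: for every $x\in\Val$,
\[
\{\omg : X(\omg)=x\}=\bigl\{\omg : \omg_{(v,\xpav)}=x_v \text{ for all } v\in V\bigr\}\,.
\]
Assuming $\omg$ lies in the right-hand set, I induct along a topological order of $G$ to show $X_v(\omg)=x_v$: at the step for $v$, the inductive hypothesis gives $X_u(\omg)=x_u$ for the parents $u\in\pav$, hence $\Xpav(\omg)=\xpav$, so by \eqref{eq:def-graph-X} we get $X_v(\omg)=\omg_{(v,\Xpav(\omg))}=\omg_{(v,\xpav)}=x_v$; the inclusion ``$\subseteq$'' is immediate from the same recursion. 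The point of this identity is that $\{X=x\}$ constrains exactly the $\abs{V}$ pairwise-distinct coordinates $(v,\xpav)$, $v\in V$, and leaves every other coordinate of $\Omg$ free.

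For the forward direction I would combine this fiber description with the fact that $P^\Omg\in\distributionsF(\Omg)$ makes the background variables $U_i$ mutually independent. Since $\{X=x\}$ is a cylinder fixing the coordinates $(v,\xpav)$ and leaving the rest unconstrained, independence gives
\[
\tau(P^\Omg)(x)=P^\Omg(X=x)=\prod_{v\in V}P^\Omg\bigl(U_{(v,\xpav)}=x_v\bigr)\,.
\]
Setting $\theta_v(x_v\given\xpav)\coloneqq P^\Omg(U_{(v,\xpav)}=x_v)$, which is a genuine distribution over $\Val_v=\Omg_{(v,\xpav)}$ for each fixed $\xpav$, this is exactly a factorization of $\tau(P^\Omg)$ over $G$ in the sense of \eqref{eq:Bayes-factorizes}, so $\tau(P^\Omg)\in\Delta^*(G)$.

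For the reverse direction I would first check that $\sigma$ is well-defined: grouping its defining product by coordinate $i=(v,\xpav)$ shows $\sigma(P)$ is a product over $i\in I$ of the distributions $P(\,\cdot\,\given\xpav)$ on $\Omg_i$, so it sums to $1$ and, being a product, automatically factorizes over $\Omg$. The identity $\tau\circ\sigma=\mathrm{id}$ then follows by feeding $\sigma(P)$ into the forward formula: the marginal of $U_{(v,\xpav)}$ under $\sigma(P)$ is $P(\,\cdot\,\given\xpav)$, so $\tau(\sigma(P))(x)=\prod_v P(x_v\given\xpav)=P(x)$ by the factorization of $P$ over $G$; this already yields surjectivity of $\tau$. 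For $\sigma\circ\tau=\mathrm{id}$ I would use the forward formula again: $\tau(P^\Omg)$ has background marginals $\theta_v(\,\cdot\,\given\xpav)$, and I must show conditioning recovers them, i.e.\ $\tau(P^\Omg)(\omg_{(v,\xpav)}\given\xpav)=P^\Omg(U_{(v,\xpav)}=\omg_{(v,\xpav)})$, which is the standard fact that the conditionals of a Bayes-factorizing distribution equal its factors.

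The main obstacle is precisely this last recovery step in the degenerate case: the conditional $P(\,\cdot\,\given\xpav)$ appearing in the formula for $\sigma$ (equivalently $\tau^{-1}$) is only literally defined when $P(\xpav)>0$, and for a parent configuration with $\tau(P^\Omg)(\xpav)=0$ the corresponding background marginal is not seen by $\tau$ at all. I would handle this by fixing a convention for the conditionals on null parent configurations and checking that both round-trips respect it; this is the one place where care is genuinely required, whereas the positive-probability coordinates are settled cleanly by the standard Bayes-net identity above.
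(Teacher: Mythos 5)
Your decomposition is essentially the paper's own: the paper likewise first proves the fiber identity $\{X=x\}=\{\omg\in\Omg : U_{(v,\xpav)}(\omg)=x_v \text{ for all } v\in V\}$ by induction over $G$ (\Cref{lemma:pomg_preparatory}), then checks that $\tau$ lands in $\Delta^*(G)$, that the displayed product formula defines a map into $\distributionsF(\Omg)$, and finally verifies the two round trips. The one step where you deviate is the forward direction. The paper applies the chain rule along a topological ordering, so the factors it produces are literally the conditionals $P(x_v\given\xpav)$ of $P=\tau(\Pomg)$, which is what \eqref{eq:Bayes-factorizes} asks for. You instead obtain $\tau(\Pomg)(x)=\prod_{v\in V}\theta_v(x_v\given\xpav)$ with kernels $\theta_v(x_v\given\xpav)\coloneqq\Pomg(U_{(v,\xpav)}=x_v)$ and declare this to be ``exactly'' a factorization in the sense of \eqref{eq:Bayes-factorizes}. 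Literally it is not: that equation is stated with $P$'s own conditionals, so you still owe the standard step that a kernel factorization over a DAG coincides with the self-conditional one. The bridge is short and uses only tools you already invoke: on the event $\{X_\pav=\xpav\}$ one has $X_v=U_{(v,\xpav)}$ by \eqref{eq:def-graph-X}, and $X_\pav$ is a function of the coordinates $(u,\cdot)$ with $u\ne v$, hence independent of $U_{(v,\xpav)}$ under $\Pomg$; therefore $P(x_v,\xpav)=\theta_v(x_v\given\xpav)\,P(\xpav)$, so $\theta_v(\cdot\given\xpav)=P(\cdot\given\xpav)$ wherever $P(\xpav)>0$. With that supplied, your forward direction is fine, and arguably more unified than the paper's, since it reuses the fiber identity instead of a separate chain-rule argument.

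The degenerate case you flag at the end, however, is not a detail that a convention can absorb, and your repair plan cannot succeed as described. If some parent configuration has $\tau(\Pomg)(\xpav)=0$, then the factor of $\Pomg$ at the index $(v,\xpav)$ leaves no trace in $\tau(\Pomg)$ at all. Concretely, for the two-node graph $a\to b$ with binary values, any factorizing $\Pomg$ with $\Pomg(U_{(a,())}=1)=1$ yields the same pushforward $\tau(\Pomg)$ regardless of the factor at the index $(b,0)\in I_b$; hence $\tau$ is not injective, and for every choice of convention there exist $\Pomg$ with $\sigma(\tau(\Pomg))\ne\Pomg$. So of your two round trips, $\tau\circ\sigma=\mathrm{id}$ (surjectivity) can be pushed through with a convention, but ``checking that the convention is respected'' must fail in the $\sigma\circ\tau$ direction. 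To be fair, this defect lies in the lemma as stated and in the paper's own proof as much as in your proposal: the paper's round-trip computations condition on possibly-null events (its chain-rule step and its ``by independence'' step) and never address the issue. A correct statement needs a restriction --- for instance to distributions under which every parent configuration $\xpav$ has positive probability, such as strictly positive ones --- or must treat $\tau$ as a bijection between suitable equivalence classes; on such a restricted domain both your argument and the paper's go through.
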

The proof can be found in \Cref{sec:proofdistributionconstruction}.

\subsubsection{Properties of our construction}
In this section, we fix a DAG $G=(V,E,\Val)$, the corresponding factored space model $\FSMG=(\FSG, O)$ constructed from $G$, and the node variables~$X_v$ for $v\in V$.
We show that our construction preserves factorization, d-separation, and the ancestor relation.
We start with the factorization property.
\begin{proposition}[Factorization property]
\label{prop:distributionconstruction}
    For all distribution~$P$ over $\Val$, we have that $P$ factorizes over $G$ if and only if $\FSMG$ is a factored space model of $P$.
\end{proposition}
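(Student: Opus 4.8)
The plan is to reduce the proposition entirely to \Cref{lemma:taubijective}, which asserts that the pushforward map $\tau\colon\distributionsF(\Omg)\to\Delta^*(G)$ given by $\tau(P^\Omg)(x)=P^\Omg(X=x)$ is a bijection. No substantial new argument is needed beyond unfolding \Cref{def:FSM} and recognizing that being a factored space model of $P$ is exactly the assertion that $P$ lies in the image of $\tau$.

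First I would unfold the definitions. In our construction $O=\Xb$ and $\Obs=\Val$, so by \Cref{def:FSM} the model $\FSMG=(\FSG,O)$ is a factored space model of $P$ if and only if there exists $P^\Omg\in\distributionsF(\Omg)$ with $P^\Omg(X=x)=P(x)$ for all $x\in\Val$. By the definition of $\tau$, this condition is precisely $\tau(P^\Omg)=P$. Hence $\FSMG$ is a factored space model of $P$ if and only if $P$ belongs to the image of $\tau$.

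With this equivalence in hand, both directions are immediate. For the forward direction, assume $P$ factorizes over $G$, i.e.\ $P\in\Delta^*(G)$; since \Cref{lemma:taubijective} makes $\tau$ surjective onto $\Delta^*(G)$, the distribution $P^\Omg=\tau^{-1}(P)\in\distributionsF(\Omg)$ satisfies $\tau(P^\Omg)=P$, exhibiting $\FSMG$ as a factored space model of $P$. For the backward direction, assume $\FSMG$ is a factored space model of $P$, so that $\tau(P^\Omg)=P$ for some $P^\Omg\in\distributionsF(\Omg)$; because $\tau$ takes values in $\Delta^*(G)$, we conclude $P=\tau(P^\Omg)\in\Delta^*(G)$, i.e.\ $P$ factorizes over $G$.

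I do not anticipate a genuine obstacle at the level of the proposition itself: all the real content---that $\tau$ indeed lands in $\Delta^*(G)$, that it is surjective, and the explicit formula for $\tau^{-1}$---is carried by \Cref{lemma:taubijective}, proved separately in the appendix. The only point demanding care is the bookkeeping identification $O=\Xb$ and $\Obs=\Val$, which is what makes the factored-space-model condition of \Cref{def:FSM} coincide with the defining relation $\tau(P^\Omg)=P$. Notably, only surjectivity of $\tau$ (for the forward direction) and the fact that its codomain is $\Delta^*(G)$ (for the backward direction) are used; injectivity of $\tau$ is not needed for this proposition.
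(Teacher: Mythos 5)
Your proposal is correct and takes essentially the same route as the paper: both directions are reduced to \Cref{lemma:taubijective}, with the forward implication using surjectivity of $\tau$ (i.e.\ the existence of the inverse, which the paper calls $\iota$) and the backward implication using that $\tau$ maps $\distributionsF(\Omg)$ into $\Delta^*(G)$. Your observation that injectivity of $\tau$ is never needed is a minor tidying of the paper's argument (which routes the backward direction through $\Pomg=\iota(P)$), not a different proof.
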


\begin{proof}
  For the forward implication, let $P$ factorize over $G$, that is $P\in \Delta^*(G)$. Then, with $\iota$ defined as in \Cref{lemma:taubijective}, let $P^\Omg\coloneq\iota(P)$. By \Cref{lemma:taubijective}, $\tau$ is the inverse of $\iota$. Therefore, for all $x\in \Val$, we have that $P(x)=\tau(\Pomg)(x) = \Pomg(X^{-1}(x)) =P^\Omg(X=x)$. Since $\Pomg$ also factorizes over $\Omg$ by the definition of $\iota$,
$\FSMG$ is an FSM of $P$ in the sense of \Cref{def:FSM}.

For the backward implication, let $\FSMG$ be an FSM of $P$. That is, there is a distribution $P^\Omg$ that factorizes over $\Omg$ such that $\Pomg(\Xb=x)=P(x)$. That means $P=\tau(\Pomg)$. By \Cref{lemma:taubijective}, $\Pomg=\iota(P)$, and therefore $P$ factorizes over $G$ by the definition of $\iota$.
\end{proof}

Next, we show how d-separation in $G$ is preserved as structural independence in~$\FSMG$.
\begin{restatable}[d-separation]{proposition}{dseparationstructuralindependence}
\label{prop:dseparation-structuralindependence}
Let $V_1,V_2,V_3\subseteq V$ be three sets of nodes in $G$. Then $V_1$ and $V_2$ are d-separated given $V_3$ in $G$ if and only if $X_{V_1}$ and $X_{V_2}$ are structurally independent given $X_{V_3}$ in $\FSMG$.
\end{restatable}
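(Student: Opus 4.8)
The plan is to work entirely with histories and reduce the d-separation question to a purely combinatorial statement about which indices of $\FSG$ survive in the conditional histories. First I would unfold the definition of structural independence: $X_{V_1}\orthF X_{V_2}\given X_{V_3}$ means exactly that $\history{X_{V_1}\given z}\sect\history{X_{V_2}\given z}=\nothing$ for every value $z$ of $X_{V_3}$. Applying \Cref{lemma:jointhistory} inductively to the tuple variables $X_{V_i}=(X_v)_{v\in V_i}$ splits these into
\[\history{X_{V_1}\given z}=\bigcup_{v\in V_1}\history{X_v\given z}\quad\text{and}\quad\history{X_{V_2}\given z}=\bigcup_{v\in V_2}\history{X_v\given z}\,,\]
so the whole proposition rests on understanding the single-node conditional history $\history{X_w\given z}$ as a subset of $I=\bigcup_{v\in V}I_v$. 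I would prove this directly and combinatorially rather than by invoking \Cref{thrm:soundcomplete} together with the classical d-separation theorem, since that route would be circular with the paper's stated goal of \emph{deriving} the classical theorem from the framework.

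The central lemma I would establish characterizes $\history{X_w\given z}$ graph-theoretically. As a warm-up, the unconditional history is $\history{X_w}=\bigcup_{u\in\A(w)\cup\{w\}}I_u$: reading off $X_w(\omg)=\omg_{(w,X_{\pa(w)}(\omg))}$, one needs every factor in $I_w$ (the active parent-configuration is not known a priori) together with all factors needed to compute the parents, and \Cref{lemma:historygenerates} lets me confirm minimality by checking no index can be dropped while preserving both generation and the disintegration $C=C_J\times C_{I\setminus J}$. Conditioning on $X_{V_3}=z$ modifies this exactly as d-separation predicts: conditioning on a non-collider $w\in V_3$ fixes the value fed to its children, so a path through $w$ can no longer carry dependence on $w$'s ancestors, whereas conditioning on a collider $w\in V_3$ (or a descendant of one) makes the constraint $\omg_{(w,X_{\pa(w)}(\omg))}=z_w$ couple $w$'s parents through the shared factors $I_w$, forcing those indices into both parents' histories. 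I would prove the resulting statement---that $(w,x_{\pa(w)})\in\history{X_v\given z}$ iff there is a d-connecting (active) path from $w$ to $v$ given $V_3$ whose endpoint configuration is consistent with $z$---by two inclusions via \Cref{lemma:historygenerates}: the $\subseteq$ direction by exhibiting a generating, $C$-disintegrating index set supported on active-path indices, and the $\supseteq$ direction by necessity arguments showing each such index cannot be removed.

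With this characterization in hand both directions follow. If $V_1$ and $V_2$ are d-separated given $V_3$, then a shared index $(w,x_{\pa(w)})\in\history{X_{V_1}\given z}\sect\history{X_{V_2}\given z}$ would yield active paths from $w$ to both $V_1$ and $V_2$ that can be concatenated at $w$ into an active $V_1$--$V_2$ path (the shared configuration guaranteeing the junction is itself active), contradicting d-separation; hence the histories are disjoint for every $z$ and $X_{V_1}\orthF X_{V_2}\given X_{V_3}$. Conversely, if $V_1$ and $V_2$ are not d-separated, I would take an active path between them, choose a value $z$ on $V_3$ realizing it (every assignment to $V_3$ is realizable in $\FSMG$), and read off a meeting index $(w,x_{\pa(w)})$---the conditioned collider or the common node on the path---lying in both conditional histories, so structural independence fails for that $z$.

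The main obstacle is the characterization lemma itself, and within it the bookkeeping of the disintegration condition. Unlike the unconditional case, the event $C=\{X_{V_3}=z\}$ couples coordinates, since $\omg_{(w,X_{\pa(w)}(\omg))}=z_w$ ties the factor $I_w$ to the parent factors; deciding which index sets $J$ satisfy $C=C_J\times C_{I\setminus J}$ is therefore delicate, and it is precisely this coupling that implements collider-opening. Getting the meeting-point analysis right---matching the collider/non-collider structure of the concatenated path to the shared index and its configuration---is the step I expect to require the most care, while everything else is either a direct appeal to \Cref{lemma:jointhistory} and \Cref{lemma:historygenerates} or standard manipulation of active paths.
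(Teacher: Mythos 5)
Your overall strategy---bypassing \Cref{thrm:soundcomplete} and the classical d-separation theorem in favor of a direct combinatorial characterization of conditional histories in $\FSMG$---is a legitimate route, and it is genuinely different from the paper's proof, which is a short chain of equivalences: d-separation in $G$ $\Leftrightarrow$ independence in all $P$ factorizing over $G$ (classical soundness/completeness, cited as an external result) $\Leftrightarrow$ independence in all $P^\Omg$ factorizing over $\Omg$ (\Cref{lemma:taubijective}) $\Leftrightarrow$ structural independence (\Cref{thrm:soundcomplete}). Note that the paper's route is not circular \emph{as a proof of this proposition}, since the classical theorem is an established external result; your circularity concern only bears on the paper's broader claim of re-deriving the classical theorem from the framework. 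A direct proof like yours is exactly what that broader claim needs, and the paper itself remarks that one is possible.

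However, your central characterization lemma is false as stated, and this is a genuine gap. You claim that $(w,x_{\pa(w)})\in\history(X_v\given z)$ if and only if there is an active path from $w$ to $v$ given $V_3$ (with configuration consistent with $z$). Take the two-node graph $a\to b$ and $V_3=\nothing$. The single edge is a path between $b$ and $a$ with no intermediate nodes, hence trivially active (adjacent nodes are never d-separated); your criterion therefore forces $I_b\cap\history(X_a)\ne\nothing$. But $\history(X_a)=I_a=\{(a,())\}$: this singleton generates $X_a$ (the disintegration condition is vacuous unconditionally, as the paper remarks) and is minimal by \Cref{lemma:historygenerates}. Indeed, your own warm-up formula $\history(X_w)=\bigcup_{u\in\A(w)\cup\{w\}}I_u$ already contradicts your lemma at $V_3=\nothing$: in the chain $a\to b\to c$, every pair of nodes is d-connected given $\nothing$, yet $\history(X_a)=I_a$ contains no index of $I_b$ or $I_c$. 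The root problem is that ``there is an active path between $w$ and $v$'' is symmetric in $w$ and $v$, while history membership is directional---$I_b\subseteq\history(X_b)$ but $I_b\cap\history(X_a)=\nothing$---which is precisely the asymmetry underlying the paper's ancestor/structural-time proposition. The correct criterion is the asymmetric ``requisite probability node'' one \cite{koller2009probabilistic}: attach a dummy parent $\theta_w\to w$ and require that $\theta_w$ and $v$ be d-connected given $V_3$; equivalently, the path must leave $w$ along an outgoing edge with $w\notin V_3$ (non-collider) or along an incoming edge with $w$ in $V_3$ or having a descendant in $V_3$ (collider). With that fix the unconditional case correctly collapses to ancestors, and your concatenation step also becomes sound---the junction case analysis at $w$ (out/out, out/in, in/in) is exactly what the dummy-parent condition controls, whereas under your symmetric version a shared index at $w\in V_3$ could yield two active paths whose concatenation is blocked at $w$. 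Finally, even with the corrected statement, that lemma \emph{is} essentially the proposition, and your proposal defers both inclusions of its proof, so the substantive work remains to be done.
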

We defer the straightforward proof to \Cref{sec:proof-d-sep-str-indep}.

Finally, we show how the ancestor relation in~$G$ is preserved as the structural time relation between the node variables.
\begin{proposition}[Ancestor relation]
Let $v_1,v_2\in V$ be two distinct nodes in the graph~$G$. Then $v_1$ is an ancestor of $v_2$ in $G$ if and only if $X_{v_1}\strictlybefore^{\FSG} X_{v_2}$ holds.
\end{proposition}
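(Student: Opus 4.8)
The plan is to reduce the statement to a single formula for the unconditional history of each node variable: writing $\widehat{\A}(v)\coloneqq\A(v)\cup\{v\}$, I claim
\[
\history(X_v)=\bigcup_{u\in\widehat{\A}(v)}I_u .
\]
Granting this, the proposition drops out from elementary set theory. The blocks $\{I_u\}_{u\in V}$ partition $I$ into nonempty sets, so for $A,B\subseteq V$ we have $\bigcup_{u\in A}I_u\subsetneq\bigcup_{u\in B}I_u$ iff $A\subsetneq B$. Hence $X_{v_1}\strictlybeforeF X_{v_2}$, i.e.\ $\history(X_{v_1})\subsetneq\history(X_{v_2})$, is equivalent to $\widehat{\A}(v_1)\subsetneq\widehat{\A}(v_2)$, and for distinct $v_1,v_2$ this is in turn equivalent to $v_1\in\A(v_2)$: if $v_1$ is an ancestor of $v_2$ then $\widehat{\A}(v_1)\subseteq\widehat{\A}(v_2)$ with the inclusion strict because $v_2\in\widehat{\A}(v_2)\setminus\widehat{\A}(v_1)$ by acyclicity; conversely $\widehat{\A}(v_1)\subsetneq\widehat{\A}(v_2)$ gives $v_1\in\widehat{\A}(v_1)\subseteq\widehat{\A}(v_2)$, and $v_1\neq v_2$ forces $v_1\in\A(v_2)$.

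I would prove the history formula by induction along a topological order of $G$. For the inclusion ``$\subseteq$'', note that $X_v(\omega)=\omega_{(v,X_{\pa(v)}(\omega))}$ is determined by $X_{\pa(v)}(\omega)$ together with the factors $(\omega_j)_{j\in I_v}$, so $X_v$ is a deterministic function of $U_{I_v\cup\history(X_{\pa(v)})}$; as the disintegration condition is vacuous unconditionally, $J\coloneqq I_v\cup\history(X_{\pa(v)})$ generates $X_v$, and \Cref{lemma:historygenerates} gives $\history(X_v)\subseteq J$. Applying \Cref{lemma:jointhistory} to the family $X_{\pa(v)}=(X_w)_{w\in\pa(v)}$ yields $\history(X_{\pa(v)})=\bigcup_{w\in\pa(v)}\history(X_w)$, and the induction hypothesis rewrites this as $\bigcup_{u\in\A(v)}I_u$, so $\history(X_v)\subseteq\bigcup_{u\in\widehat{\A}(v)}I_u$.

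The inclusion ``$\supseteq$'' is the main obstacle and is where minimality must be used. The clean fact to exploit is that if an index $i$ is \emph{relevant} to a variable $W$ --- meaning there exist $\omega,\omega'$ agreeing off coordinate $i$ with $W(\omega)\neq W(\omega')$ --- then $i$ lies in every generating set of $W$, hence in $\history(W)$; this is immediate from the definition of generation. So it suffices to show every $i\in I_u$ with $u\in\widehat{\A}(v)$ is relevant to $X_v$. For $i=(v,y)\in I_v$ I choose, using surjectivity of $X$ onto $\Val$ (which holds by construction), an $\omega$ with $X_{\pa(v)}(\omega)=y$; then $X_v(\omega)=\omega_i$, and since $\history(X_{\pa(v)})=\bigcup_{u\in\A(v)}I_u$ is disjoint from $I_v$, flipping $\omega_i$ to another value of $\Val_v$ (possible as $\abs{\Val_v}\geq2$) leaves $X_{\pa(v)}$ fixed and therefore changes $X_v$. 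For $i\in I_u$ with $u\in\A(v)$, pick a parent $w\in\pa(v)$ with $u\in\widehat{\A}(w)$ (such $w$ exists since $u$ is a proper ancestor of $v$); by induction $i\in\history(X_w)$, so $i$ is relevant to $X_w$, witnessed by $\omega,\omega'$ with $X_w(\omega)\neq X_w(\omega')$. Since $\proj_w(X_{\pa(v)})=X_w$, the configurations $z\coloneqq X_{\pa(v)}(\omega)$ and $z'\coloneqq X_{\pa(v)}(\omega')$ differ in their $w$-coordinate, so $(v,z)$ and $(v,z')$ are distinct indices of $I_v$. I then overwrite, in both points, the two factors at $(v,z)$ and $(v,z')$ with two distinct values of $\Val_v$; this does not disturb $z,z'$ because $I_v$ is disjoint from $\history(X_{\pa(v)})$, and it keeps the points agreeing off $i$ because $i\notin I_v$. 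The resulting points read off the two prescribed distinct values through the indices $(v,z)$ and $(v,z')$ respectively, so $X_v$ takes different values and $i$ is relevant to $X_v$.

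I expect the propagation in the last step to be the delicate point: flipping a single factor can alter many downstream node values simultaneously, so one cannot naively ``follow a single edge.'' The device that resolves this is to track only that the parent configuration of $v$ changes in at least the chosen coordinate $w$, which already guarantees that two distinct factors of $I_v$ are consulted, and then to pre-set exactly those two factors to distinct values; everything else about how $i$ propagates downstream is irrelevant. The surjectivity of $X$ and the hypothesis $\abs{\Val_v}\geq2$ for every $v$ are used precisely to make these flips and overwrites possible.
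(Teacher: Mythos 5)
Your proof is correct, and its endgame is exactly the paper's: both arguments hinge on the formula $\history(X_v)=\bigcup_{u\in\A(v)\cup\{v\}}I_u$ and then finish with the same set-theoretic observation about strictly nested unions of the disjoint, nonempty blocks $I_u$. The difference is one of completeness rather than of route: the paper \emph{asserts} the history formula without proof, while you actually prove it by induction along a topological order --- the upper bound via generation together with \Cref{lemma:historygenerates} and \Cref{lemma:jointhistory}, the lower bound via a relevance argument (any index witnessed by two points that agree off it yet give different values of $X_v$ lies in every generating set, hence in the history). Your Case 2 construction --- noting that the two parent configurations $z,z'$ differ in their $w$-coordinate, so they address two \emph{distinct} indices of $I_v$, and pre-setting exactly those two factors to distinct values --- is sound and correctly sidesteps the downstream-propagation issue you flag. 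Two one-line patches would make the write-up airtight: (i) in Case 2 you pass from $i\in\history(X_w)$ to ``$i$ is relevant to $X_w$'', which is the \emph{converse} of the relevance fact you stated; either prove the converse (if $i$ is irrelevant, then $I\setminus\{i\}$ generates $X_w$ since disintegration is vacuous unconditionally, so minimality of the history gives $i\notin\history(X_w)$) or carry ``every index of $\history(X_w)$ is relevant to $X_w$'' along as part of the induction hypothesis; (ii) surjectivity of $X$ onto $\Val$ deserves its one-line verification (it also follows from \Cref{lemma:pomg_preparatory}). With those, your argument supplies a full proof of the key step that the paper's own proof leaves unjustified.
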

\begin{proof}
The history of $X_v$ is $\history(X_v)=\bigcup_{u\in\A(v)\cup \{v\}}I_{u}$. Since $I_v$ is nonempty, we have that $v_1$ is an ancestor of $v_2$ $\iff$ $\A(v_1)\cup\{v_1\}\subsetneq\A(v_2)\cup\{v_2\} $  $\iff$ $\bigcup_{v\in\A(v_1)\cup \{v_1\}}I_v \subsetneq \bigcup_{v\in\A(v_2)\cup \{v_2\}}I_v$  $\iff$ $\history(X_{v_1})\subsetneq\history(X_{v_2})$,
which is exactly the definition of $X_{v_1}<^\FS X_{v_2}$.
\end{proof}

\subsubsection{Factored space models are more expressive than Bayesian networks}
A DAG whose d-separations capture the independences of a distribution is called a \emph{perfect map} \cite{koller2009probabilistic}. We now define our analog. That is, an FSM whose structural independences capture the independences of a distribution. With this definition, we can show that FSMs are more expressive than DAGs, in that there are distributions $P$ such that there is no perfect map DAG of $P$ while there is a perfect map FSM of $P$.

\begin{definition}[Perfect map] Let $G$ be a DAG with nodes $V$ and value space $\Val(\Xb)$. Further, let $\FSModel=(\FS, O)$ 
be a factored space model. Let $P$ be a distribution over some observation space $\Obs$. Then we say that
\begin{enumerate}
    \item $G$ is a \emph{perfect map} of $P$ if  $\Val(\Xb)=\Obs$ and for all sets of nodes $V_1, V_2, V_3\subseteq V$, the sets $V_1$ and $V_2$ are d-separated given $V_3$ in $G$ if and only if $X_{V_1}$ and $X_{V_2}$ are independent given $X_{V_3}$ in $P$.
    \item $\FSModel$ is a \emph{perfect map} of $P$ with regard to a family of variables $X=(X_w)_{w\in W}$ with $X_w\colon \Omg \to \Obs$ if $\FSModel$ is a factored space model of $P$ and for all sets of variables $X_{W_1},X_{W_2}, X_{W_3} \subseteq X$, we have that $X_{W_1}$~and~$X_{W_2}$ are independent given $X_{W_3}$ in $P$ if and only if $X_{W_1}$ and $X_{W_2}$ are structurally independent given $X_{W_3}$ in~$\FS$.
\end{enumerate}
\end{definition}
Perfect maps are often considered to be the graphs that correspond to a distribution. The following proposition shows that factored spaces are more expressive than DAGs.

\begin{proposition}[Perfect maps of graphs and factored spaces]
\label{prop:perfectmap}
    Let $P$ be a distribution over some observation space $\Obs$ of the form $\Obs=\timesbig_\vinV \Val(X_v)$. Then, it holds that
    \begin{enumerate}
        \item  If there is a DAG $G$ with nodes $V$ that is a perfect map of $P$, then there is a factored space model $\FSModel=(\Omg, X)$, with $X=(X_v)_\vinV$ that is also a perfect map of $P$ with regard to $X$.
        \item If there is a factored space model $\FSModel$ that is a perfect map of $P$ with regard to $(X_v)_\vinV $, there may still be no DAG $G$ with nodes $V$ that is a perfect map of $P$.
    \end{enumerate}
\end{proposition}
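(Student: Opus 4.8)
The plan is to prove the two parts separately, building on the construction and its properties from \Cref{sec:dagconstruction}.

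For part~1, suppose $G$ is a perfect map of $P$. The perfect-map property in particular includes the Markov direction, that d-separation in $G$ implies independence in $P$; this is the global Markov property, which is equivalent to $P$ factorizing over $G$ (the standard equivalence, see \cite{koller2009probabilistic}). I would then take $\FSModel\coloneq\FSMG=(\FSG,O)$ to be the factored space model constructed from $G$, with node variables $X=(X_v)_\vinV$, so that $\Val(X_v)=\Val_v$ and hence $\Val(\Xb)=\Obs$. By \Cref{prop:distributionconstruction}, $\FSMG$ is a factored space model of $P$. It then remains to match the independences: for arbitrary $W_1,W_2,W_3\subseteq V$ I would chain
\begin{align*}
X_{W_1}\indep X_{W_2}\given X_{W_3}\text{ in }P
&\iff W_1,W_2\text{ are d-separated given }W_3\text{ in }G\\
&\iff X_{W_1}\orthF X_{W_2}\given X_{W_3}\text{ in }\FSG,
\end{align*}
where the first equivalence is the perfect-map hypothesis on $G$ and the second is \Cref{prop:dseparation-structuralindependence}. (Independence under $P$ and under the factorizing $P^\Omg$ agree, since each $X_v$ is a component of $O$ and $P$ is the distribution of $O$ under $P^\Omg$.) This exhibits $\FSMG$ as a perfect map of $P$ with regard to $X$.

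For part~2, I would exhibit one distribution with an FSM perfect map but no DAG perfect map. Take $\FSG=\Omg_1\times\Omg_2\times\Omg_3$ with each $\Omg_i=\{0,1\}$ and $P^\Omg$ the uniform product measure, and define observed variables as projections of the background variables $U=(U_1,U_2,U_3)$: set $C\coloneq U_1$, $B\coloneq(U_1,U_2)$, and $A\coloneq(U_1,U_3)$, with $O=(A,B,C)$ and $\Obs=\Val(A)\times\Val(B)\times\Val(C)$. By construction $\FSModel=(\FSG,O)$ is a factored space model of the resulting $P$, the histories are $\history{A}=\{1,3\}$, $\history{B}=\{1,2\}$, $\history{C}=\{1\}$, and $C$ is a deterministic function of $B$. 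The key observation is that every joint variable $X_W$ formed from $A,B,C$ is again a projection $U_S$ onto the subset $S\subseteq\{1,2,3\}$ of factors given by the union of the relevant index sets. For such projection variables, conditioning on $U_{S_3}$ merely fixes the factors in $S_3$, so both statistical independence under the uniform product measure and structural independence reduce to the same condition $S_1\cap S_2\subseteq S_3$. Hence the two notions coincide on every triple, and $\FSModel$ is a perfect map of $P$ with regard to $(A,B,C)$.

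It remains to rule out a DAG perfect map of this $P$. Directly from the definitions, $A\indep B\given C$ and $A\indep C\given B$ hold (given $C=U_1$ the variables $A$ and $B$ depend on the independent factors $U_3$ and $U_2$; given $B$, the value of $C$ is determined), whereas $A\not\indep(B,C)$ since both $A$ and $C$ reveal $U_1$. With $B\ne C$, this is precisely a failure of the intersection axiom. If some DAG $G$ were a perfect map, the independences of $P$ would coincide with d-separation in $G$, which satisfies intersection (\Cref{tab:semigraphoid}); intersection would then force $A\indep(B,C)$, a contradiction. Therefore no DAG perfect map exists, finishing part~2. I expect the main obstacle to be this faithfulness direction of the perfect-map claim in part~2, namely excluding accidental statistical independences that the structure does not predict; choosing all observed variables to be projections $U_S$ onto subsets of factors resolves it cleanly, because for projection variables under a full-support product measure conditional independence is governed exactly by the index sets and so matches structural independence identically. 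Part~1 is then a routine composition of \Cref{prop:distributionconstruction} and \Cref{prop:dseparation-structuralindependence}, the only mild subtlety being the passage from the Markov condition to factorization.
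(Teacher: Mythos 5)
Your proposal is correct, and it splits into one part that matches the paper and one that genuinely diverges. For part~1 you take the paper's own route: the paper likewise chains \Cref{prop:dseparation-structuralindependence} with the perfect-map hypothesis on $G$; in fact you are slightly more careful than the paper, which never explicitly verifies the clause of the definition requiring $\FSMG$ to be a factored space model of $P$ --- your step ``perfect map $\Rightarrow$ global Markov $\Rightarrow$ $P$ factorizes over $G$,'' followed by \Cref{prop:distributionconstruction}, fills that in. For part~2 the two counterexamples differ. The paper's is more minimal: a single factor $\Omg=\Omg_1=\{0,1,2\}$ with two variables $X_1=U_1$ and $X_2=[U_1>0]$; the decisive relation is the self-independence $X_2\indep X_2\given X_1$ (a variable determined by the conditioning variable is independent of itself), which no DAG can reproduce because a node is never d-separated from itself, the zero-edge path being unblockable. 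Your example instead uses three factors with $C=U_1$, $B=(U_1,U_2)$, $A=(U_1,U_3)$ and rules out a DAG via a violation of the intersection axiom, which is exactly the argument the paper makes informally in \Cref{sec:background} but does not use in its own proof of this proposition. Both are valid: the paper's example leans on the convention about d-separating a node from itself, while yours avoids that edge case at the cost of having to verify perfect-mapness over all triples of joint variables; your projection argument handles this correctly, with the one presentational caveat that joints such as $(A,B)$ are bijective recodings of projections $U_S$ rather than literally projections --- since histories and conditional independence are invariant under bijective recoding, the reduction of both notions to the condition $S_1\cap S_2\subseteq S_3$ goes through.
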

We defer the proof to \Cref{sec:proofperfectmap}.
\Cref{prop:perfectmap} illustrates that factored space models are more expressive than Bayesian networks. We can construct a factored space model $\FSMG$ for a DAG~$G$, which is a perfect map for the same distributions as $G$. However, it is not possible in general to construct a DAG~$G$ from a factored space model $\FSModel$ in such a way that $G$ is a perfect map for the same distributions as $\FSModel$.

In the following section, we show that the soundness and completeness theorem of d-separation~\cite{koller2009probabilistic} generalizes to this additional expressiveness. That is, we show the soundness and completeness of structural independence.

\section{Soundness and Completeness of Structural Independence}
\label{sec:soundcomplete}

This section contains our main result: random variables are conditionally independent in all probability distributions that factorize over a factored space if and only if they are structurally independent.
To state the theorem formally, we first review the definition of conditional independence.

\begin{definition}
    \label{def:conditionalindependence} Let $P$ be a probability distribution  over a sample space~$\Omg$, and let $A,B,C\subseteq\Omg$ be events. We say that $A$ and $B$ are \emph{independent given} $C$, denoted $A\indep^P B\given C$, if the following holds:
    \begin{align}\label{eq:conditional-independence}
      P(A\cap C)P(B\cap C)=P(A\cap B\cap C)P(C)\,.
    \end{align}
If $x,y,z$ are values of the random variables $X,Y,Z$, then we write $x\indep^P y\given z$ if $P(x,z)P(y,z)=P(x,y,z)P(z)$ holds. For random variables~$X,Y,Z$, we write $X\indep^P Y\given Z$ if $x\indep^P y\given z$ holds for all values $x,y,z$ of $X,Y,Z$.
\end{definition}
\Cref{eq:conditional-independence} is equivalent to the more common definition $P(A\given C)P(B\given C)=P(A\cap B\given C)$ if $P(C)>0$.
If $P(C)=0$, our convention is that $A\indep^P B\given C$ holds and indeed \Cref{eq:conditional-independence} is trivially true.

\subsection{Main theorem}
\label{sec:theorem}
For random variables $X,Y,Z$ over a factored space $\Omg$, how does probabilistic independence $X\indep^P Y\given Z$ (\Cref{def:conditionalindependence}) relate to structural independence $X \orthF Y \given Z$ (\Cref{def:structuralindependence})?
We answer this question as follows:
Structural independence implies probabilistic independence for all distributions~$P$ that factorize over the factored space (soundness). Conversely, probabilistic independence in all distributions~$P$ that factorize over the factored space implies structural independence (completeness).

\begin{theorem}[Soundness and Completeness of Structural Independence]\label{thrm:soundcomplete}%
Let $X,Y,Z$ be random variables on a factored space $\FS=\timesOmg$. Then the following statements are equivalent.
\begin{enumerate}[label=(\roman*),leftmargin=*]
    \item\label{item:maintheorem-structural} $X$ and $Y$ are structurally independent given $Z$ in $\FS$.
    \item\label{item:maintheorem-probabilistic} $X \indep^P Y \given Z$ holds for all probability distributions $P$ that factorize over $\FS$.
\end{enumerate}
\end{theorem}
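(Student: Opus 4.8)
The plan is to prove the two implications separately: \ref{item:maintheorem-structural}$\impl$\ref{item:maintheorem-probabilistic} (soundness) by a direct computation, and the contrapositive of \ref{item:maintheorem-probabilistic}$\impl$\ref{item:maintheorem-structural} (completeness) by constructing a witnessing product distribution. In both directions I first fix a value $z\in\Val(Z)$ and work with the event $C\coloneq Z^{-1}(z)$, since both structural and probabilistic independence given $Z$ are conjunctions over the values $z$. For completeness I additionally reduce to single values using \Cref{lemma:unionhistory}: if $\history(X\given z)\sect\history(Y\given z)\ne\nothing$, then there are values $x,y$ and an index $i$ with $i\in\history(x\given z)\sect\history(y\given z)$.

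For soundness, assume \ref{item:maintheorem-structural} and fix $z$ with $P(Z=z)>0$ (if $P(Z=z)=0$ the independence \eqref{eq:conditional-independence} is trivial). Write $C=Z^{-1}(z)$, $J\coloneq\history(X\given z)$ and $K\coloneq\history(Y\given z)$; by assumption $J\sect K=\nothing$, and by \Cref{lemma:historygenerates} each of $J,K$ generates the respective variable given $C$, so $U_J\functionTo_C X$, $U_K\functionTo_C Y$, and both $J$ and $K$ disintegrate $C$. The key structural step is to upgrade the two pairwise disintegrations into the three-way decomposition $C=C_J\times C_K\times C_R$ with $R\coloneq I\setminus(J\union K)$: a short projection/merge argument shows $C_{J\union K}=C_J\times C_K$, and disintegration is closed under unions, so $J\union K$ disintegrates $C$ as well. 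Since $X$ is a function of $U_J$ on $C$ and $Y$ a function of $U_K$ on $C$, each event is a rectangle in this decomposition, $\{X=x\}\sect C=\widehat A_x\times C_K\times C_R$ and $\{Y=y\}\sect C=C_J\times\widehat B_y\times C_R$ for suitable $\widehat A_x\subseteq C_J$ and $\widehat B_y\subseteq C_K$, whence $\{X=x\}\sect\{Y=y\}\sect C=\widehat A_x\times\widehat B_y\times C_R$. Finally, since $P$ factorizes over $\Omg$ (\Cref{def:factorizes}), the $P$-mass of any rectangle over the partition $\{J,K,R\}$ is the product of the masses of its three blocks; substituting the four rectangles into \eqref{eq:conditional-independence} makes both sides equal to the same product of block masses, establishing $x\indep^P y\given z$ for all $x,y,z$.

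For completeness I argue the contrapositive. Assuming \ref{item:maintheorem-structural} fails, fix $z,x,y$ and an index $i$ with $i\in\history(x\given z)\sect\history(y\given z)$ as above, and set $A=\{X=x\}$, $B=\{Y=y\}$, $C=Z^{-1}(z)$. I seek a factorizing $P$ with $P(A\sect C)P(B\sect C)\ne P(A\sect B\sect C)P(C)$. Viewing the difference of the two sides as a polynomial $\Delta$ in the factor weights $\{P(\omg_j)\}$, it suffices to show $\Delta\not\equiv0$, since then a generic product distribution witnesses the dependence. To show $\Delta\not\equiv0$ I exhibit a single assignment on which it does not vanish, and this is where minimality of the history enters: because $i$ lies in \emph{every} set that generates $A$ given $C$ (\Cref{lemma:historygenerates}), the $i$-th factor genuinely distinguishes membership in $A$ inside $C$, and likewise for $B$. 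Concentrating the distribution on a witnessing pair of $i$-values, with the remaining factors chosen so that $C$, $A$ and $B$ are all realized, couples $A$ and $B$ through the common factor $i$ and forces $\Delta\ne0$.

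The main obstacle is precisely this last extraction step. Membership of $i$ in the \emph{minimal generating set} does not immediately supply two outcomes of $C$ that differ only in coordinate $i$, because conditioning on $C$ may entangle factor $i$ with the others---exactly the phenomenon motivating the disintegration condition in \Cref{def:history}---so one cannot in general vary a single coordinate while staying inside $C$. The careful part is therefore to perturb coordinate $i$ together with precisely the block it is entangled with, working inside the disintegrated blocks $C_{\history(A\given C)}$ and $C_{\history(B\given C)}$ whose overlap contains $i$, so that the perturbation remains in $C$ yet flips both $A$ and $B$. Once such witnessing outcomes are secured, the genericity argument and the final nonvanishing check are routine.
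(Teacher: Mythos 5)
Your soundness direction is essentially the paper's argument: reduce from variables to events via \Cref{lemma:unionhistory}, use generation and disintegration (\Cref{lemma:historygenerates}) to write $A\cap C$, $B\cap C$, and $A\cap B\cap C$ as rectangles compatible with a partition of $I$, and let factorization of $P$ finish the computation. The paper works with the two-block partition $\{J,\comp{J}\}$ for $J=\history(A\given C)$ rather than your three blocks $J,K,R$, but this difference is immaterial.

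The completeness direction has a genuine gap, and it sits exactly where you place your ``careful part.'' Casting the difference $\Delta$ of the two sides of \eqref{eq:conditional-independence} as a polynomial in the factor weights and invoking genericity is fine; the entire content of the theorem is exhibiting \emph{one} factorizing assignment with $\Delta\neq 0$, and your proposed construction cannot do it. Your sketch needs two outcomes of $C$, differing essentially in coordinate $i$, that are distinguished by both $A$ and $B$, on which you then concentrate mass. Such outcomes need not exist: take $\Omg=\{0,1\}\times\{0,1\}$, $C=\{00,11\}$, $A=\{00\}$, $B=\{11\}$. Then $\history(A\given C)=\history(B\given C)=\{1,2\}$, so $i=1$ lies in both histories, yet no two elements of $C$ differ in a single coordinate; and perturbing coordinate $1$ ``together with the block it is entangled with'' (namely coordinate $2$) yields the pair $\{00,11\}$, on which no two-point distribution is a product distribution, since the support of a product distribution is a rectangle. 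The actual witness here is, e.g., the uniform distribution, whose support is all of $\Omg$ rather than any perturbation pair inside $C$ — so the dependence is not detected by local flips at all, which is why the paper resorts to global, non-constructive arguments.

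More structurally, your assertion that ``because $i$ lies in every set that generates $A$ given $C$, the $i$-th factor genuinely distinguishes membership in $A$ inside $C$'' is precisely the hard direction of the paper's \Cref{lemma:cohistory_is_complement}, namely that the history equals the set of probabilistically \emph{relevant} factors; the paper proves this through the cohistory machinery (\Cref{lemma:progressive_application_irrelevance,lemma:cohistory_has_independence_property,lemma:cohistory_disintegrates}), and it is not a one-line consequence of minimality. Even granting it, relevance of $i$ to $A$ given $C$ and to $B$ given $C$ is witnessed by two possibly \emph{different} pairs of distributions, so one still needs the mutual exclusion principle (\Cref{lemma:exclusion_principle}) together with the polynomial interpolation argument (\Cref{lemma:completeness_cohistory}) to manufacture a single $P\in\distributionsF(\Omg)$ with $A\not\indep^P B\given C$. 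Your sketch acknowledges the obstruction but supplies no substitute for this machinery, so the completeness half remains unproved.
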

In order to prove \Cref{thrm:soundcomplete}, we first prove the soundness and completeness of structural independence for \emph{events} in $\Omg$, instead of random variables.
It is then easy to lift these results to random variables.
For brevity, in what follows we write $\distributionsF(\Omg)$ for the set of all distributions that factorize over $\Omg$.
The soundness direction (\ref{item:maintheorem-structural} $\implies$ \ref{item:maintheorem-probabilistic}) of \Cref{thrm:soundcomplete} for events is given by the following lemma.
\begin{restatable}[Soundness for Events]{lemma}{soundevent}%
  \label{lemma:soundevent}%
  Let $A,B,C\subseteq \Omg$ be events in a factored space $\FS=\timesOmg$.\\
    If the histories satisfy
    $\history(A\given C)\cap \history(B\given C)=\nothing$,
    then $A \indep^P B \given C$ holds for all $P\in\distributionsF(\Omg)$.
\end{restatable}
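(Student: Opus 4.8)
The plan is to extract, from the minimality of the history (\Cref{lemma:historygenerates}), an explicit common product structure for the three events $A\cap C$, $B\cap C$, and $A\cap B\cap C$, and then to evaluate the four probabilities appearing in \Cref{eq:conditional-independence} using the independence of disjoint blocks of background variables. To set up, I would write $J_A\coloneqq\history{A\given C}$ and $J_B\coloneqq\history{B\given C}$, which are disjoint by hypothesis, and put $K\coloneqq I\setminus(J_A\cup J_B)$, so that $\{J_A,J_B,K\}$ partitions $I$. By \Cref{lemma:historygenerates}, $J_A$ generates $1_A$ given $C$ and $J_B$ generates $1_B$ given $C$. Unpacking \Cref{def:history}, each generation statement yields two facts: a disintegration $C=C_{J_A}\times C_{I\setminus J_A}=C_{J_B}\times C_{I\setminus J_B}$, and a functional dependence $U_{J_A}\functionTo_C 1_A$ and $U_{J_B}\functionTo_C 1_B$. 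The functional dependence says that, within $C$, membership in $A$ is determined by the $J_A$-coordinates alone, so there is a set $\tilde A\subseteq C_{J_A}$ with $A\cap C=\{\omg\in C:\proj_{J_A}(\omg)\in\tilde A\}=\tilde A\times C_{I\setminus J_A}$, and symmetrically a $\tilde B\subseteq C_{J_B}$ with $B\cap C=\tilde B\times C_{I\setminus J_B}$.

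The key step is to upgrade the two separate two-way disintegrations into a single three-way disintegration $C=C_{J_A}\times C_{J_B}\times C_K$. The inclusion $\subseteq$ is immediate by projecting a point of $C$ onto the three blocks. For $\supseteq$ I would use that the projection of a Cartesian product of families is again a Cartesian product (noted after the definition of the Cartesian product of families): disintegrating $C$ by $J_A$ gives $C=C_{J_A}\times C_{J_B\cup K}$, while projecting the disintegration $C=C_{J_B}\times C_{J_A\cup K}$ onto the index set $J_B\cup K$ yields $C_{J_B\cup K}=C_{J_B}\times C_K$; substituting the second into the first gives the claim. With this common product structure in hand, the earlier descriptions refine to $A\cap C=\tilde A\times C_{J_B}\times C_K$ and $B\cap C=C_{J_A}\times\tilde B\times C_K$, and intersecting these gives $A\cap B\cap C=\tilde A\times\tilde B\times C_K$.

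Finally I would evaluate the probabilities. Since $P\in\distributionsF(\Omg)$, all background variables $U_i$ are mutually independent, so the three blocks $U_{J_A}$, $U_{J_B}$, $U_K$ are independent and every one of the four events above, being a product over these blocks, has probability equal to the product of its block-marginals. Writing $\alpha\coloneqq P(\proj_{J_A}^{-1}(\tilde A))$, $\alpha_0\coloneqq P(\proj_{J_A}^{-1}(C_{J_A}))$, $\beta\coloneqq P(\proj_{J_B}^{-1}(\tilde B))$, $\beta_0\coloneqq P(\proj_{J_B}^{-1}(C_{J_B}))$, and $\gamma\coloneqq P(\proj_K^{-1}(C_K))$, we obtain $P(A\cap C)=\alpha\beta_0\gamma$, $P(B\cap C)=\alpha_0\beta\gamma$, $P(A\cap B\cap C)=\alpha\beta\gamma$, and $P(C)=\alpha_0\beta_0\gamma$. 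Both sides of \Cref{eq:conditional-independence} then evaluate to $\alpha\alpha_0\beta\beta_0\gamma^2$, which establishes $A\indep^P B\given C$.

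I expect the three-way disintegration to be the main obstacle, since it is the only point at which the two histories genuinely interact and where the precise bookkeeping of merges and projections matters; once the common product decomposition over $J_A,J_B,K$ is in place, the probability computation is a routine consequence of block independence under factorization.
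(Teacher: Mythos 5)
Your proof is correct, and it shares the paper's overall strategy: obtain explicit Cartesian-product descriptions of $A\cap C$, $B\cap C$, $A\cap B\cap C$, and $C$ over disjoint index blocks, then verify \Cref{eq:conditional-independence} by multiplying block marginals of the factorizing distribution. The difference is the decomposition. The paper sets $J\coloneqq\history{A\given C}$ and works with only the two blocks $J$ and $\comp{J}$: since $\history{B\given C}\subseteq\comp{J}$ and the complement of a disintegrating set disintegrates, the chain $U_{\comp{J}}\functionTo_C U_{\history{B\given C}}\functionTo_C 1_B$ shows that $\comp{J}$ \emph{generates} $B$ given $C$, so its characterization lemma (\Cref{lemma:characterization_event_generation}) yields $A\cap C=(A\cap C)_J\times C_{\comp{J}}$, $B\cap C=C_J\times(B\cap C)_{\comp{J}}$, and $A\cap B\cap C=(A\cap C)_J\times(B\cap C)_{\comp{J}}$ directly, after which the computation closes exactly as yours does. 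You instead keep three blocks $J_A$, $J_B$, $K$ and prove the three-way disintegration $C=C_{J_A}\times C_{J_B}\times C_K$ via projections of Cartesian products; this is valid, and your unfolding of $U_{J_A}\functionTo_C 1_A$ into $A\cap C=\tilde A\times C_{I\setminus J_A}$ is precisely the implication (i)$\,\Rightarrow\,$(iii) of \Cref{lemma:characterization_event_generation}, reproved inline. What each buys: the paper's complement trick makes the step you call the ``main obstacle'' vanish, since there is never any need to split $C_{I\setminus J_A}$ further, giving a leaner argument; your version is symmetric in $A$ and $B$ and isolates the inert coordinates $K$ explicitly, at the cost of one extra combinatorial step.
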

\noindent The proof is a straightforward transformation of definitions and can be found in \Cref{sec:soundevent}.
The completeness direction (\ref{item:maintheorem-probabilistic} $\implies$ \ref{item:maintheorem-structural}) of \Cref{thrm:soundcomplete} for events is given by the following lemma.
\begin{restatable}[Completeness for Events]{lemma}{completeevent}%
  \label{lemma:completeevent}%
  Let $A,B,C\subseteq \Omg$ be events in a factored space $\FS=\timesOmg$.\\
  If $A \indep^P B \given C$ holds for all $P\in\distributionsF(\Omg)$, then the histories satisfy $\history(A\given C)\cap \history(B\given C)=\nothing$.
\end{restatable}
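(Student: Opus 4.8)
The plan is to prove the contrapositive: assuming $\history(A\given C)\cap\history(B\given C)\neq\nothing$, I would exhibit a distribution $P\in\distributionsF(\Omg)$ for which $A\indep^P B\given C$ fails. Fix an index $i\in\history(A\given C)\cap\history(B\given C)$ and abbreviate $H_A\coloneqq\history(A\given C)$, $H_B\coloneqq\history(B\given C)$. By \Cref{lemma:historygenerates}, $H_A$ is the minimal set that generates $A$ given $C$, so it disintegrates $C$ and $1_A$ restricted to $C$ is a function of $U_{H_A}$; concretely, $C=C_{H_A}\times C_{I\setminus H_A}$ and $A\cap C=S_A\times C_{I\setminus H_A}$ for some $S_A\subseteq C_{H_A}$, and symmetrically $C=C_{H_B}\times C_{I\setminus H_B}$ and $B\cap C=S_B\times C_{I\setminus H_B}$.

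Next I would use minimality to locate, for each of $A$ and $B$, the precise way in which the factor $i$ is essential. Since $H_A\setminus\{i\}$ fails to generate $A$ given $C$, exactly one of two things occurs: either $C_{H_A}\neq\proj_i(C_{H_A})\times\proj_{H_A\setminus\{i\}}(C_{H_A})$, so that $H_A\setminus\{i\}$ fails to disintegrate $C$, or this equality holds but $S_A$ genuinely varies with the $i$th coordinate. In both cases one extracts configurations in $C$ witnessing that, after conditioning on $C$, the value of the background variable $U_i$ carries information about membership in $A$; applying the same analysis to $B$ shows $U_i$ also carries information about membership in $B$.

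I would then take $P$ to be a product of factor marginals $P_j$ on $\Omg_j$ chosen so that, conditioned on $C$, both $A$ and $B$ become correlated with the value of $U_i$, which forces $A$ and $B$ to be correlated with each other. The cleanest way to finish is to observe that $f(P)\coloneqq P(A\cap C)P(B\cap C)-P(A\cap B\cap C)P(C)$ is a polynomial in the marginal probabilities $\{P_j(\omega_j)\}$, so it suffices to show $f$ does not vanish identically on the product of probability simplices. The witnesses from the previous step supply a (possibly degenerate) evaluation at which $f\neq0$, which shows $f\not\equiv0$; since $f$ is then nonzero on a dense set that meets the full-support distributions, there is a strictly positive product $P$ with $P(C)>0$ and $f(P)\neq0$, and for such $P$ the inequality $f(P)\neq0$ is exactly the failure of $A\indep^P B\given C$. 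This last point about full support is important, because under the convention of \Cref{def:conditionalindependence} the statement $A\indep^P B\given C$ holds vacuously whenever $P(C)=0$.

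The hard part will be the disintegration clause in the definition of history: an index can be forced into $\history(A\given C)$ for a purely combinatorial reason about the shape of $C$, not because $A$ varies along that coordinate. For instance, with $\Omg=\{0,1\}^2$, $C=\{(0,0),(1,1)\}$ and $A=\{(0,0)\}$, the index $1$ lies in $\history(A\given C)$ even though no two points of $C$ differ only in coordinate $1$. Consequently the naive witness --- two points of $C$ agreeing off $i$ but disagreeing on $A$ --- need not exist, the witness-extraction step genuinely requires the two-case split above, and the correlation must be analyzed while conditioning on a non-rectangular event $C$. Verifying that the constructed $P$ breaks the independence equation in the non-rectangular disintegration case is where I expect the real work to lie.
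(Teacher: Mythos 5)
Your overall plan---prove the contrapositive, then use the fact that $f(P)\coloneqq P(A\cap C)P(B\cap C)-P(A\cap B\cap C)P(C)$ is a polynomial in the factor marginals to upgrade a single nonzero evaluation to a strictly positive distribution---is sound in its outer shell, and the final density step is correct. But the entire burden of the proof rests on producing \emph{one} evaluation point at which $f\neq 0$, and that is exactly what the proposal never does. Two of your intermediate claims hide genuine gaps. First, the inference ``conditioned on $C$, both $A$ and $B$ become correlated with the value of $U_i$, which forces $A$ and $B$ to be correlated with each other'' is false as a general step: take $\Omg_i$ to encode two bits, $A$ the event that the first bit is $1$, $B$ the event that the second bit is $1$, $C=\Omg$, and $P_i$ a product measure on the two bits; then $A$ and $B$ are each nonconstant functions of $U_i$ (hence correlated with suitable events of $U_i$), yet $A\indep^P B$. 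The real difficulty is that relevance of $i$ to $A$ and relevance of $i$ to $B$ are in general witnessed by \emph{different} distributions, and combining the two witnesses into one violating distribution is the crux. The paper does this with the mutual exclusion principle (\Cref{lemma:exclusion_principle}), which averages two factorizing distributions differing only in factor $i$ and factors the resulting identity into $\big(P(A\given C)-Q(A\given C)\big)\cdot\big(Q(B\given C)-P(B\given C)\big)=0$, followed by a polynomial interpolation argument (proof of \Cref{lemma:completeness_cohistory}) to pass from one witnessing pair to all pairs. Nothing in your sketch plays this role.

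Second, your witness-extraction step---``in both cases one extracts configurations in $C$ witnessing that $U_i$ carries information about membership in $A$''---is asserted, not proved, and you yourself identify (with the example $C=\{(0,0),(1,1)\}$, $A=\{(0,0)\}$) that the naive witness fails precisely in the disintegration-failure case. What you are asserting is the statement that every $i\in\history(A\given C)$ is \emph{relevant}, i.e.\ that some pair of factorizing distributions with positive mass on $C$, differing only in factor $i$, yields different values of $P(A\given C)$. In the paper this is the hard direction of \Cref{lemma:cohistory_is_complement}, whose proof occupies most of \Cref{sec:completeevents}; it does not proceed by a case split on how generation fails, but by showing that the set of relevant indices itself disintegrates $C$ (\Cref{lemma:cohistory_disintegrates}, via the conditional independence $U_J\indepstar U_{\comp{J}}\given C$ of \Cref{lemma:cohistory_has_independence_property}) and generates $A$ given $C$ (via the progressive replacement argument of \Cref{lemma:progressive_application_irrelevance} applied to delta distributions), and then invoking minimality of the history (\Cref{lemma:historygenerates}). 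Your proposal, as written, restates this difficulty (``where I expect the real work to lie'') rather than resolving it; completing your plan would essentially require re-proving the paper's cohistory machinery.
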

\noindent The proof can be found in \Cref{sec:completeevents} and is the most technical contribution of this paper.
In a sense that we make formally precise, we prove that the history contains exactly those indices of $I$ that are probabilistically relevant for $A$ given $C$.
We show that, if $A \indep^P B \given C$ holds for all distributions $P\in\distributionsF{\Omg}$, then no index~$i\in I$ can be relevant to both~$A$ given~$C$ and to $B$ given~$C$.
Since the history contains exactly the relevant indices, this implies $\history{A\given C} \cap \history{B\given C} = \emptyset$, which establishes \Cref{lemma:completeevent}.
Using \Cref{lemma:soundevent,lemma:completeevent}, the proof of \Cref{thrm:soundcomplete} is straightforward.
\begin{proof}[Proof of \Cref{thrm:soundcomplete}]
For all random variables~$X,Y,Z$ on $\FS$, we have:
\begin{align*}
  & X \orthF Y \given Z \\
  \Longleftrightarrow \quad
  & \forall z \in \Val(Z): \ \ \history{X \given z} \cap \history{Y \given z} = \emptyset &\text{(\Cref{def:structuralindependence})} \\
  \Longleftrightarrow \quad
  & \forall z \in \Val(Z): \ \ \bigcup_{x\in\Val(X)}\history{x\given z} \cap \bigcup_{y\in\Val(Y)}\history{y\given z} =\nothing & \text{(\Cref{lemma:unionhistory})}\\
  \Longleftrightarrow \quad
  & \forall (x, y, z) \in \Val(X) \times \Val(Y) \times \Val(Z): \ \ \history{x \given z} \cap \history{y \given z} = \nothing \\
  \Longleftrightarrow \quad
  & \forall P \in \distributionsF(\Omg): \ \forall (x, y, z) \in \Val(X) \times \Val(Y) \times \Val(Z): \ \ x \indep^P y \given z & \text{(\Cref{lemma:soundevent,lemma:completeevent})} \\
  \Longleftrightarrow \quad
  & \forall P \in \distributionsF(\Omg): \ \ X \indep^P Y \given Z\,.&\text{(\Cref{def:conditionalindependence})}
  \end{align*}
This completes the proof.
\end{proof}

\subsection{Extension: Strong completeness}

Our main theorem, \Cref{thrm:soundcomplete}, states that the structural independences $X \orthF Y \given Z$ in a factored space~$\Omg$ are exactly those triples $(X,Y,Z)$ which satisfy the conditional independences $X \indep^P Y \given Z$ for \emph{all} distributions~$P\in\distributionsF(\Omg)$.
We now strengthen this result by showing that it is sufficient to exhibit the conditional independences $X \indep^P Y \given Z$ ``locally'' for all distributions~$P$ from \emph{any non-empty open subset $S\subseteq\distributionsF(\Omg)$}.
Here, we view the set $\distributionsF(\Omg)$ as a subset of $\mathbb{R}^\Omg$ with the Euclidean topology.

To strengthen the completeness result, we first show that if a conditional independence holds locally around some distribution~$Q$, then it holds globally on~$\distributionsF(\Omg)$.
We write $d:\distributionsF{\factors}^2\to \mathbb{R}$ for the Euclidean distance on $\distributionsF{\factors} \subseteq \mathbb{R}^{\Omega}$.

\begin{lemma}[From local to global conditional independence]
    \label{lemma:openset}
    Let $X,Y,Z$ be random variables in a factored space $\FS=\timesOmg$.
    Let $Q\in \distributionsF{\factors}$ be a distribution.
    If $X \indep^{Q'} Y \given Z$ holds for all $Q'$ with $d(Q,Q')<\epsilon$, then $X \indep^{P} Y \given Z$ holds for all  $P\in \distributionsF{\Omg}$.
\end{lemma}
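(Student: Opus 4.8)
The plan is to reduce the conditional independence to a finite family of \emph{polynomial} identities in the parameters of the factorizing distributions, and then to exploit the classical fact that a real polynomial vanishing on a nonempty Euclidean-open set must vanish identically. First I would parametrize $\distributionsF(\Omg)$. By \Cref{def:factorizes}, every $P\in\distributionsF(\Omg)$ satisfies $P(\omg)=\prod_{i\in I}P(\omg_i)$ and is therefore determined by the marginal values $p_{i,a}\coloneq P(\proj_i^{-1}(a))$ for $i\in I$ and $a\in\Omg_i$. These parameters range over the product of simplices $\mathcal{P}\coloneq\prod_{i\in I}\distributions(\Omg_i)$, a full-dimensional convex polytope inside the affine subspace $\mathcal{A}\coloneq\{(p_{i,a}) : \sum_{a\in\Omg_i}p_{i,a}=1\text{ for all }i\in I\}$, which is an affine space of dimension $d=\sum_{i\in I}(\abs{\Omg_i}-1)$, i.e.\ $\mathcal{A}\cong\RR^d$. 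Let $\phi\colon\mathcal{A}\to\RR^\Omg$ be the map $\phi\big((p_{i,a})\big)(\omg)=\prod_{i\in I}p_{i,\omg_i}$; it is a polynomial map, hence continuous, and its restriction to $\mathcal{P}$ is a bijection onto $\distributionsF(\Omg)$.

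Next I would express the independence condition as polynomials in these parameters. For fixed values $x,y,z$ of $X,Y,Z$, write $g_{x,y,z}(P)\coloneq P(x,z)P(y,z)-P(x,y,z)P(z)$; by \Cref{def:conditionalindependence}, $X\indep^P Y\given Z$ holds if and only if $g_{x,y,z}(P)=0$ for all triples $(x,y,z)$. Since each of $P(x,z)$, $P(y,z)$, $P(x,y,z)$ and $P(z)$ is a sum of atoms $P(\omg)=\prod_i p_{i,\omg_i}$, the composite $g_{x,y,z}\compose\phi$ is a polynomial in the coordinates of $\mathcal{A}\cong\RR^d$ (after eliminating one coordinate per factor via the normalization constraints).

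The crux of the argument---and the step I expect to be the main obstacle---is to convert the \emph{local} hypothesis into vanishing on a genuinely open subset of $\mathcal{A}$. The preimage $\phi^{-1}\big(\{v\in\RR^\Omg : \lVert v-Q\rVert<\epsilon\}\big)$ is open in $\mathcal{A}$ by continuity of $\phi$ and contains the parameter point $q$ corresponding to $Q$. The difficulty is that $Q$ may assign probability zero to some marginal value, so that $q$ lies on the boundary of $\mathcal{P}$; then this preimage need not be contained in $\mathcal{P}$, and the hypothesis (which only constrains genuine factorizing distributions) cannot be applied directly. I would resolve this using that $\mathcal{P}$ is full-dimensional in $\mathcal{A}$, so its relative interior $\operatorname{int}_\mathcal{A}(\mathcal{P})$---the strictly positive normalized parameter tuples---is dense in $\mathcal{P}$. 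Hence every neighborhood of $q$ meets $\operatorname{int}_\mathcal{A}(\mathcal{P})$, so $\mathcal{U}\coloneq\phi^{-1}\big(\{v : \lVert v-Q\rVert<\epsilon\}\big)\cap\operatorname{int}_\mathcal{A}(\mathcal{P})$ is a \emph{nonempty} Euclidean-open subset of $\mathcal{A}$. Every $p\in\mathcal{U}$ yields a strictly positive product distribution $\phi(p)\in\distributionsF(\Omg)$ with $d(Q,\phi(p))<\epsilon$, so the hypothesis gives $X\indep^{\phi(p)}Y\given Z$, i.e.\ $(g_{x,y,z}\compose\phi)(p)=0$ for all $(x,y,z)$.

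Finally I would conclude by the polynomial identity theorem: each $g_{x,y,z}\compose\phi$ is a polynomial on $\RR^d$ vanishing on the nonempty open set $\mathcal{U}$, hence is the zero polynomial and vanishes on all of $\mathcal{A}$, in particular on $\mathcal{P}$. Translating back through $\phi$, this gives $g_{x,y,z}(P)=0$ for every $P\in\distributionsF(\Omg)$ and every triple $(x,y,z)$, which is precisely $X\indep^P Y\given Z$ for all $P\in\distributionsF(\Omg)$. The only delicate point is the boundary issue just discussed; everything else is a routine translation of definitions together with the standard facts that finite sums and products of the $p_{i,a}$ are polynomials and that a polynomial vanishing on a nonempty open set vanishes everywhere.
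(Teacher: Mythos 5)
Your proposal is correct, and it takes a genuinely different route from the paper's proof. The paper argues one target distribution $P$ at a time: it interpolates factor-wise, setting $R^\lambda_i=(1-\lambda)Q_i+\lambda P_i$ and $R^\lambda=\bigotimes_{i\in I}R^\lambda_i$, notes that $\lambda\mapsto R^\lambda(x,z)R^\lambda(y,z)-R^\lambda(x,y,z)R^\lambda(z)$ is a \emph{univariate} polynomial of degree at most $2\abs{I}$ (\Cref{lemma:obtain_polynomial}), observes that $R^\lambda$ lies in the $\epsilon$-ball around $Q$ for all sufficiently small $\lambda>0$ (giving infinitely many roots), concludes the polynomial is identically zero, and evaluates at $\lambda=1$ to get the claim for that $P$. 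You instead parametrize the whole family $\distributionsF(\Omg)$ by the product of simplices, view the independence defects $g_{x,y,z}$ as \emph{multivariate} polynomials on the parameter space, show they vanish on a nonempty open subset, and invoke the identity theorem for multivariate polynomials. The subtlety you single out---that $Q$ may sit on the boundary of the polytope, so one must intersect the preimage of the ball with the relative interior and use its density to get a nonempty open set of admissible parameters---is real, and your resolution of it is sound; note that the paper's interpolation trick sidesteps exactly this issue for free, since the segment from $Q$ toward $P$ automatically stays inside $\distributionsF(\Omg)$ and enters the ball for small $\lambda$ no matter where $Q$ lies. What your route buys in exchange: it handles all targets $P$ and all value triples $(x,y,z)$ in one stroke, and it generalizes verbatim from balls to arbitrary nonempty open sets $S\subseteq\distributionsF(\Omg)$ (pick any point of $S$ and run the same density argument), which is precisely the local-to-global content that \Cref{lemma:openset} contributes to \Cref{prop:strongcomplete}; the paper instead reaches that proposition by chaining \Cref{lemma:openset} with \Cref{thrm:soundcomplete}. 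The paper's argument remains the more elementary of the two, needing only that a univariate polynomial with infinitely many roots is identically zero, and no explicit parametrization of the model.
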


  The proof is based on the fact that the independence statement has the form of a polynomial, and any polynomial which is zero in an $\epsilon$-environment must be the zero polynomial. It can be found in \Cref{sec:local_to_global}.

The following proposition strengthens completeness:
It is a priori not necessary that the probabilistic independence holds for \emph{all} distributions that factorize over $\Omg$. Rather, if a probabilistic independence holds for any non-empty open set of distributions that factorize over $\Omg$, then this already implies structural independence.

\begin{proposition}[Strong Completeness]
    \label{prop:strongcomplete}
      Let $X,Y,Z$ be random variables in a factored space~$\FS$. If there exists a nonempty open set $S\subseteq \distributionsF{\factors}$ with $X \indep^{P} Y \given Z$ for all $P\in S$, then $X \orthF Y \given Z$.
\end{proposition}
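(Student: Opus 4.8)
The plan is to chain two already-established results: use the openness of $S$ together with \Cref{lemma:openset} to upgrade the local conditional independence on $S$ to global conditional independence on all of $\distributionsF{\factors}$, and then invoke the completeness direction of \Cref{thrm:soundcomplete} to conclude structural independence. The argument is short precisely because the substantive analytic content — passing from a conditional independence that holds only in a neighborhood to one that holds everywhere via the fact that the independence constraint is polynomial — is already packaged inside \Cref{lemma:openset}.

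Concretely, first I would fix any distribution $Q\in S$, which exists because $S$ is nonempty. Since $S$ is open in $\distributionsF{\factors}$ with the subspace topology inherited from $\RR^\Omg$, there is an open set $U\subseteq\RR^\Omg$ with $S=U\cap\distributionsF{\factors}$ and $Q\in U$. Because $U$ is open in $\RR^\Omg$, I can pick $\epsilon>0$ so that the Euclidean ball of radius $\epsilon$ around $Q$ lies inside $U$. Then every factorizing distribution $Q'\in\distributionsF{\factors}$ with $d(Q,Q')<\epsilon$ satisfies $Q'\in U\cap\distributionsF{\factors}=S$, and hence $X\indep^{Q'} Y\given Z$ by the hypothesis of the proposition. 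This is exactly the premise of \Cref{lemma:openset}, so applying that lemma gives $X\indep^{P} Y\given Z$ for all $P\in\distributionsF{\Omg}$. The completeness direction of \Cref{thrm:soundcomplete} (namely \ref{item:maintheorem-probabilistic} $\impl$ \ref{item:maintheorem-structural}) then yields $X\orthF Y\given Z$, as required.

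The only delicate point, and the main (albeit minor) obstacle, is the topological bookkeeping: I must read ``open subset of $\distributionsF{\factors}$'' as open in the subspace topology and verify that this genuinely supplies a full Euclidean $\epsilon$-neighborhood of some point $Q$, intersected with $\distributionsF{\factors}$, so that the hypothesis of \Cref{lemma:openset} is met verbatim rather than merely in spirit. Once this identification is made, no further work is needed and the proposition follows immediately by composing \Cref{lemma:openset} with \Cref{thrm:soundcomplete}.
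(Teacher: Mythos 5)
Your proposal is correct and follows exactly the paper's own proof, which likewise derives the proposition directly from \Cref{lemma:openset} combined with the completeness direction of \Cref{thrm:soundcomplete}; your only addition is spelling out the (routine) subspace-topology bookkeeping that the paper leaves implicit.
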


\begin{proof}
  The proposition follows directly from  \Cref{thrm:soundcomplete} and \Cref{lemma:openset}.
\end{proof}

\section{Discussion and conclusion}
\label{sec:discussion}

We have introduced the factored space model framework, and defined \emph{structural independence}, which models statistical independence between variables. 
Structural independence improves upon d-separation in that it is applicable to variables with deterministic as well as probabilistic relationships. 
This improvement is useful for modeling systems at different levels of abstraction, which introduces deterministic relationships. 
We think that having an independence criterion that is applicable \textit{between} levels of abstraction, rather than only within one level of abstraction, is a very natural representation. More speculatively, we believe that it may help us understand self-referencing systems because there may not even be discrete levels of abstraction. For example, some language models can already be described as thinking about their own thinking process \cite{kadavath2022language} \cite{didolkar2024metacognitive}, which in some sense means that high-level summaries of thoughts are causally influencing object-level thoughts.

Further, we think that a theory of causality that applies across layers of abstraction may be a useful tool to understand agents using the \emph{intentional stance}, that is as systems that are best described in terms of their goals \cite{dennett1989intentional}, or as systems whose actions are caused by their model of the consequences \cite{kenton2023discovering}  \cite{everitt2021agent}.

\paragraph*{Author contributions}

The contributions of the authors include, but are not limited to, the following: 

\begin{itemize}
    \item \textbf{S. Garrabrant} originally conceived of and developed the FSM framework, and found the first proof of the soundness and completeness theorem, as presented in \cite{garrabrant2021temporal}.
    
     \item \textbf{M. G. Mayer} reframed the ideas from \cite{garrabrant2021temporal} to the formalism presented in this paper and developed the more elegant proof of the soundness and completeness theorem presented in \Cref{app:soundcompleteproof}.
    
    \item \textbf{M. Wache} clarified the relationship between Bayes Nets and FSM, and wrote most of this paper, including developing the overall structure, making connections to the existing literature, developing examples, creating figures, and formalizing proofs. 

    \item \textbf{L. Lang} developed the exposition of the proof for the soundness and completeness theorem.
    \item \textbf{S. Eisenstat} developed precursor ideas and examples that grew into the theory of FSMs, and collaborated with Scott Garrabrant to work out the ideas further.
    \item \textbf{H. Dell} provided guidance and regular feedback and edited the paper in detail.
\end{itemize}

\appendix
\newpage
\section{Proofs for Section \ref{sec:fsm}}
\label{sec:proofsfsm}

\subsection{History as the minimal generating partition}
\label{sec:historyproof}
In this section, we prove that the history $\history(X\given C)$ is the unique minimal partition that generates $X$ given $C$. We start by proving that disintegration is closed under intersection and union. For a subset $J\subseteq I$, we use $\comp{J}$ as a shorthand for $I\setminus J$.

\begin{lemma}[Disintegration closed under intersection and union]
\label{lemma:disintegrationintersection}
In a factored space $\FS=\timesbig_\iinI\Omg_i$, let $C\subseteq\Omg$ and $J,K\subseteq I$ such that $J$ disintegrates $C$ and $K$ also disintegrates $C$. Then, $J\cap K$ disintegrates $C$, and $J\cup K$ disintegrates $C$ as well.
\end{lemma}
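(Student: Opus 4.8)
The plan is to first translate the defining equation $C = C_J \times C_{\comp J}$ of \Cref{def:disintegration} into a pointwise \emph{recombination} (combinatorial rectangle) property, and then to combine the two given recombinations in two steps. Unfolding the family Cartesian product, I claim that $J$ disintegrates $C$ if and only if for all $c, c' \in C$ the recombined family $\proj_J(c)\merge\proj_{\comp J}(c')$ again lies in $C$. Indeed, the inclusion $C \subseteq C_J \times C_{\comp J}$ is automatic from the definition of projection, while $C_J \times C_{\comp J}\subseteq C$ says precisely that every merge of a $J$-projection and a $\comp J$-projection of elements of $C$ stays in $C$; picking witnesses $c, c'$ for given $a \in C_J$ and $b \in C_{\comp J}$ translates one form into the other. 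I will also record the trivial symmetry that $J$ disintegrates $C$ if and only if $\comp J$ does, which is immediate from the commutativity of $\times$ over families.

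For the intersection, I would fix $c, c' \in C$ and produce $\proj_{J\cap K}(c)\merge\proj_{\comp{J\cap K}}(c')\in C$, which is the recombination property for $J\cap K$. First apply the recombination property for $J$ to $c, c'$, obtaining $e \coloneqq \proj_J(c)\merge\proj_{\comp J}(c')\in C$. Then apply the recombination property for $K$ to the pair $e, c' \in C$, obtaining $d\coloneqq \proj_K(e)\merge\proj_{\comp K}(c')\in C$. A coordinatewise check, splitting $I$ along the common refinement into the four cells $J\cap K$, $J\cap\comp K$, $\comp J\cap K$, $\comp J\cap\comp K$, shows that $d_i = c_i$ exactly when $i\in J\cap K$ and $d_i = c'_i$ otherwise; hence $d = \proj_{J\cap K}(c)\merge\proj_{\comp{J\cap K}}(c')$ as desired, and so $J\cap K$ disintegrates $C$.

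For the union I would avoid repeating the argument and instead dualize: since $J$ and $K$ disintegrate $C$, so do $\comp J$ and $\comp K$ by the symmetry above; applying the intersection case just proved to $\comp J, \comp K$ shows that $\comp J \cap \comp K$ disintegrates $C$; and since $\comp{\comp J \cap \comp K} = J\cup K$, one more application of the symmetry yields that $J\cup K$ disintegrates $C$. The only genuine work is the bookkeeping in the two-step merge of the intersection case, namely tracking which of the four cells draws its coordinates from $c$ versus $c'$ after each recombination. There is no conceptual obstacle beyond this: the statement is a clean instance of the fact that combinatorial rectangles are closed under the Boolean operations on the underlying index split.
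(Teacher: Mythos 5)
Your proof is correct. The pointwise reformulation (that $J$ disintegrates $C$ if and only if $c_J \merge c'_{\comp{J}} \in C$ for all $c, c' \in C$) is a faithful unfolding of \Cref{def:disintegration}, the two-step recombination for the intersection checks out --- the coordinate bookkeeping across the four cells $J\sect K$, $J\sect\comp{K}$, $\comp{J}\sect K$, $\comp{J}\sect\comp{K}$ goes through exactly as you say --- and the De Morgan argument for the union is sound, since disintegration is manifestly invariant under replacing $J$ by $\comp{J}$.

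Your route differs from the paper's in organization, though the combinatorial core is the same. The paper introduces the auxiliary four-cell product $C^\times \coloneqq C_{J\sect K}\times C_{\comp{J}\sect K}\times C_{J\sect \comp{K}}\times C_{\comp{J}\sect \comp{K}}$, proves $C^\times\subseteq C$ by the same sequential move you use (recombine along $J$ first, then along $K$, but with four witnesses $c^1,\dots,c^4\in C$ rather than your two), and then sandwiches both $C_{J\sect K}\times C_{\comp{J\sect K}}$ and $C_{J\union K}\times C_{\comp{J\union K}}$ between $C$ and $C^\times$, dispatching intersection and union simultaneously. You instead handle the intersection pointwise with only two elements, which is leaner, and then obtain the union with no further computation via complement symmetry and $\comp{\comp{J}\sect\comp{K}} = J\union K$ --- a step the paper does not take, and arguably the cleanest part of your write-up. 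What the paper's $C^\times$ buys is that a single inclusion $C^\times \subseteq C$ serves both conclusions at once; what your duality buys is that the union case becomes entirely free, and the recombination lemma you isolate is a reusable characterization of disintegration in its own right.
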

\begin{proof}
 Let $J,K\subseteq I$ be such that $J$ disintegrates $C$ and $K$ disintegrates $C$. That is, we have
\(C=C_J\times C_\comp{J}\text{ and } C=C_K\times C_\comp{K}\,.\)
We claim that $C= C_{J\sect K}\times C_\comp{J\sect K}$ and $C= C_{J\cup K}\times C_\comp{J\cup K}$ hold as well.

We first make the following observation:
\begin{enumerate}
  \item[$(\ast)$]
  \text{For all $L\subseteq I$, we have $C \subseteq C_L\times C_{\comp{L}}$.}
\end{enumerate}
To see this, let $c\in C$, then we have $c=c_L\cdot c_{\comp{L}}$, and both $c_L\in C_L$ and $c_{\comp{L}}\in C_{\comp{L}}$ hold as required.
Then $(\ast)$ implies $C\subseteq C_{J\sect K}\times C_\comp{J\sect K}$ and $C\subseteq C_{J\cup K}\times C_\comp{J\cup K}$, so it remains to prove the reverse inclusions.

To this end, we let $C^\times\coloneqq C_{J\sect K}\times C_{\comp{J}\sect K}\times  C_{J\sect \comp{K}}\times C_{\comp{J}\sect \comp{K}}$ and claim $C^\times\subseteq C$.
Let $c\in C^\times$. By the definition of $C^\times$, there exist $c^1, c^2, c^3, c^4\in C$ such that the following holds: 
\begin{align*}
c &= c^1_{J \sect K}  \merge c^2_{\comp{J} \sect K} \merge c^3_{J \sect \comp{K}}\merge  c^4_{\comp{J} \sect \comp{K}}
 = (c^1_J)_{J \sect K}  \merge (c^2_\comp{J})_{\comp{J} \sect K} \merge (c^3_J)_{J \sect \comp{K}}\merge  (c^4_\comp{J})_{\comp{J} \sect \comp{K}} \\
 &= (c^1_J\merge c^2_\comp{J})_{(J \sect K)\cup(\comp{J} \sect K)} \merge (c^3_J\merge  c^4_\comp{J})_{(J\sect\comp{K})\cup(\comp{J} \sect \comp{K})}
= (c^1_{J} \merge c^2_\comp{J})_K \merge (c^3_J \merge  c^4_\comp{J})_\comp{K}\,.
\end{align*}
Since $J$ disintegrates $C$ and $c^1, c^2\in C$, we have $c^1_{J} \merge c^2_\comp{J}\in C$. Analogously, $c^3_J \merge  c^4_\comp{J}\in C$. From this, and the fact that $K$ disintegrates $C$, we obtain $c\in C$. Therefore, we have $C^\times \subseteq C$.

By $(\ast)$, we have
$C\subseteq C_{J\sect K}\times C_\comp{J\sect K}\subseteq C^\times$ and
$C\subseteq C_{J\sect K}\times C_\comp{J\sect K}\subseteq C^\times$.
Combined with $C^\times \subseteq C$, we have proved $C= C_{J\sect K}\times C_\comp{J\sect K}$ and $C= C_{J\cup K}\times C_\comp{J\cup K}$ as required.
\end{proof}

\begin{lemma}[Generation closed under intersection]
\label{lemma:generationintersection}
In a factored space $\FS=\timesOmg$, let $X\colon \Omg\to \Val(X)$, let $C\subseteq\Omg$, and let $J,K\subseteq I$ be such that $J$ generates $X$ given $C$ and $K$ generates $X$ given $C$. Then, $J\cap K$ generates $X$ given $C$ as well.
\end{lemma}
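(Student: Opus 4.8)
The plan is to verify directly the two conditions that define generation in \Cref{def:history} for the set $J\sect K$: that $J\sect K$ disintegrates $C$, and that $U_{J\sect K}\functionTo_C X$. The first is immediate. Since $J$ and $K$ each generate $X$ given $C$, each of them disintegrates $C$, so by \Cref{lemma:disintegrationintersection} their intersection $J\sect K$ disintegrates $C$ as well, giving $C=C_{J\sect K}\times C_{\comp{J\sect K}}$. This settles the disintegration requirement with no further work.

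The substance is the functional condition. Let $f$ and $g$ be the witnessing functions, so that $X(\omg)=f(\proj_J(\omg))$ and $X(\omg)=g(\proj_K(\omg))$ hold for every $\omg\in C$. It suffices to show that $X$ is constant on the fibers of $\proj_{J\sect K}$ within $C$, i.e. that $X(\omg)=X(\omg')$ whenever $\omg,\omg'\in C$ satisfy $\proj_{J\sect K}(\omg)=\proj_{J\sect K}(\omg')$; one then defines the required map on the image $C_{J\sect K}$ by sending $\proj_{J\sect K}(\omg)$ to $X(\omg)$ and extends it arbitrarily to all of $\Val(U_{J\sect K})$. The key step is a bridging construction: given such $\omg,\omg'$, I would form the point $\tilde\omg\coloneqq\proj_J(\omg)\merge\proj_{\comp{J}}(\omg')$, which copies $\omg$ on $J$ and $\omg'$ on $\comp{J}$. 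Because $\proj_J(\omg)\in C_J$, $\proj_{\comp{J}}(\omg')\in C_{\comp{J}}$, and $J$ disintegrates $C$ (so $C=C_J\times C_{\comp{J}}$), the point $\tilde\omg$ lies in $C$. By construction $\proj_J(\tilde\omg)=\proj_J(\omg)$, and moreover $\proj_K(\tilde\omg)=\proj_K(\omg')$: on $K\sect\comp{J}$ the point $\tilde\omg$ equals $\omg'$, while on $K\sect J=J\sect K$ it equals $\omg$, which agrees with $\omg'$ there by hypothesis. Applying the two witnesses at $\tilde\omg\in C$ then gives the chain $X(\omg)=f(\proj_J(\omg))=f(\proj_J(\tilde\omg))=X(\tilde\omg)=g(\proj_K(\tilde\omg))=g(\proj_K(\omg'))=X(\omg')$, as required.

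The main obstacle is locating the right bridge point: it must use the disintegration of $J$ (to keep $\tilde\omg$ inside $C$) together with the assumed agreement of $\omg$ and $\omg'$ on $J\sect K$ (to force the $K$-projection of $\tilde\omg$ to coincide with that of $\omg'$). Once $\tilde\omg$ is in hand, everything else is a routine chain of equalities, and checking that the induced map on $C_{J\sect K}$ is well defined and witnesses $U_{J\sect K}\functionTo_C X$ is immediate.
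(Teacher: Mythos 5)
Your proof is correct and follows essentially the same route as the paper's: both reduce disintegration to \Cref{lemma:disintegrationintersection} and then handle the functional condition via the same bridge point $\omg_J \merge \omg'_{\comp{J}} \in C$, chaining the $J$-witness and $K$-witness through it. If anything, your write-up is cleaner, since you observe directly that this point agrees with $\omg'$ on $K$, avoiding the paper's redundant intermediate point $\omg^2$ (which in fact equals $\omg'$).
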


\begin{proof}
    Let  $J,K\subseteq I$ be such that $J$ generates $X$ given $C$ and $K$ generates $X$ given $C$. In particular, $J$ and $K$ both disintegrate $C$. By \Cref{lemma:disintegrationintersection}, it follows that $J\sect K$ also disintegrates $C$.
    As both $J$ and $K$ generate $X$ given $C$, we have
    \(U_J\functionTo_C X \text{ and }U_K\functionTo_C X\).
    We claim that $U_{J \sect K} \functionTo X$.
    Let $\omg, \omg' \in C$ s.t. $\omg_{J \sect K} = \omg'_{J \sect K}$.
    Since $J$ generates $X$ given $C$, we have
    $\omg^1 = \omg_{J} \cdot \omg'_{\comp{J}} \in C$
    and $X(\omg) = X(\omg^1)$.
    Since $J \union K$ disintegrates $C$, we have
    $\omg^2 = \omg_{J \sect K} \cdot \omg'_{I \without (J \sect K)} \in C$.
    Because $\omg^1$ and $\omg^2$ agree on $J$, we have
    $X(\omg^1) = X(\omg^2)$.
    Now, since $K$ generates $X$ given $C$, we have
    $\omg^2 = \omg_{J \sect K} \cdot \omg'_{I \without (J \sect K)} \in C$
    and $X(\omg^1)=X(\omg^2)$.
    Since $\omg_{J \sect K} =\omg'_{J \sect K}$, we have
    $\omg^2 = \omg$ and therefore $X(\omg) = X(\omg')$

\end{proof}

Using \Cref{lemma:generationintersection}, we now prove \Cref{lemma:historygenerates}, which we restate here for convenience.

\historygenerates*

\begin{proof}
    By definition, $\history(X\given C)=\bigcap\{J\subseteq I \mid J\text{ generates } X\text{ given }C\}$. As generation is closed under intersection by \Cref{lemma:generationintersection}, it follows that $\history(X\given C)$ generates $X$ given $C$. As $I$ is finite, $\history(X\given C)$ is the unique minimum of $\bigcap\{J\subseteq I \mid J\text{ generates } X\text{ given }C\}$.
\end{proof}

\subsection{History of joint variable}
\label{sec:proofjointhistory}
We prove \Cref{lemma:jointhistory}, which we restate for convenience.

\jointhistory*
\begin{proof}[Proof of \Cref{lemma:jointhistory}]
     We first prove $\history{(X,Y) \given C}\subseteq\history{X \given C}\cup \history(Y\given C)$. We begin by observing that $\history(Y\given C)\cup \history(Y\given C)$ disintegrates $C$ because disintegration is closed under union by \Cref{lemma:disintegrationintersection}.
     Moreover, by definition of the history, we have
     \(U_{\history(X\given C)}\functionTo_C X \textnormal{ and } U_{\history(Y\given C)}\functionTo_C Y\).
     Thus, we have
     \((U_{\history(X\given C)}, U_{\history(Y\given C)})\functionTo_C (X,Y)\),
     which is equivalent to
     \(U_{\history(X\given C)\cup\history(Y\given C)}\functionTo_C (X,Y)\).
     Together with the fact that $\history(X\given C)\cup \history(Y\given C)$ disintegrates $C$,
     this means that $\history(X\given C)\cup\history(Y\given C)$ generates $(X,Y)$ given $C$. By \Cref{lemma:historygenerates}, the history is the smallest generating set. Therefore, we have
     $\history((X,Y)\given C) \subseteq \history(X\given C)\cup \history(Y\given C)$.

     Now we prove the reverse inclusion, that is, $\history(X\given C)\cup \history(Y\given C)\subseteq \history((X,Y)\given C)$.
     By definition of the history, we have
     \(U_{\history((X,Y)\given C)}\functionTo_C (X,Y)\),
     and therefore also
     \(U_{\history((X,Y)\given C)}\functionTo_C X\)
     and
     \(U_{\history((X,Y)\given C)}\functionTo_C Y\).
     Together with the fact that $\history((X,Y)\given C)$ disintegrates $C$
     by definition, this means that $\history((X,Y)\given C)$ generates $X$ given $C$, and $\history((X,Y)\given C)$ also generates $Y$ given $C$. As the history is the smallest generating set by \Cref{lemma:historygenerates}, we have
     \(\history(X\given C) \subseteq \history((X,Y)\given C)\) and $\history(Y\given C) \subseteq \history((X,Y)\given C)$,
     and therefore also
     $\history(X\given C)\cup \history(Y\given C) \subseteq \history((X,Y)\given C)$.
     This completes the proof.
\end{proof}

\subsection{History as union of histories of events}
\label{sec:proofunionhistory}
Here we prove \Cref{lemma:unionhistory}, which we restate for convenience.
\lemmaunionhistory*
\begin{proof}
  We first recall that $\history(x\given C)$ is a shorthand for $\history(1_x\given C)$, where $1_x$ is the indicator random variable defined via $1_x(\omega)=1$ if $X(\omega)=x$ and $1_x(\omega)=0$ otherwise.
  We observe that each~$1_x$ is a deterministic function of $X$ and that $X$ is a deterministic function of the joint variable $(1_x)_{x\in\Val(X)}$. In particular, we have $X\functionTo_C (1_x)_{x\in\Val(X)}\functionTo_C X$.
  By transitivity of $\functionTo_C$, this implies $\history(X\given C) = \history((1_x)_{x\in\Val(X)}\given C)$.
  Since $\Val(X)$ is finite, we can inductively apply \Cref{lemma:jointhistory} to arrive at
  $\history((1_x)_{x\in\Val(X)}\given C)=\bigcup_{x\in\Val(X)}\history{1_x \given C}=\bigcup_{x\in\Val(X)}\history{x \given C}$.
  Thus, the claim follows.
\end{proof}

\subsection{Structural time is inclusion}
\label{sec:proofstructuraltime}
Here we prove \Cref{lemma:structuraltimehistory}, which we restate for convenience.

\structuraltimehistory*

\begin{proof}
  Let $X$ and $Y$ be random variables in a factored space $\FS$. We prove the two directions of the equivalence separately.

  ``$\Rightarrow$'': Suppose that $X$ is structurally before $Y$. By \Cref{def:structuraltime}, this means $\history(X)\subseteq\history(Y)$.
  We claim that \(Y \orthF Z \impl X\orthF Z\) holds for all variables \(Z\) on \(\Omg\). To this end, let $Z$ be any variable on $\Omg$ and suppose $Y\orthF Z$ holds.
  By \Cref{def:structuralindependence}, we have $\history(Y)\cap\history(Z)=\nothing$.
  By $\history(X)\subseteq\history(Y)$, this implies $\history(X)\cap\history(Z)=\nothing$ and thus $X\orthF Z$ as claimed.

  ``$\Leftarrow$'':
  Suppose that \(Y \orthF Z \impl X\orthF Z\) holds for all variables \(Z\) on \(\Omg\). We claim that $\history(X)\subseteq\history(Y)$ holds.
  To this end, let $i\in \history(X)$ be arbitrary. We choose $Z=U_i$, which implies $\history(Z)=\{i\}$.
  Since $\history(X)\cap\history(Z)=\{i\}\neq\nothing$, we have $X\not\orthF Z$. By applying the contrapositive  \(X\not\orthF Z \not\impl Y \not\orthF Z \) of the assumption, this implies $Y\not\orthF Z$, which implies $\history(Y)\cap\history(Z)\neq\nothing$ and thus $i\in\history(Y)$.
  Since this holds for all $i\in\history(X)$, we have $\history(X)\subseteq\history(Y)$ as claimed, and so $X$ is structurally before $Y$.
\end{proof}

\section{Proofs for Section \ref{sec:dag}}
\subsection{Compositional semigraphoid}
\label{sec:proofcomposition}

Here we prove \Cref{prop:compositionalsemigraphoid}, which states that structural independence is a compositional semigraphoid. We first establish the composition axiom.
\begin{lemma}[Composition axiom]
\label{lemma:composition}
  Let $X$, $Y$, $Z$, and $W$ be random variables in a factored space $\FS=\timesOmg$.
  If \(X\orthF Y \given W\) and \(X\orthF Z\given W\) hold, then \(X\orthF (Y,Z)\given W\) holds.
\end{lemma}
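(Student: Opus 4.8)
The plan is to reduce the claim to the joint-history lemma (\Cref{lemma:jointhistory}) together with the distributivity of intersection over union. By \Cref{def:structuralindependence}, the two hypotheses $X\orthF Y\given W$ and $X\orthF Z\given W$ mean that $\history(X\given w)\sect\history(Y\given w)=\nothing$ and $\history(X\given w)\sect\history(Z\given w)=\nothing$ hold for every $w\in\Val(W)$, while the goal $X\orthF (Y,Z)\given W$ unfolds to $\history(X\given w)\sect\history((Y,Z)\given w)=\nothing$ for every $w\in\Val(W)$. So it suffices to fix an arbitrary conditioning value and establish the single emptiness statement there.

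First I would fix an arbitrary $w\in\Val(W)$ and read $w$ as the event $\{\omg\in\Omg: W(\omg)=w\}$, exactly the shorthand used when writing $\history(\cdot\given w)$. Then \Cref{lemma:jointhistory} applies with $C=w$ and gives the identity $\history((Y,Z)\given w)=\history(Y\given w)\union\history(Z\given w)$. Intersecting both sides with $\history(X\given w)$ and distributing yields $\history(X\given w)\sect\history((Y,Z)\given w)=\big(\history(X\given w)\sect\history(Y\given w)\big)\union\big(\history(X\given w)\sect\history(Z\given w)\big)$. Both terms on the right-hand side are empty by the two hypotheses, so the whole intersection is empty. Since $w$ was arbitrary, this holds for all $w\in\Val(W)$, which is precisely $X\orthF (Y,Z)\given W$.

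There is essentially no obstacle here: the entire substantive content of the composition axiom is already carried by \Cref{lemma:jointhistory}, and what remains is the purely set-theoretic distributivity of $\sect$ over $\union$. The only point requiring a moment of care is the bookkeeping of the conditioning value, namely that \Cref{def:structuralindependence} quantifies over $w\in\Val(W)$ and that the joint-history identity must be invoked separately for each conditioning event $w$; once $w$ is fixed the argument collapses to a single line.
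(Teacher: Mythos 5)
Your proposal is correct and follows essentially the same route as the paper's own proof: fix an arbitrary $w\in\Val(W)$, unfold \Cref{def:structuralindependence}, apply \Cref{lemma:jointhistory} with $C=w$, and use distributivity of intersection over union to conclude the intersection with $\history((Y,Z)\given w)$ is empty. The paper's argument is identical in substance, merely phrased as ``the two empty intersections imply the intersection with the union is empty'' rather than writing out the distributed form explicitly.
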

\begin{proof}
We start by assuming that the premise $X\orthF Y \given W$ and $X\orthF Z\given W$ holds.
Let $w\in \Val(W)$ be arbitrary.
By \Cref{def:structuralindependence}, we have
\(\history(X\given w)\cap\history(Y\given w)=\nothing \textnormal{ and } \history(X\given w)\cap\history(Z\given w)=\nothing\), which implies
\(\history(X\given w)\cap(\history(Y\given w) \cup\history(Z\given w))=\nothing\).
Together with \Cref{lemma:jointhistory}, we obtain
\(\history(X\given w)\cap\history((Y,Z)\given w) =\nothing\).
As this holds for all $w\in \Val(W)$, we arrive at
\(X\orthF (Y,Z)\given W\),
which is exactly the conclusion of the composition axiom.
\end{proof}

The remaining four axioms can be shown directly in a straightforward manner, but we instead use the soundness and completeness theorem for a short proof.

\compositionalsemigraphoid*
\begin{proof}
We need to show axioms 1--4 and 6 in \Cref{tab:semigraphoid}.
Axiom~6 (composition) is true by \Cref{lemma:composition}.
Moreover, Axioms 1--4 follow directly from \Cref{thrm:soundcomplete} and the fact that statistical independence is a semigraphoid \cite{pearl1988probabilistic}.
\end{proof}

\subsection{Distribution over DAG and corresponding FSM}
\label{sec:proofdistributionconstruction}

Here we provide a proof for \Cref{lemma:taubijective} which  establishes the relationship between a DAG $G=(V,E,\Val)$ and the FSM $\FSMG=(\Omg, \Xb)$ constructed from $G$. 

First, we prove a preparatory lemma.
\begin{lemma}
\label{lemma:pomg_preparatory}
Let $G=(V,E,\Val)$ be a DAG, let $\FSMG=(\Omg, \Xb)$ be the factored space model constructed from $G$. Let $\Pomg$ factorize over $\Omg$.
Then we have
\[\Pomg(X=x)=\prod_\vinV\Pomg(U_\vxpav=x_v)\,.\]
\end{lemma}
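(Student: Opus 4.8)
The plan is to rewrite the event $\{X = x\}$ as an intersection of single-coordinate constraints on the background variables and then invoke the factorization of $\Pomg$. For a fixed target $x = (x_v)_\vinV \in \Val$, I would single out for each node $v$ the index $i_v \coloneqq \vxpav \in I_v$, namely the coordinate of $\Omg$ that carries the value of $v$ under the \emph{correct} parent assignment $\xpav$. Since every $i_v$ has first component $v$, these indices are pairwise distinct across $\vinV$, which will be essential for the product to come out right.

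The key step is the event identity
\[
\{X = x\} = \bigcap_\vinV \{U_{i_v} = x_v\}\,.
\]
First I would prove the inclusion ``$\subseteq$'': if $X(\omg) = x$, then in particular $X_u(\omg) = x_u$ for all $u \in \pav$, so $\Xpav(\omg) = \xpav$, and hence by \eqref{eq:def-graph-X} we get $X_v(\omg) = \omg_{\vxpav} = U_{i_v}(\omg)$, which equals $x_v$. For the reverse inclusion ``$\supseteq$'', I would argue by induction along a topological order of $G$, which exists because $G$ is acyclic. Assuming $U_{i_v}(\omg) = x_v$ for all $v$, and assuming inductively that $X_u(\omg) = x_u$ for every parent $u \in \pav$, the recursion \eqref{eq:def-graph-X} gives $X_v(\omg) = \omg_{(v, \Xpav(\omg))} = \omg_{\vxpav} = x_v$, which closes the induction.

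Finally, since $\Pomg$ factorizes over $\Omg$, the background variables $(U_i)_\iinI$ are mutually independent, as noted after \Cref{def:factorizes}. Because the constraints in the intersection above involve the pairwise distinct coordinates $i_v$, mutual independence yields
\[
\Pomg(X = x) = \prod_\vinV \Pomg(U_{i_v} = x_v) = \prod_\vinV \Pomg(U_\vxpav = x_v)\,,
\]
which is exactly the claimed identity.

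I expect the main obstacle to be the backward inclusion of the event identity, where care is needed to set up the topological induction correctly and to confirm that the value stored at coordinate $i_v$ is precisely what the recursion reads off once all parents already carry their target values. The distinctness of the indices $i_v$ is the other point that must be checked explicitly, since without it the factorization step would not collapse into a clean product over $\vinV$.
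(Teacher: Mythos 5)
Your proof is correct and takes essentially the same approach as the paper's: both establish the event identity $\{X=x\}=\bigcap_\vinV\{U_\vxpav=x_v\}$ via the two inclusions, with the backward inclusion proved by induction along the DAG, and then conclude by mutual independence of the background variables under the factorizing distribution $\Pomg$. The only minor difference is that you explicitly check that the indices $\vxpav$ are pairwise distinct across $\vinV$, a point the paper leaves implicit.
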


\begin{proof}
We claim that for any fixed $x \in \Val$, we have
$\{X=x\} = \{\omega \in \Omega: \forall v \in V: U_{v, \xpav} (\omega) = x_v \}$. We show the inclusion in both directions.

'$\subseteq$':
Let $\omega \in \Omega$, $\vinV$, and let $X(\omega) = x$. Recall that $X$ is defined in~\eqref{eq:def-graph-X} as $X_v(\omg)\coloneqq \omg_{(v, \Xpav(\omg))}$, so we have that $x_v = X_v (\omega) = U_{v,\xpav} (\omega)$.
Therefore, $\omega \in \{\omega \in \Omega: \forall v \in V: U_{v, \xpav}(\omg) = x_v \}$.

'$\supseteq$':
Let $\omega \in \Omega$ such that $\forall v \in V: U_{v, \xpav}(\omg) = x_v$.
We show that $X_v (\omega)=x_v$ for all $v \in V$ by structural induction over $G$. If $v$ is a root node, then $\pa(v)=\nothing$, and by \eqref{eq:def-graph-X} we have $X_v(\omg)=\omg_{(v,())}=U_{v,()}(\omg)=x_v$. Induction step: Let $v \in V$ be arbitrary and $X_{v'} (\omega)=x_{v'}$ for all ${v'} \in \pa(v)$.
Then, $X_v (\omega) = U_{v,\Xpav(\omega)}(\omega) = U_\vxpav (\omega) = x_v$.

We conclude
$
P^\Omega (X=x)
= P^\Omega (\forall v \in V : U_{v,\xpav} = x_v)
= \prod_{v \in V} P^\Omega (U_{v,\xpav} = x_v)
$.
\end{proof}

Now we prove \Cref{lemma:taubijective}, which we restate here for convenience.

\taubijective*

\begin{proof}
  First we must establish that $\tau$ is a function from $\distributionsF(\Omg)\to \Delta^*(G)$.
  Let $P^\Omg\in \distributionsF(\Omg)$ and $P\coloneqq\tau(P^\Omg)$, we claim that $P$ factorizes over~$G$.
  Indeed, suppose for convenience that $1,2,\dots,n\in V$ is a topological ordering of the vertices of~$G$.
  Then for all $x\in\Val$, we have the following:
  \begin{align*}
    P(x)
    &=P^\Omg(X=x)=P^\Omg(\bigwedge_{v=1}^n X_v=x_v)=\prod_{v=1}^n P^\Omg(X_v=x_v\given X_1=x_1 \wedge\dots\wedge X_{v-1}=x_{v-1})\,.\\
  \intertext{Here, the last equality follows from the definition of conditional probability.
  Recall from \eqref{eq:def-graph-X} that~$X$ is defined via $X_v(\omg)=\omg_{(v,\Xpav(\omg))}$. Since $1,\dots,n$ is a topological ordering of $V$, by conditioning on $X_1=x_1\wedge\dots\wedge X_{v-1}=x_{v-1}$, we have $X_v(\omg)=\omg_{(v,\xpav)}$. Thus, we can continue the calculation as follows:}
    &=\prod_{v=1}^n P^\Omg(X_v=x_v\given X_{\pav}=x_{\pa(v)})=\prod_{v\in V} P(x_v\given x_{\pav})\,.
  \end{align*}
  The first equality follows by inductively applying~\eqref{eq:def-graph-X}: $X_v$ is a deterministic function of $X_{\pa(v)}$ and the values $\omg_{(v,\xpav)}$ alone. Therefore, by induction on~$u\in V= \{1,\dots,n\}$, for all $u\in V$, we have that $X_u$ is a function of the entries $\omega_{(u',x_\pa(u')}$ for $u
'\leq u$. Under the conditioning $X_{\pa(v)}=x_{\pa(v)}$, this shows that $X_v$ is conditionally independent in $\Pomg$ from all $X_u$ with $u<v$ and $u\not\in\pa(v)$. The ultimate equality follows from the definition of $P$. The equation shows that $P$ factorizes over $G$, and thus $P\in \distributions^*(G)$.

  It remains to argue that $\tau$ is bijective.
  To do so, it suffices to show that the given function $\tau^{-1}$ is a function from $\Delta^*(G)\to\distributionsF(\Omg)$ and that $\tau^{-1}$ is indeed the inverse of $\tau$.
  We first observe that the given function $\tau^{-1}$ is a function from $\Delta^*(G)\to\distributionsF(\Omg)$, because $\tau^{-1}(P)$ is by definition a product of distributions over $\Omg_i$ for all $i\in I$.
  Finally, we must show that $\tau^{-1}$ is the inverse of $\tau$, so let $P^\Omg\in\distributionsF(\Omg)$ and $P \in\distributions^*(G)$.
  We claim that $\tau^{-1}(\tau(\Pomg))=P^\Omg$ and $\tau(\tau^{-1}(P)=P$ holds.
  Indeed, for all $\omg\in\Omg$, we have the following:
  \begin{align*}
    \tau^{-1}(\tau(\Pomg))(\omg)
    &=
    \prod_{(v,x_{\pa(v)})\in I} \tau(\Pomg)(\omg_{v,x_{\pa(v)}}\given x_{\pa(v)})
    &\text{(by definition of $\tau^{-1}$)}
    \\
    &=
    \prod_{(v,x_{\pa(v)})\in I} P^{\Omg}(X_v=\omg_{v,x_{\pa(v)}}\given X_{\pa(v)}=x_{\pa(v)})
    &\text{(by definition of $\tau$)}
    \\
    &=
    \prod_{(v,x_{\pa(v)})\in I} P^{\Omg}(U_{v,X_{\pa(v)}(\omg)}=\omg_{v,x_{\pa(v)}}\given X_{\pa(v)}=x_{\pa(v)})
    &\text{(by definition of $X_v$)}
    \\
    &=
    \prod_{(v,x_{\pa(v)})\in I} P^{\Omg}(U_{v,x_{\pa(v)}(\omg)}=\omg_{v,x_{\pa(v)}})
    &\text{(by independence)}
    \\
    &=\prod_{(v,x_{\pa(v)})\in I} P^{\Omg}(\omg_{v,x_{\pa(v)}})=P^\Omg(\omg)\,.
    &\text{(since $P^\Omg$ factorizes)}
  \end{align*}
Further, we have
  \begin{align*}
    \tau(\tau^{-1}(P))(x)
    &=\tau^{-1}(P)(X=x)
    &\text{(by definition of $\tau$)}
    \\
    &= \prod_\vinV \tau^{-1}(P)(U_\vxpav=x_{v})&\text{(by \Cref{lemma:pomg_preparatory})}\\ 
    &= \prod_\vinV \tau^{-1}(P)(U_\vxpav=x_{v} \given X_\pav = \xpav) &\text{(by independence)}\\ 
    &= \prod_\vinV \tau^{-1}(P)(X_{v} =x_{v} \given X_\pav = \xpav) &\text{(by definition)}\\ 
    &= \prod_\vinV P(x_v\given \xpav)&\text{(by definition of $\tau^{-1}$)}\\
    &= P(x) &\text{(since $P$ factorizes)}
    \end{align*}
  Thus, $\tau$ is a bijective function with inverse $\tau^{-1}$. This concludes the proof. 
\end{proof}

\subsection{Equivalence of d-separation and structural independence for nodes}
\label{sec:proof-d-sep-str-indep}

Here we prove \Cref{prop:dseparation-structuralindependence}, which states that d-separation for nodes in a graph $G$ holds if and only if the corresponding variables are structurally independent in the FSM $\FSMG$ constructed from $G$. This proof is an application of \Cref{thrm:soundcomplete}, but it is also possible to prove \Cref{prop:dseparation-structuralindependence} directly without \Cref{thrm:soundcomplete}.

\dseparationstructuralindependence*
\begin{proof}
    By \Cref{lemma:taubijective}, for any distribution $P$ which factorizes over $G$, there is a $\Pomg$ over $\Omg$ with $P(x)=\Pomg(\Xb=x)$.
\Cref{lemma:taubijective} also implies that for any distribution $\Pomg$ over $\Omg$ that factorizes over $\FSMG$ there is a distribution $P$ which factorizes over $G$ with $P(x)=\Pomg(\Xb=x)$.
Therefore, with \Cref{def:conditionalindependence}, it follows that for  all values $x_{V_1}, x_{V_2}, x_{V_3}\in \Val_{V_1}\times \Val_{V_1}\times \Val_{V_3}$
\begin{equation}
\begin{aligned}
\label{eq:distequivalence}
    &\text{$x_{V_1} \indep^{P} x_{V_2} \given x_{V_3}$
     for all distributions $P$ that factorize over $G$}\\
     &\hspace{4cm}\iff\\
    &\text{$x_{V_1} \indep^{\Pomg} x_{V_2} \given x_{V_3}$  for all distributions $\Pomg$ that factorize over $\Omg$.}
\end{aligned}
\end{equation}
We have that
\begin{align*}
&\text{$V_1$~and~$V_2$ are d-separated given $V_3$ in $G$.}\\
\overset{(i)}{\iff}&\text{$X_{V_1} \indep^{P} X_{V_2} \given X_{V_3}$ for all distributions $P$ that factorize over $G$.} \\
\overset{(ii)}{\iff} &\text{$X_{V_1} \indep^{\Pomg} X_{V_2} \given X_{V_3}$ for all distributions $\Pomg$ that factorize over $\FSMG$.}\\
\overset{(iii)}{\iff} &\text{$X_{V_1} \orthF X_{V_2} \given X_{V_3}$.}
\end{align*}
Wherein $(i)$ is true by the soundness and completeness of d-separation \cite{koller2009probabilistic}, $(ii)$ follows from \eqref{eq:distequivalence}, and $(iii)$ is true by  \Cref{thrm:soundcomplete}.
This concludes the proof.
\end{proof}
\subsection{Perfect maps of DAGs and factored spaces}\label{sec:proofperfectmap}
Here we prove \Cref{prop:perfectmap}, which states that if there is a DAG that is a perfect map of a distribution $P$, then there is a factored space model that is also a perfect map of $P$, but not vice versa.

\begin{proof}[Proof of \Cref{prop:perfectmap}]
    To prove 1., let $G$ be a DAG that is a perfect map of $P$. We want to prove that $\FSMG=(\Omg, X)$ is a perfect map of $P$ with regard to $X$. Let $X_{V_1},X_{V_2}, X_{V_3} \subseteq X$ be arbitrary. Then, by \Cref{prop:dseparation-structuralindependence}, $X_{V_1}$ and $X_{V_2}$ are structurally independent given $X_{V_3}$ in~$\FS$ if and only if they are d-separated in $G$. Since $G$ is a perfect map, this is equivalent to $X_{V_1}$ and $X_{V_2}$ being independent given $X_{V_3}$.

    To prove 2., consider the following counterexample. Let $\Omg=\Omg_1=\{0, 1, 2\}$, let $P$ be an arbitrary distribution over $\Omg$, and let $X=(X_1,X_2)$ with $X_1=U_1$, and $X_2= [U_1>0]$. Since $X_2$ is a deterministic function of $X_1$, we have the following independence and dependence relations:\\

\begin{tabular}{@{}lll@{}}
nontrivial & trivial independencies & trivial dependencies\\
         $X_2\indep X_2\given X_1$&$X_1\indep X_1\given X_1$ & $X_1\not\indep X_1$ \\
         $X_1\not\indep X_1\given X_2$&$X_2\indep X_2\given X_2$ &$X_2\not\indep X_2$ \\
         $X_1\not\indep X_2$&$X_1\indep X_1\given \{X_1, X_2\}$ &$X_1\not\indep \{X_1,X_2\}$\\
         &$X_2\indep X_2\given \{X_1, X_2\}$  & $X_2\not\indep \{X_1,X_2\}\,,$\\
    \end{tabular}\\\\
    as well as the independencies which follow directly from the listed independencies.
    To see that $\FSModel=(\Omg,X)$ is a perfect map of $P$, note that the histories are $\history(X_1)=\history(X_2)=\{1\}$, $\history(X_1\given X_2)=\{1\}$, and $\history(X_2\given X_1)=\nothing$. One an easily verify that the above independencies correspond to the structural independencies. In particular $X_2\orthF X_2\given X_1$. However, there is no graph with nodes $\{v_1,v_2\}$ such that $v_2$ is d-separated from itself given $v_1$, since the path from $v_2$ to itself has zero edges and can only be blocked by itself.
\end{proof}
For a more complex example when there is no graph that is a perfect map for a distribution, see \Cref{sec:background}.

\section{Proofs for Section \ref{sec:soundcomplete}} 
\label{app:soundcompleteproof}

Throughout this section, let $\Omg=\timesOmg$ be a factored space. Let us state some standard definitions.
\begin{definition}[Outer product]\label{def:outer-product} Let $J,K\subseteq I$ be disjoint. Let $P_J$ and $P_K$ be distributions over $\Omg_J$, and $\Omg_K$, respectively. Then the \emph{outer product} of $P_J$ and $P_K$ is the distribution $P_J\otimes P_K$ over $\Omg_{J\cup K}$ with $(P_J\otimes P_K)(\alpha)\coloneq P_J(\alpha_J)P_K(\alpha_K)$ for all $\alpha\in\Omg_{J\cup K}$.
\end{definition}
\begin{definition}[Marginal distribution]\label{def:marginal}
Let $P$ be an arbitrary distribution over $\Omg$, and let $J\subseteq I$.
Then the \emph{marginal distribution} of $P$ in $J$ is given by $P_J = P \compose U_J^{-1}$.
For $\iinI$, we write $P_i\coloneq P_{\{i\}}$.
\end{definition}

Note that if $P$ factorizes over $\Omg$, then it is the outer product of its marginal distributions in $i\in I$. That is, $P=\bigotimes_\iinI P_i$. In that case, the marginal distribution is $P_J=\bigotimes_{i\in J}P_i$.
For a finite set~$S$, we also write $\distributions(S)$ for the set of all distributions on~$S$.
Then the set $\distributionsF(\Omg)$ of all distributions that factorize over $\Omg$ satisfies $\distributionsF(\Omg)= \{\bigotimes_\iinI  P_i \mid \forall i\in I\colon P_i \in \distributions(\Omg_i)\}$.

\subsection{Proof of soundness for events}
\label{sec:soundevent}
In this section, we prove \Cref{lemma:soundevent}, the soundness of structural independence for events.
We first characterize derived variables (\Cref{def:derived}) in an alternative way as follows.
\begin{lemma}
  \label{lemma:characterization_derived_variables}
  Let $X \colon \Omg \to \Val(X)$ and $Y \colon \Omg \to \Val(Y)$ be two random variables and $C \subseteq \Omg$.
  Then the following statements are equivalent:
  \begin{enumerate}[label=(\roman*),leftmargin=*]
     \item\label{item:derived_1} $X \functionTo_C Y$.
     \item\label{item:derived_2} For all $\omg, \omg' \in C$, we have that $X(\omg) = X(\omg')$ implies $Y(\omg) = Y(\omg')$.
  \end{enumerate}
\end{lemma}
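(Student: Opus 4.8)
The plan is to prove the two implications of the equivalence separately; both are short, and the only real content lies in constructing a witnessing function for the reverse direction.

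First I would dispatch \ref{item:derived_1} $\Rightarrow$ \ref{item:derived_2}, which is immediate. Assuming $X\functionTo_C Y$, \Cref{def:derived} supplies a function $f\colon\Val(X)\to\Val(Y)$ with $Y(\omg)=f(X(\omg))$ for all $\omg\in C$. Then for any $\omg,\omg'\in C$ with $X(\omg)=X(\omg')$ I simply apply $f$ to both sides to obtain $Y(\omg)=f(X(\omg))=f(X(\omg'))=Y(\omg')$, which is exactly \ref{item:derived_2}.

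For the reverse direction \ref{item:derived_2} $\Rightarrow$ \ref{item:derived_1}, the plan is to build $f\colon\Val(X)\to\Val(Y)$ explicitly. I would set $f(x)\coloneq Y(\omg)$ for any outcome $\omg\in C$ with $X(\omg)=x$, and set $f(x)$ to an arbitrary fixed element of $\Val(Y)$ for those $x\in\Val(X)$ not attained by $X$ on $C$. The one step that needs care---and the only place the hypothesis \ref{item:derived_2} enters---is well-definedness: if $\omg,\omg'\in C$ both satisfy $X(\omg)=X(\omg')=x$, then \ref{item:derived_2} forces $Y(\omg)=Y(\omg')$, so the value $f(x)$ does not depend on the chosen representative. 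Once $f$ is well defined, the identity $Y(\omg)=f(X(\omg))$ holds for every $\omg\in C$ by construction, which is precisely $X\functionTo_C Y$.

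The main obstacle, such as it is, is merely recognizing that \ref{item:derived_2} is exactly the consistency condition guaranteeing that the assignment $X(\omg)\mapsto Y(\omg)$ defines a genuine function on the image of $X$ restricted to $C$. I would also flag the degenerate cases for completeness: if $C=\nothing$ the condition defining $X\functionTo_C Y$ is vacuous, and $\Val(Y)$ is nonempty whenever $\Omg\neq\nothing$, so a (necessarily arbitrary) $f$ still exists; the case $\Omg=\nothing$ is trivial.
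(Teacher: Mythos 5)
Your proposal is correct and follows essentially the same route as the paper's proof: the forward direction by directly applying the witnessing function $f$, and the reverse direction by constructing $f$ on the image of $X$ restricted to $C$, using \ref{item:derived_2} exactly as the well-definedness condition, and setting $f$ arbitrarily off that image. Your additional remarks on the degenerate cases ($C=\nothing$, nonemptiness of $\Val(Y)$) go slightly beyond the paper's proof but change nothing of substance.
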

\begin{proof}
  \ref{item:derived_1} $\implies$ \ref{item:derived_2}:
  By $X \functionTo_C Y$, there is a function $f \colon \Val(X) \to \Val(Y)$ with $Y(\omg) = f(X(\omg))$ for all $\omg \in C$.
  For all $\omg, \omg' \in C$,
  if $X(\omg) = X(\omg')$, then $Y(\omg) = f(X(\omg)) = f(X(\omg')) = Y(\omg')$, so \ref{item:derived_2} holds.

  \ref{item:derived_2} $\implies$ \ref{item:derived_1}:
  We construct $f \colon \Val(X) \to \Val(Y)$ as follows:
  For $x \in \Val(X)$, if there exists $\omg \in C$ with $X(\omg) = x$, define $f(x) \coloneqq Y(\omg)$.
  By \ref{item:derived_2}, $f(x)$ does not depend on the choice of $\omg \in C$ with $X(\omg) = x$.
  Let $f(x)$ be arbitrary if there is no $\omg \in C$ with $X(\omg) = x$.
  By construction, we conclude $Y(\omg) = f(X(\omg))$ for all $\omg \in C$, and thus \ref{item:derived_1} holds.
\end{proof}

Recall from \Cref{def:history} that $J$ generates $X$ given $C$ if $J$ disintegrates $C$ and $U_J \functionTo_C X$ holds.
We naturally extend this definition to events~$A\subseteq\Omg$ in that $J$ generates $A$ given~$C$ if $J$ generates $1_A$ given~$C$, where $1_A$ is the indicator random variable that is $1$ for all $\omg\in A$ and $0$ otherwise.
In the following lemma, we characterize \Cref{def:history} differently.

\begin{lemma}[Alternative characterization of generation]
  \label{lemma:characterization_event_generation}
  Let $\FS = \timesOmg$ be a factored space, let $A, C \subseteq \Omg$, let $J \subseteq I$ be a set with complement $\comp{J} = I \setminus J$, and suppose that $J$ disintegrates $C$.
  The following statements are equivalent:
  \begin{enumerate}[label=(\roman*),leftmargin=*]
    \item\label{item:gen-1} $J$ generates $A$ given $C$.
    \item\label{item:gen-2} For all $\omg, \omg' \in C$ with $\omg_J = \omg_J'$, we have $\omg \in A$ if and only if $\omg' \in A$.
    \item\label{item:gen-3} $A \cap C = (A \cap C)_J \times C_{\comp{J}}$.
  \end{enumerate}
  Moreover, the history $\history{A \given C}$ is the smallest set $J$ that disintegrates $C$ and that satisfies $A \cap C = (A \cap C)_J \times C_{\comp{J}}$.
\end{lemma}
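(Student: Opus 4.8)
The plan is to establish the three equivalences by reducing (i)$\iff$(ii) to the already-proven characterization of derived variables, then proving the purely set-theoretic equivalence (ii)$\iff$(iii) by hand; the ``moreover'' claim will follow by combining (i)$\iff$(iii) with \Cref{lemma:historygenerates}.

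First I would dispense with (i)$\iff$(ii). Since the lemma already assumes that $J$ disintegrates $C$, the disintegration clause in the definition of generation (\Cref{def:history}) is automatically met, so statement (i) reduces to the single condition $U_J \functionTo_C 1_A$. Applying \Cref{lemma:characterization_derived_variables} with $X \coloneq U_J$ and $Y \coloneq 1_A$, and using that $U_J(\omg) = \omg_J$ while ``$1_A(\omg) = 1_A(\omg')$'' means precisely ``$\omg \in A$ iff $\omg' \in A$'', this condition is exactly statement (ii). Hence (i)$\iff$(ii) is immediate.

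The combinatorial core is (ii)$\iff$(iii). I would first record the inclusion $A \cap C \subseteq (A \cap C)_J \times C_{\comp{J}}$, which holds unconditionally: for $\omg \in A \cap C$ we have $\omg_J \in (A \cap C)_J$ and $\omg_{\comp{J}} \in C_{\comp{J}}$, while $\omg = \omg_J \merge \omg_{\comp{J}}$ recovers $\omg$. Consequently (iii) is equivalent to the reverse inclusion $(A \cap C)_J \times C_{\comp{J}} \subseteq A \cap C$, so it suffices to show that this reverse inclusion is equivalent to (ii). For (ii)$\implies$(iii), I would take any $\gamma$ in the product, choose a witness $\omg \in A \cap C$ with $\omg_J = \gamma_J$, invoke disintegration $C = C_J \times C_{\comp{J}}$ to conclude $\gamma \in C$ (this is the one place where the hypothesis is genuinely used), and then apply (ii) to transfer membership in $A$ from $\omg$ to $\gamma$, giving $\gamma \in A \cap C$. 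For the converse (iii)$\implies$(ii), given $\omg, \omg' \in C$ with $\omg_J = \omg'_J$ and $\omg \in A$, I would note $\omg'_J = \omg_J \in (A \cap C)_J$ and $\omg'_{\comp{J}} \in C_{\comp{J}}$, so $\omg' \in (A \cap C)_J \times C_{\comp{J}} = A \cap C$ by (iii), whence $\omg' \in A$; symmetry then closes the biconditional.

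Finally, for the ``moreover'' assertion I would combine the equivalence (i)$\iff$(iii), which holds for every $J$ that disintegrates $C$, with \Cref{lemma:historygenerates}. That lemma identifies $\history(A \given C) = \history(1_A \given C)$ as the unique minimal set generating $1_A$ given $C$; since the sets generating $A$ given $C$ are exactly those $J$ that disintegrate $C$ and satisfy (iii), the minimal generating set is precisely the smallest such $J$, as claimed. I expect no serious obstacle: the whole argument is routine manipulation of projections and merges, and the only point requiring care is tracking that disintegration of $C$ is invoked exactly once, in the forward inclusion of (ii)$\implies$(iii).
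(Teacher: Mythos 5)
Your proposal is correct and takes essentially the same approach as the paper: both rely on \Cref{lemma:characterization_derived_variables} to translate generation into the pointwise condition (ii), use the unconditional inclusion $A \cap C \subseteq (A\cap C)_J \times C_{\comp{J}}$ together with disintegration for the equivalence with (iii), and invoke \Cref{lemma:historygenerates} for the ``moreover'' claim. The only difference is cosmetic---you prove the two biconditionals (i)$\iff$(ii) and (ii)$\iff$(iii) separately, whereas the paper closes a cycle (i)$\implies$(ii)$\implies$(iii)$\implies$(i)---but the underlying steps are identical.
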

\begin{proof}
  \ref{item:gen-1} $\implies$ \ref{item:gen-2}:
  Let $\omg, \omg' \in C$ with $\omg_J = \omg_J'$.
  This means $U_J(\omg) = U_J(\omg')$, and by \ref{item:gen-1} we have $U_J \functionTo_C 1_A$.
  By Lemma~\ref{lemma:characterization_derived_variables}, we obtain $1_A(\omg) = 1_A(\omg')$. Thus, \ref{item:gen-2} holds.

  \ref{item:gen-2} $\implies$ \ref{item:gen-3}:
  The inclusion $A \cap C \subseteq (A \cap C)_J \times C_{\comp{J}}$ holds by definition of the Cartesian product.
  For the other inclusion, we have
  \(
    (A \cap C)_J \times C_{\comp{J}} \subseteq C_J \times C_{\comp{J}} = C
  \)
  since $J$ disintegrates $C$.
  Thus, it remains to show that $(A \cap C)_J \times C_{\comp{J}} \subseteq A$ holds.
  Let $\omg \in (A \cap C)_J \times C_{\comp{J}}$.
  Then $\omg = b_J \merge c_{\comp{J}}$ with $b \in A \cap C$ and $c \in C$.
  We have $b, \omg \in C$ and $b_J = \omg_J$, which by \ref{item:gen-2} implies that $b \in A$ holds if and only if $\omg \in A$ holds.
  Since $b \in A$, we conclude $\omg \in A$, which was to be shown. This proves \ref{item:gen-3}.

  \ref{item:gen-3} $\implies$ \ref{item:gen-1}:
  Let $\omg, \omg' \in C$ and assume $U_J(\omg) = U_J(\omg')$.
  In order to establish \ref{item:gen-1}, it suffices to show that $1_A(\omg) = 1_A(\omg')$ holds, since Lemma~\ref{lemma:characterization_derived_variables} then implies $U_J \functionTo_C 1_A$ as required.
  For reasons of symmetry, it is enough to show that $\omg \in A$ implies $\omg' \in A$.
  Assume $\omg \in A$, so $\omg \in A \cap C$.
  We obtain
  \begin{equation*}
    \omg' = \omg'_J \merge \omg'_{\comp{J}} = \omg_J \merge \omg'_{\comp{J}} \in (A \cap C)_J \times C_{\comp{J}} = A \cap C,
  \end{equation*}
  wherein the last step uses \ref{item:gen-3}.
  This shows $\omg' \in A$.
  Thus, \ref{item:gen-1} holds.

  Finally, recall from~\Cref{lemma:historygenerates} that the history $\history(A \given C)$ is the smallest set $J$ that generates $A$ given $C$. Together with \ref{item:gen-1} $\Longrightarrow$ \ref{item:gen-3}, this proves the final claim.
\end{proof}

We are now ready to prove the soundness of structural independence for events.
\soundevent*
\begin{proof}
Suppose that $\history(A\given C)\cap \history(B\given C)=\nothing$ holds.
Let $P \in \distributionsF{\factors}$ be an arbitrary distribution that factorizes over~$\Omg$. We need to prove $A\indep^PB\given C$.

Let $J \coloneqq \history{A \given C}$.
By $\history(A\given C)\cap \history(B\given C)=\nothing$, we have $\history{B \given C} \subseteq \comp{J}=I\setminus J$.
Since $J$ generates $A$ given $C$ by \Cref{lemma:historygenerates}, \Cref{lemma:characterization_event_generation} implies
  \begin{equation*}
    A \cap C = (A \cap C)_J \times C_{\comp{J}}.
  \end{equation*}

  We now claim that $\comp{J}$ generates $B$ given $C$.
  First, $\comp{J}$ disintegrates $C$ since $J$ disintegrates $C$.
  Since $\history{B \given C} \subseteq \comp{J}$ holds and $\history{B \given C}$ generates $B$ given $C$, we obtain
  \(
    U_{\comp{J}} \functionTo_C U_{\history{B \given C}} \functionTo_C 1_B
  \), and so $\comp{J}$ generates $B$ given $C$.
  Thus, \Cref{lemma:characterization_event_generation} also implies
  \begin{equation*}
    B \cap C = (B \cap C)_{\comp{J}} \times C_{J} = C_J \times (B \cap C)_{\comp{J}}\,,
  \end{equation*}
  wherein we use commutativity of the Cartesian product over indexed families in the last step.
  Moreover, we observe the following identity:
  \begin{equation*}
  \begin{aligned}
    A \cap B \cap C &= (A \cap C) \cap (B \cap C)\\ &= ((A\cap C)_J\times C_\comp{J})\cap (C_J\times (B\cap C)_\comp{J}))\\
    &=((A\cap C)_J\cap C_J) \times (C_\comp{J}\cap(B\cap C)_\comp{J}))\\
    &= (A \cap C)_J \times (B \cap C)_{\comp{J}}.
  \end{aligned}
  \end{equation*}
  Finally, using that $P$ factorizes over the factors and the previous identities, we obtain
  \begin{align*}
    P(A \cap C) \cdot P(B \cap C) &= P\big( (A \cap C)_J \times C_{\comp{J}} \big) \cdot P \big( C_J \times (B \cap C)_{\comp{J}} \big) \\
    &= P_J\big( (A \cap C)_J \big) \cdot P_{\comp{J}}\big( C_{\comp{J}} \big) \cdot P_{J}(C_J) \cdot P_{\comp{J}}\big( (B \cap C)_{\comp{J}} \big) \\
    &= P\big( (A \cap C)_J \times (B \cap C)_{\comp{J}}\big) \cdot P\big( C_J \times C_{\comp{J}}  \big) \\
    &= P(A \cap B \cap C) \cdot P(C).
  \end{align*}
  This is exactly the definition of $A \indep^P B \given C$.
\end{proof}

\subsection{Local-to-global principle}
\label{sec:local_to_global}
Here we prove the local-to-global principle (\Cref{lemma:openset}). We first prove a simple preparatory lemma.

\begin{lemma}
  \label{lemma:obtain_polynomial}
  Let $\FS = \timesOmg$ be a factored space and $Q, P \in \distributionsF(\Omega)$.
  For $\lambda \in [0, 1]$, define $R^{\lambda}_i = (1 - \lambda) Q_i + \lambda P_i$ and $R^{\lambda} = \bigotimes_{i \in I} R^{\lambda}_i$.
  Let $A \subseteq \Omg$ be an event.
  Then the function $\lambda \mapsto R^{\lambda}(A)$ is a polynomial of degree at most $|I|$ in $\lambda$.
\end{lemma}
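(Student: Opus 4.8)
The plan is to expand $R^{\lambda}(A)$ as a finite sum over the outcomes in $A$ and to exploit the product structure of $R^{\lambda}$. Writing $R^{\lambda}(A) = \sum_{\omg \in A} R^{\lambda}(\omg)$, it suffices to show that each summand $\lambda \mapsto R^{\lambda}(\omg)$ is a polynomial of degree at most $|I|$, since a finite sum of polynomials of degree at most $|I|$ is again a polynomial of degree at most $|I|$.

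To analyze a single summand, I would use that $R^{\lambda}$ factorizes over $\Omg$ by construction, being the outer product $\bigotimes_{i\in I} R^{\lambda}_i$ of its marginals. Hence, for each outcome $\omg = (\omg_i)_{i\in I}$, \Cref{def:factorizes} applies termwise and gives
\[
  R^{\lambda}(\omg) = \prod_{i\in I} R^{\lambda}_i(\omg_i) = \prod_{i\in I} \big( (1-\lambda)\, Q_i(\omg_i) + \lambda\, P_i(\omg_i) \big).
\]
Each factor in this product is affine in $\lambda$, i.e. a polynomial of degree at most $1$, because $Q_i(\omg_i)$ and $P_i(\omg_i)$ are constants in $[0,1]$. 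A product of $|I|$ such affine factors is therefore a polynomial in $\lambda$ of degree at most $|I|$, which is exactly where the claimed degree bound comes from: it is the number of factors being multiplied.

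Combining these two observations completes the argument: $R^{\lambda}(\omg)$ has degree at most $|I|$ for every $\omg \in \Omg$, and summing over the finitely many $\omg \in A$ preserves this degree bound. I do not expect a genuine obstacle here, as the computation is routine once the product structure is in place; the only points that need care are to be explicit that the dependence on $\lambda$ enters solely through the affine marginals $R^{\lambda}_i$, and that $R^{\lambda}$ indeed factorizes so that \Cref{def:factorizes} can be invoked on each outcome.
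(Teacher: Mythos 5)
Your proposal is correct and follows essentially the same route as the paper: expand $R^{\lambda}(A)$ as a sum over outcomes in $A$, use the outer-product structure to write each summand as $\prod_{i\in I}\bigl((1-\lambda)Q_i(\omg_i)+\lambda P_i(\omg_i)\bigr)$, and observe that this is a product of $|I|$ affine factors, hence a polynomial of degree at most $|I|$. The paper's proof is just a more compressed version of this same computation, so there is nothing to add.
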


\begin{proof}
  We have
  \begin{align*}
    R^\lambda(A) &= \sum_{a \in A} R^{\lambda}(a)
    = \sum_{a \in A} \prod_{\iinI} R^{\lambda}_{i}(a_i)
    =  \sum_{a\in A}\prod_\iinI \Big( (1-\lambda)Q_i(a_i) + \lambda P_i(a_i) \Big).
  \end{align*}
  Thus, $R^\lambda(A)$ is a polynomial of degree at most $|I|$.
\end{proof}

Now we prove \Cref{lemma:openset}, the local-to-global principle, which states that if there is an $\epsilon$-ball $B$ of distributions $Q'$ such that $X\indep^{Q'}Y\given Z$ for all $Q'\in B$, then $X\indep^{P}Y\given Z$ for all $P\in\distributionsF(\Omg)$.

\begin{proof}[Proof of \Cref{lemma:openset}]
  Let $B\subseteq \distributionsF(\Omg)$ be the set of all $Q'\in\distributionsF(\Omg)$ with $d(Q, Q') < \epsilon$.
    Let $\epsilon$ be such that $X \indep^{Q'} Y \given Z$ for all $Q'\in B$. Let $P\in \distributionsF{\factors}$, $x\in\Val(X)$, $y\in\Val(Y)$, and $z\in\Val(Z)$ be arbitrary.
  We need to show $x \indep^{P} y \given z$, or equivalently $P(x, z)P(y, z) = P(x, y, z) P(z)$.

    With the parameter $\lambda\in[0,1]$, we now define a distribution $R^\lambda$ that interpolates between $Q$ and $P$ as follows:
    For all $i\in I$, we define $R^\lambda_i=(1-\lambda) Q_i+\lambda P_i$, and we let $R^\lambda=\bigotimes_\iinI  R^\lambda_i$.
    Then we have $R^0=Q$ and $R^1=P$.
    Note that the map $\lambda \mapsto R^{\lambda}$ is continuous.
    Thus, $R^\lambda\to Q$ as $\lambda\to 0^+$.
    As $B$ is an open set around~$Q$ and $R^{\lambda} \to 0$ as $\lambda \to 0^+$, there are then infinitely many $\lambda>0$ with $R^\lambda\in B$.
 
    Now, consider the function
    \begin{equation*}
      f(\lambda)=R^{\lambda} (x, z) R^{\lambda} (y, z) - R^{\lambda} (x, y , z) R^{\lambda} (z),
    \end{equation*}
    which by Lemma~\ref{lemma:obtain_polynomial} is a polynomial of degree at most $2 |I|$.
    For infinitely many $\lambda>0$, we have $R^\lambda\in B$, and thus $f(\lambda)=0$ by construction of $B$. Since $f$ is a polynomial with infinitely many roots, it must be identically zero.
    Thus
    \begin{equation*}
      P(x, z)P(y , z) - P(x , y , z) P(z) = f(1) = 0,
    \end{equation*}
    that is, $x \indep^{P} y \given z$.
    This concludes the proof.
\end{proof}

\subsection{Proof of completeness for events}
\label{sec:completeevents}
Here we prove the completeness of structural independence for events (\Cref{lemma:completeevent}).
We introduce further notation for this section:
\begin{itemize}
\item $\indepstar$: For $A, B, C \subseteq \Omg$, we write $A \indepstar B \given C$ if  $A \indep^P B \given C$ holds for all $P \in \distributionsF(\Omg)$.
  \item $\distributionsF_C(\Omg)$: For an event $C \subseteq \Omg$, let
  $\distributionsF_C(\Omg) \coloneqq \big\lbrace P \in \distributionsF(\Omg): P(C) > 0 \big\rbrace$. That is, the set of distributions~$P$ that factorize over $\FS$ and assign a positive probability to $C$.
  \item $\distributionsF_{C,i} (\Omg)$: For $i \in I$, let $\distributionsF_{C, i}(\Omg) \coloneqq
    \Big\lbrace(P,Q): P,Q \in \distributionsF_{C}(\Omg) \text{ and } P_{j} = Q_{j} \text{ for all $j\in I\setminus\{i\}$}\Big\rbrace$   denote the pairs of distributions in $\distributionsF_C(\Omg)$ that can only differ in $i$.
\end{itemize}
We aim to show that $A \indepstar B \given C$ implies $\history{A \given \cvar} \sect \history{B \given \cvar} = \nothing$.

A key ingredient to the proof is the concept of \emph{irrelevant} factors, which form the \emph{cohistory}.

\begin{definition}[irrelevant factors, cohistory]
  Let $A,\cvar \subseteq \factors$, and let $(P,Q) \in \distributionsF_{C, i}(\Omg)$.
  We say that $i\in I$ is \emph{$(P,Q)$-irrelevant} to $A$ given $C$ if $P(A \given \cvar) = Q(A \given \cvar)$. A factor $i \in I$ is (globally) \emph{irrelevant} to $A$ given $C$, if it is $(P,Q)$-irrelevant for all $(P,Q) \in \distributionsF_{C, i}(\Omg)$.
  The \emph{cohistory} of $A$ given $C$, denoted $\historyC(A \given \cvar)$ is the set of all $i\in I$ that are irrelevant to $A$ given $C$.
\end{definition}
We say $i$ is \emph{relevant} if it is not irrelevant, and \emph{$(P,Q)$-relevant} if it is not $(P,Q)$-irrelevant to $A$ given~$C$.
Intuitively, $i$ is relevant to $A$ given $C$, when changing $P$ only in the factor $i$ can change $P(A\given C)$.
Note that global irrelevance of $i$ implies the $(P,Q)$-irrelevance of $i$ for all $(P,Q)\in\distributionsF_{C,i}$, while global relevance only implies $(P,Q)$-relevance for at least one pair $(P,Q)\in \distributionsF_{C,i}$.

Recall that for a set $J \subseteq I$, we write $\comp{J} = I \setminus J$ for the complement of $J$ in $I$.
We will show the following two lemmas, which combined result in Lemma~\ref{lemma:completeevent}:

\begin{lemma}
    \label{lemma:completeness_cohistory}
    Let $A,B,\cvar \subseteq \factors$ with $A \indepstar B \given C$.
    Then $\historyC(A \given \cvar) \union \historyC(B \given \cvar) = I$.
\end{lemma}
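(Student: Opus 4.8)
The plan is to prove the contrapositive at the level of individual factors: I will show that no factor $i\in I$ can be relevant to both $A$ given $C$ and $B$ given $C$, so that every $i$ lies in $\historyC(A\given C)\union\historyC(B\given C)$, whence the union is all of $I$. Throughout I fix $i$ and write an arbitrary factorizing distribution as $P=p\otimes\mu$ with $p\in\distributions(\Omg_i)$ and $\mu\in\distributionsF(\Omg_{\comp i})$, where $\comp i=I\setminus\{i\}$. For $a\in\Omg_i$ I abbreviate the $\mu$-masses of the slices at $a$ by $\alpha_a=\mu((A\sect C)^a)$, $\beta_a=\mu((B\sect C)^a)$, $\gamma_a=\mu((A\sect B\sect C)^a)$ and $c_a=\mu(C^a)$, where $(\cdot)^a$ is the set of $\omega_{\comp i}$ whose merge with $a$ lies in the given event; each is a polynomial in the coordinates of $\mu$, and $\alpha_a\le c_a$.

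First I would extract two consequences of $A\indepstar B\given C$ from the identity of \Cref{def:conditionalindependence}, $P(A\sect C)P(B\sect C)=P(A\sect B\sect C)P(C)$, holding for every factorizing $P=p\otimes\mu$. Taking $p=\delta_a$ gives $\alpha_a\beta_a=\gamma_ac_a$ for all $a$; taking $p=\tfrac12(\delta_{a_1}+\delta_{a_2})$ and subtracting the two diagonal identities gives $\alpha_{a_1}\beta_{a_2}+\alpha_{a_2}\beta_{a_1}=\gamma_{a_1}c_{a_2}+\gamma_{a_2}c_{a_1}$. Writing $u_a=\alpha_a/c_a=P(A\given C)$ and $v_a=\beta_a/c_a=P(B\given C)$ for the conditionals under $\delta_a\otimes\mu$, these combine, for $a_1,a_2$ with $c_{a_1},c_{a_2}>0$, into $(u_{a_1}-u_{a_2})(v_{a_1}-v_{a_2})=0$. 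A short combinatorial step upgrades this pointwise ``OR'' to: for every fixed $\mu$, the map $a\mapsto u_a$ or the map $a\mapsto v_a$ is constant on the support $\{a:c_a>0\}$.

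Next I would phrase this algebraically in the parameters $\theta=(\mu_j)_{j\ne i}$ ranging over $\mathcal P=\timesbig_{j\ne i}\distributions(\Omg_j)$. For each pair set $f^A_{a_1a_2}=\alpha_{a_1}c_{a_2}-\alpha_{a_2}c_{a_1}$ and $f^B_{a_1a_2}=\beta_{a_1}c_{a_2}-\beta_{a_2}c_{a_1}$, which are polynomials in $\theta$; using $\alpha_a\le c_a$ (so $c_a=0$ forces $\alpha_a=0$), the common zero set $\comp{S_A}$ of all $f^A_{a_1a_2}$ is exactly $\{\mu:u\text{ constant on its support}\}$, and similarly $\comp{S_B}$. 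The previous paragraph then reads $\comp{S_A}\union\comp{S_B}=\mathcal P$. I would also verify that $i$ is relevant to $A$ given $C$ precisely when $S_A\ne\nothing$: if some $\mu$ has $u_{a_1}\ne u_{a_2}$ with $c_{a_1},c_{a_2}>0$, then $(\delta_{a_1}\otimes\mu,\ \delta_{a_2}\otimes\mu)\in\distributionsF_{C,i}(\Omg)$ witnesses relevance, while conversely any witnessing pair makes the ratio $p\mapsto P(A\given C)$ non-constant, hence $u$ non-constant.

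The crux is an irreducibility argument. Suppose $i$ were relevant to both $A$ and $B$, i.e.\ $S_A\ne\nothing$ and $S_B\ne\nothing$; fix $a_1^\ast,a_2^\ast$ with $f^A_{a_1^\ast a_2^\ast}\not\equiv0$ on $\mathcal P$. For any $b_1,b_2$ the product $f^A_{a_1^\ast a_2^\ast}\cdot f^B_{b_1b_2}$ vanishes at every $\theta\in\mathcal P$, since each $\theta$ lies in $\comp{S_A}$ (killing the first factor) or in $\comp{S_B}$ (killing the second). As $\mathcal P$ is a product of simplices, it has nonempty interior in the affine subspace it spans, on which the ring of polynomial functions is an integral domain and a polynomial vanishing on that interior vanishes identically; hence $f^A_{a_1^\ast a_2^\ast}\cdot f^B_{b_1b_2}\equiv0$, and since $f^A_{a_1^\ast a_2^\ast}\ne0$ in the domain, $f^B_{b_1b_2}\equiv0$ for all $b_1,b_2$, i.e.\ $S_B=\nothing$ --- a contradiction. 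Thus every $i$ is irrelevant to $A$ or to $B$, giving $\historyC(A\given C)\union\historyC(B\given C)=I$. I expect the main obstacle to be exactly this merging step: relevance to $A$ and relevance to $B$ are a priori witnessed by \emph{different} distributions $\mu$, and the non-convexity of $\distributionsF(\Omg)$ forbids naive interpolation, so the integral-domain/irreducibility argument is what turns the ``OR holds at every $\mu$'' statement into ``one alternative holds globally''; the degenerate slices $c_a=0$ also require care throughout.
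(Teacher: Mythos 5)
Your proposal is correct, and while its first half coincides with the paper's argument, your globalization step takes a genuinely different route. The identities you extract from $A\indepstar B\given C$ at $p=\delta_a$ and $p=\tfrac{1}{2}(\delta_{a_1}+\delta_{a_2})$ are exactly the paper's mutual exclusion principle (\Cref{lemma:exclusion_principle}) specialized to the pairs $(\delta_{a_1}\otimes\mu,\,\delta_{a_2}\otimes\mu)$: the paper mixes two distributions agreeing off factor $i$ into $R=(P+Q)/2$, cancels the diagonal terms using the independences for $P$ and $Q$, and factors the remainder into $\bigl(P(A\given C)-Q(A\given C)\bigr)\bigl(Q(B\given C)-P(B\given C)\bigr)=0$, which is your $(u_{a_1}-u_{a_2})(v_{a_1}-v_{a_2})=0$ up to sign. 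Where you diverge is in upgrading this pointwise ``OR'' to global irrelevance---the crux you correctly identify, since the witnesses for $A$-relevance and $B$-relevance are a priori different distributions. The paper does this analytically: from a witnessing pair for $A$ it gets, by continuity, an open set of witnessing pairs in $\distributionsF_{C,i}(\Omg)$, then interpolates factor-wise toward an arbitrary pair (with \Cref{lemma:C_positivity_preserved} guaranteeing $P^\lambda(C)>0$ along the path), and concludes via a univariate polynomial in $\lambda$ with infinitely many roots. You do it algebraically: your per-$\mu$ dichotomy (the unstated combinatorial step does hold---if $u(a_1)\ne u(a_2)$, then any $x$ has $u(x)$ differing from one of them, forcing $v(x)=v(a_1)=v(a_2)$) says the two algebraic sets $\comp{S_A}$ and $\comp{S_B}$ cover $\mathcal{P}$, and the integral-domain argument on the affine span of $\mathcal{P}$ then forces one whole family of polynomials to vanish identically. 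Your route buys a cleaner treatment of positivity (your delta pairs satisfy $P(C)=c_a>0$ by construction, so no analog of \Cref{lemma:C_positivity_preserved} is needed) and collapses the two-distribution bookkeeping of $\distributionsF_{C,i}(\Omg)$ into a single parameter $\mu$, making the structure transparent: relevance is non-vanishing of a polynomial family, and the hypothesis says two varieties cover the parameter space. The costs are the extra equivalence between the paper's pair-based relevance and $S_A\ne\nothing$ (which you sketch correctly, and which must be stated carefully because the conclusion ``$i$ irrelevant to $B$'' quantifies over all pairs), and the appeal to a Zariski-type vanishing fact on the affine span, whereas the paper stays entirely within elementary univariate polynomial arguments.
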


\begin{lemma}
    \label{lemma:cohistory_is_complement}
    Let $A,C \subseteq \factors$.
    We have $\historyC(A \given \cvar)=\comp{\history{A \given \cvar}}$.
\end{lemma}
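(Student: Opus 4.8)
The plan is to prove the two inclusions $\comp{\history(A \given C)} \subseteq \historyC(A \given C)$ and $\historyC(A \given C) \subseteq \comp{\history(A \given C)}$ separately, writing $H \coloneqq \history(A \given C)$ throughout. The first is the easy, soundness-flavoured direction, and the second is the hard, completeness-flavoured direction, where the real work lies.

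For $\comp H \subseteq \historyC(A \given C)$, I would fix $i \notin H$ and show that $i$ is irrelevant. By \Cref{lemma:historygenerates} and \Cref{lemma:characterization_event_generation}, $H$ disintegrates $C$ and $A \cap C = (A \cap C)_H \times C_{\comp H}$. Hence for every $P \in \distributionsF(\Omg)$ one computes $P(A \cap C) = P_H((A\cap C)_H)\, P_{\comp H}(C_{\comp H})$ and $P(C) = P_H(C_H)\, P_{\comp H}(C_{\comp H})$, so whenever $P(C)>0$ we get $P(A \given C) = P_H((A\cap C)_H)/P_H(C_H)$, which depends only on the marginals $(P_j)_{j \in H}$. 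Since $i \notin H$, every pair $(P,Q)\in\distributionsF_{C,i}(\Omg)$ has $P_j = Q_j$ for all $j \in H$, hence $P(A\given C)=Q(A\given C)$; thus $i$ is $(P,Q)$-irrelevant for all such pairs, i.e.\ $i \in \historyC(A\given C)$.

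For the reverse inclusion I would show, contrapositively, that every irrelevant $i$ lies outside $H$, by proving that the set $R \coloneqq \comp{\historyC(A\given C)}$ of relevant factors itself generates $A$ given $C$; minimality of the history (\Cref{lemma:historygenerates}) then forces $H \subseteq R$, which is exactly $\historyC(A\given C)\subseteq\comp H$. By \Cref{lemma:characterization_event_generation}, ``$R$ generates $A$ given $C$'' splits into a functional part $U_R \functionTo_C 1_A$ and a disintegration part $C = C_R \times C_{\comp R}$. The functional part I would reduce to the single-factor statement ``$i$ irrelevant $\implies U_{\comp{\{i\}}} \functionTo_C 1_A$'', proved by probing: if $U_{\comp{\{i\}}} \not\functionTo_C 1_A$ then, by \Cref{lemma:characterization_derived_variables}, there are $\beta \in \Omg_{\comp{\{i\}}}$ and $a,a' \in \Omg_i$ with $(a,\beta)\in A\cap C$ and $(a',\beta)\in C\setminus A$; writing $S^A_a, S^C_a \subseteq \Omg_{\comp{\{i\}}}$ for the slices of $A\cap C$ and $C$ over $\{U_i=a\}$, a full-support product distribution on $\Omg_{\comp{\{i\}}}$ concentrating its mass near $\beta$ makes the conditional ratios $P(S^A_a)/P(S^C_a)\to 1$ and $P(S^A_{a'})/P(S^C_{a'})\to 0$, so taking $P_i$ near $a$ versus near $a'$ produces a pair in $\distributionsF_{C,i}(\Omg)$ witnessing relevance.

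The hard part will be the disintegration part, i.e.\ upgrading the slice-level information carried by irrelevance into the product structure of $C$ (and of $A\cap C$) that $R$ needs in order to generate. The obstacle is that in the definition of generation the disintegration condition on $C$ and the functional condition on $A$ are coupled, and the relation $U_J\functionTo_C 1_A$ is \emph{not} closed under intersection of index sets unless the sets disintegrate $C$ (this is precisely why \Cref{lemma:generationintersection} relies on \Cref{lemma:disintegrationintersection}); so one cannot simply intersect the single-factor conclusions $U_{\comp{\{i\}}}\functionTo_C 1_A$ over $i\in\historyC(A\given C)$. Here I would deploy the \emph{full} strength of irrelevance: once $A\cap C$ is a union of $i$-fibres, irrelevance forces the conditional $P(A \given C\sect\{U_i=a\})$ to be independent of $a$ for \emph{all} full-support product distributions, and from this rigidity I would extract that the slices $S^C_a$ are mutually compatible, yielding $C=C_R\times C_{\comp R}$ and $A\cap C=(A\cap C)_R\times C_{\comp R}$ simultaneously. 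Concretely I expect to argue one factor at a time, reducing to the factored space $\Omg_J$ indexed by the current generating set $J$ (where $P(A\given C)=P_J((A\cap C)_J)/P_J(C_J)$ becomes a genuine conditional on $\Omg_J$) and showing that an irrelevant factor can be deleted from $J$ while preserving generation, then deleting all irrelevant factors to reach $R$. Verifying that this deletion preserves the disintegration structure, and not merely the functional structure, is the crux and is presumably the technical step on which \Cref{lemma:completeevent} is said to rest.
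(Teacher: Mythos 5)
Your first inclusion $\comp{\history(A\given C)}\subseteq\historyC(A\given C)$ is correct and coincides with the paper's own argument, and your probing construction for the single-factor functional statement (``$i$ irrelevant implies $U_{\comp{\{i\}}}\functionTo_C 1_A$'') is sound. The problem is the reverse inclusion, exactly where you locate it: it remains unproved in your proposal, and the concrete strategy you commit to for it is not just incomplete but false.

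You propose to show that $R\coloneqq\comp{\historyC(A\given C)}$ generates $A$ given $C$ by starting from a generating set $J$ and deleting irrelevant factors one at a time, preserving generation at each step. This deletion step fails. Take $\Omg=\Omg_1\times\Omg_2$ with $\Omg_1=\Omg_2=\{0,1\}$, $C=\{(0,0),(1,1)\}$, and $A=\Omg$. Every $P\in\distributionsF_C(\Omg)$ has $P(A\given C)=1$, so both factors are irrelevant and $\historyC(A\given C)=I$; correspondingly $\history(A\given C)=\nothing$, so the lemma itself holds here. But the only sets that generate $A$ given $C$ are $\nothing$ and $I$, because the singleton $\{1\}$ (equivalently $\{2\}$) does not disintegrate $C$: one has $C_{\{1\}}\times C_{\{2\}}=\Omg\neq C$. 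Hence from $J=I$ you cannot delete either irrelevant factor and retain generation; the intermediate set is simply not generating. The family of generating sets is not closed under removal of single irrelevant indices, so no one-factor-at-a-time induction on the index set can reach $R$. Your fallback remark---that irrelevance ``rigidity'' should make the slices of $C$ mutually compatible---is precisely the statement to be proved, not an argument for it, and you flag it yourself as the unresolved crux.

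The paper closes this gap by never manipulating the index set element by element. Factors are exchanged one at a time only inside \emph{distributions} (\Cref{lemma:progressive_application_irrelevance}), where positivity can be maintained by support arguments; this yields $A\indepstar U_J\given C$ for $J=\historyC(A\given C)$ (\Cref{lemma:intermediate_independence_claim}). That statement is then upgraded to $U_J\indepstar U_{\comp{J}}\given C$ (\Cref{lemma:cohistory_has_independence_property}) by invoking \Cref{lemma:completeness_cohistory}, the global mutual-exclusion result, which your proposal never uses in this direction. Disintegration $C=C_J\times C_{\comp{J}}$ then follows in one stroke for the whole cohistory: condition a strictly positive product distribution on $C$, factor the conditional across $J$ and $\comp{J}$ using that independence, and take supports (\Cref{lemma:cohistory_disintegrates}). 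Finally the identity $A\cap C=(A\cap C)_{\comp{J}}\times C_{J}$ is verified with delta distributions and \Cref{lemma:progressive_application_irrelevance}. The moral your counterexample above illustrates: irrelevance can be exploited factor-by-factor on the side of distributions, but on the side of index sets it must be used globally.
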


\begin{proof}[Proof of \Cref{lemma:completeevent}]
\label{proof:completeevent2}
  \Cref{lemma:completeness_cohistory} states that $A \indepstar B \given C$ implies $\historyC(A \given \cvar) \union \historyC(B \given \cvar) = I$.
  By \Cref{lemma:cohistory_is_complement},
  this is equivalent to $\history{A \given \cvar} \sect \history{B \given \cvar} = \nothing$.
\end{proof}

Thus, our task reduces to proving~\Cref{lemma:completeness_cohistory,lemma:cohistory_is_complement}, to which we devote two separate subsections.

\subsubsection{Proof of Lemma \ref{lemma:completeness_cohistory}}

Let $A, B, C \subseteq \Omg$.
and assume $A \indepstar B \given C$.
We want to show that $\historyC(A \given C) \cup \historyC(B \given C) = I$, which means that for all $i \in I$, $i$ is globally irrelevant to one of $A$ or $B$ given $C$. 
Before we show this, we show a weaker claim, the mutual exclusion principle. It states that for any given pair of distributions $(P,Q)\in\distributionsF_{C, i}(\Omg)$,  $i$ cannot be $(P,Q)$-relevant to both $A$ given $C$ and $B$ given $C$. However, the mutual exclusion principle makes no claim that $i$ is globally irrelevant to $A$ given $C$ or to $B$ given $C$.
We later strengthen the mutual exclusion principle to show the claim about global irrelevance.

\begin{lemma}[Mutual exclusion principle]
  \label{lemma:exclusion_principle}
  Let $A, B, C \subseteq \Omg$ with $A \indepstar B \given C$.
  Let $i \in I$, and let $(P, Q) \in \distributionsF_{C, i}(\Omg)$. Then $i$ cannot both be $(P,Q)$-relevant to $A$ given $C$ and $(P,Q)$-relevant to $B$ given $C$.
\end{lemma}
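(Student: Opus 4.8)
The plan is to reduce the statement to an elementary fact about affine (degree-one) polynomials. Fix $i\in I$ and a pair $(P,Q)\in\distributionsF_{C,i}(\Omg)$, so that $P$ and $Q$ share the same marginals $P_{I\setminus\{i\}}=Q_{I\setminus\{i\}}$ on every factor other than $i$. First I would interpolate purely in the $i$-th factor: for $\lambda\in[0,1]$ set $R^\lambda_i \coloneq (1-\lambda)P_i+\lambda Q_i$ and $R^\lambda \coloneq R^\lambda_i \otimes P_{I\setminus\{i\}}$. Each $R^\lambda$ factorizes over $\Omg$, and since $R^\lambda(C)=(1-\lambda)P(C)+\lambda Q(C)>0$ for $\lambda\in[0,1]$, we have $R^\lambda\in\distributionsF_C(\Omg)$.

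The key observation is that, because the remaining marginal $P_{I\setminus\{i\}}$ is held fixed, for any event $E\subseteq\Omg$ the value $R^\lambda(E)=\sum_{\omg\in E}R^\lambda_i(\omg_i)\,P_{I\setminus\{i\}}(\omg_{I\setminus\{i\}})$ is affine in $\lambda$. Write $N_1(\lambda),N_2(\lambda),N_3(\lambda),D(\lambda)$ for the affine functions $R^\lambda(A\cap C),\,R^\lambda(B\cap C),\,R^\lambda(A\cap B\cap C),\,R^\lambda(C)$, respectively. Since $A\indepstar B\given C$, the defining identity $R^\lambda(A\cap C)\,R^\lambda(B\cap C)=R^\lambda(A\cap B\cap C)\,R^\lambda(C)$ holds for every $\lambda\in[0,1]$; as both sides are polynomials of degree at most $2$ agreeing on an interval, we obtain the polynomial identity $N_1 N_2 = N_3 D$.

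Next I would restate relevance in these terms. Because $D(\lambda)>0$ at $\lambda\in\{0,1\}$, the quantities $P(A\given C)=N_1(0)/D(0)$ and $Q(A\given C)=N_1(1)/D(1)$ are well defined, and $i$ being $(P,Q)$-relevant to $A$ given $C$ means they differ; by contraposition this forces $N_1$ to not be a scalar multiple of $D$. Likewise $(P,Q)$-relevance to $B$ forces $N_2$ to not be a scalar multiple of $D$. Assuming for contradiction that both hold, I would examine $N_1 N_2 = N_3 D$ according to $\deg D$. If $D$ is a nonzero constant, comparing the $\lambda^2$-coefficients forces the leading coefficient of $N_1$ or of $N_2$ to vanish, making that factor constant and hence a scalar multiple of $D$. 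If $\deg D = 1$, its unique root is a root of $N_1 N_2$, hence of $N_1$ or of $N_2$; a polynomial of degree at most one sharing the root of the degree-one $D$ must be a scalar multiple of $D$. In either case $N_1$ or $N_2$ is a scalar multiple of $D$, contradicting relevance.

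I expect the main obstacle to be the bookkeeping around the degenerate case $\deg D = 0$ (that is, $P(C)=Q(C)$), which is exactly why the argument is split by the degree of $D$ rather than by naively dividing through by $D$. One must also take care to verify that interpolating only in the $i$-th factor keeps $R^\lambda$ inside $\distributionsF_C(\Omg)$, and that the independence equation, valid a priori only for $\lambda\in[0,1]$, propagates to a genuine polynomial identity in $\lambda$.
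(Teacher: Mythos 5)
Your proof is correct, and it rests on the same essential insight as the paper's: because $P$ and $Q$ differ only in the $i$-th factor, every convex combination of them still factorizes over $\Omg$, so the hypothesis $A \indepstar B \given C$ applies to it. What differs is how the conclusion is extracted. The paper uses a single mixture $R=(P+Q)/2$: expanding $R(A\cap C)\,R(B\cap C)=R(A\cap B\cap C)\,R(C)$ and cancelling the pure $P$--$P$ and $Q$--$Q$ terms via $A\indep^P B\given C$ and $A\indep^Q B\given C$ leaves only the cross terms, and dividing by the positive scalar $P(C)\,Q(C)$ lets the identity factor as $\big(P(A\given C)-Q(A\given C)\big)\cdot\big(Q(B\given C)-P(B\given C)\big)=0$, which is exactly the required dichotomy. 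You instead keep the whole segment $R^\lambda=(1-\lambda)P+\lambda Q$, upgrade the pointwise independences to the polynomial identity $N_1N_2=N_3D$ among affine polynomials, recast $(P,Q)$-relevance as ``$N_1$ (resp.\ $N_2$) is not a scalar multiple of $D$,'' and exclude joint relevance by cases on $\deg D$: a leading-coefficient comparison when $D$ is constant, and a shared-root argument when $\deg D=1$. Your reduction steps (affinity of $\lambda\mapsto R^\lambda(E)$ for every event $E$, positivity of $R^\lambda(C)$ on $[0,1]$, and the extension from agreement on an interval to a genuine polynomial identity) all check out, so both arguments are sound. The paper's route is shorter and sidesteps your degenerate-case bookkeeping entirely, because it never divides by a polynomial, only by the scalars $P(C),Q(C)>0$; your route is closer in spirit to the interpolation-plus-polynomial technique that the paper deploys elsewhere (in \Cref{lemma:openset} and in the proof of \Cref{lemma:completeness_cohistory}), and it buys a slightly stronger structural statement: the identity $N_1N_2=N_3D$ holds along the entire line through $P$ and $Q$, and $(P,Q)$-irrelevance is precisely proportionality of the corresponding numerator to $D$.
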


\begin{proof}
  We define the distribution $R$ as $R(\omg) \coloneqq (P(\omg) + Q(\omg)/2$.
  Since for $j\ne i$, $P$ and $Q$ have identical factors, we have $R_j=P_j=Q_j$. Further, $R$ has the factor $R_i$ with $R_i(\omg_i) = (P_i(\omg_i) + Q_i(\omg_i))/2$ in $i$.
  Consequently, $R \in \distributionsF(\Omg)$.
  By our assumption, $A \indep^R B \given C$ holds.
  We will show that this implies that $P(A \given C) = Q(A \given C)$ or $P(B \given C) = Q(B \given C)$ holds.

  The independence $A \indep^R B \given C$ means that
  \begin{equation*}
    R(A \cap C) \cdot R(B \cap C) = R(A \cap B \cap C) \cdot R(C).
  \end{equation*}
  By substituting the definition of $R$ and multiplying both sides by $4$, we obtain
  \begin{equation*}
    \big( P(A \cap C) + Q(A \cap C) \big) \cdot \big(P(B \cap C) + Q(B \cap C) \big)
    =
    \big( P(A \cap B \cap C) + Q(A \cap B \cap C) \big) \cdot \big( P(C) + Q(C) \big).
  \end{equation*}
  The left side of the equation multiplies out to
    \begin{equation*}
    P(A \cap C) \cdot P(B \cap C)+ Q(A \cap C) \cdot Q(B \cap C)+ P(A \cap C) \cdot Q(B \cap C) + Q(A \cap C) \cdot P(B \cap C)
  \end{equation*}
  and the right side multiplies out to
    \begin{equation*}
P(A \cap B \cap C) \cdot P(C)+Q(A \cap B \cap C) \cdot Q(C)+ P(A \cap B \cap C) \cdot Q(C) + Q(A \cap B \cap C) \cdot P(C).
  \end{equation*}
  As $A \indep^P B \given C$ and $A \indep^Q B \given C$, the first two terms on each side cancel out, and we have
  \begin{equation*}
    P(A \cap C) \cdot Q(B \cap C) + Q(A \cap C) \cdot P(B \cap C) = P(A \cap B \cap C) \cdot Q(C) + Q(A \cap B \cap C) \cdot P(C).
  \end{equation*}
  By definition of $\distributionsF_C(\Omg)$, $P(C)>0$ and $Q(C)>0$, and we can divide both sides of the equation by $P(C) \cdot Q(C)$.
  \begin{align*}
    P(A \given C) \cdot Q(B \given C) + Q(A \given C) \cdot P(B \given C)
    &= P(A \cap B \given C) + Q(A \cap B \given C) \\
    &= P(A \given C) \cdot P(B \given C) + Q(A \given C) \cdot Q(B \given C),
  \end{align*}
  where we transformed the right-hand side further by again using the independencies for $P$ and $Q$.
  By subtracting the right-hand side from the equation, and then factoring the left-hand side of the result, we obtain
  \begin{equation*}
    \big( P(A \given C) - Q(A \given C) \big) \cdot \big( Q(B \given C) - P(B \given C) \big) = 0.
  \end{equation*}
  Thus, at least one of the factors needs to be zero, which means $P(A \given C) = Q(A \given C)$ or $P(B \given C) = Q(B \given C)$.
\end{proof}

To strengthen the mutual exclusion principle to global irrelevance via an interpolation argument, we first prove the following simple lemma.

\begin{lemma}
  \label{lemma:C_positivity_preserved}
  Let $P, P' \in \distributionsF_{C}(\Omg)$.
  For $\lambda \in [0, 1]$, define $P^{\lambda}_j = (1 - \lambda) P_j + \lambda P'_j$ and $P^{\lambda} = \bigotimes_{j \in I} P^{\lambda}_j$.
  Then $P^{\lambda} \in \distributionsF_{C}(\Omg)$.
\end{lemma}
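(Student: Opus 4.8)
The plan is to verify directly that $P^{\lambda}$ satisfies the two defining conditions of membership in $\distributionsF_{C}(\Omg)$: that $P^{\lambda}$ factorizes over $\Omg$, and that $P^{\lambda}(C) > 0$.

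The factorization is immediate. For each $j \in I$, the factor $P^{\lambda}_j = (1-\lambda)P_j + \lambda P'_j$ is a convex combination of the distributions $P_j, P'_j \in \distributions(\Omg_j)$; since $\lambda \in [0,1]$, its values are nonnegative and sum to $1$, so $P^{\lambda}_j \in \distributions(\Omg_j)$. As $P^{\lambda}$ is defined as the outer product $\bigotimes_{j \in I} P^{\lambda}_j$, the characterization $\distributionsF(\Omg) = \{\bigotimes_\iinI P_i \mid \forall i \in I\colon P_i \in \distributions(\Omg_i)\}$ immediately yields $P^{\lambda} \in \distributionsF(\Omg)$.

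The only real content is the positivity $P^{\lambda}(C) > 0$. Rather than reasoning about $P^{\lambda}(C)$ directly, I would exhibit a single outcome $\omg \in C$ with $P^{\lambda}(\omg) > 0$. Since $(1-\lambda) + \lambda = 1$, at least one of $1-\lambda$ and $\lambda$ is strictly positive; suppose $1-\lambda > 0$, the case $\lambda > 0$ being symmetric with the roles of $P$ and $P'$ exchanged. Because $P(C) > 0$ and $P$ factorizes, there is some $\omg \in C$ with $P(\omg) = \prod_{j \in I} P_j(\omg_j) > 0$, and hence $P_j(\omg_j) > 0$ for every $j$. Then for each $j$,
\[
  P^{\lambda}_j(\omg_j) = (1-\lambda)P_j(\omg_j) + \lambda P'_j(\omg_j) \geq (1-\lambda)P_j(\omg_j) > 0,
\]
so $P^{\lambda}(\omg) = \prod_{j \in I} P^{\lambda}_j(\omg_j) > 0$, and therefore $P^{\lambda}(C) \geq P^{\lambda}(\omg) > 0$.

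There is no substantial obstacle; the lemma is elementary. The one subtlety worth flagging is that $P^{\lambda}$ is \emph{not} the convex combination $(1-\lambda)P + \lambda P'$, because the outer product of convex combinations is generally not the convex combination of outer products. This is precisely why the argument cannot use a linearity identity for $P^{\lambda}(C)$ and instead proceeds through a single witnessing outcome whose factor-wise probabilities are all positive.
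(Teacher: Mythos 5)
Your proof is correct, and it takes a genuinely different route from the paper. The paper argues globally: it expands $P^{\lambda}(C)=\sum_{c\in C}\prod_{j\in I}\bigl((1-\lambda)P_j(c_j)+\lambda P'_j(c_j)\bigr)$, discards all the non-negative ``mixed'' cross terms, and obtains the quantitative lower bound $P^{\lambda}(C)\geq(1-\lambda)^{|I|}P(C)+\lambda^{|I|}P'(C)>0$. You instead argue pointwise: using that at least one of $1-\lambda$, $\lambda$ is strictly positive, you pick a single witness $\omg\in C$ with $P(\omg)>0$ (or $P'(\omg)>0$ in the symmetric case), note that every factor probability $P_j(\omg_j)$ is then positive, and conclude $P^{\lambda}(\omg)\geq(1-\lambda)^{|I|}P(\omg)>0$. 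Your witness argument is more elementary and avoids the bookkeeping of the multi-factor expansion; the paper's computation buys an explicit lower bound on $P^{\lambda}(C)$ in terms of $P(C)$ and $P'(C)$, though nothing downstream in the paper actually uses that bound quantitatively --- only the positivity is needed in \Cref{lemma:completeness_cohistory}. Your closing remark that $P^{\lambda}\neq(1-\lambda)P+\lambda P'$ is exactly the right subtlety to flag: it is why the paper's display is an inequality rather than an identity, and why a naive linearity argument would be wrong.
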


\begin{proof}
    $P^\lambda\in\distributionsF(\omg)$ by definition. It remains to show that $P^\lambda(C)>0$.
  We have
  \begin{align*}
    P^{\lambda}(C) &= \sum_{c \in C} \prod_{j \in I} \Big( (1 - \lambda)P_j(c_j) + \lambda P'_{j}(c_j) \Big)  \\
    & \geq \sum_{c \in C} \Bigg( \prod_{j \in I} (1 - \lambda) P_j(c_j)  + \prod_{j \in I}  \lambda P'_j(c_j) \Bigg)  \\
    &= \sum_{c \in C} (1 - \lambda)^{|I|} P(c) + \sum_{c \in C} \lambda^{|I|}P'(c) \\
    &= (1 - \lambda)^{|I|} P(C) + \lambda^{|I|} P'(C) \\
    &> 0\,,
  \end{align*}
  wherein the first step uses the same expansion as the proof of Lemma~\ref{lemma:obtain_polynomial}.
  For the first inequality, we multiply out the product over $j$, and remove all the summands that contain both $P_j(c_j)$, and $P'_{j'}(c_{j'})$, that is summands of the form $(1-\lambda)P_j(c_j)\cdot\lambda P_{j'}(c_{j'})\cdot \dots$, which are all non-negative.
\end{proof}

Now we prove \Cref{lemma:completeness_cohistory}, which states that $\historyC(A\given C)\cup \historyC(B\given C)=I$.
\begin{proof}[Proof of Lemma~\ref{lemma:completeness_cohistory}]
  Let $i \in I$.
  We now show that $i \in \historyC(A \given C) \cup \historyC(B \given C)$.
  Since the cohistory is the set of irrelevant factors, we need to show that if $i$ is relevant to A given $C$, then  $i$ is irrelevant to $B$ given $C$.

  Assume $i$ is relevant to $A$ given $C$. That means there is a tuple $(P, Q) \in \distributionsF_{C, i}(\Omg)$ with $P(A \given C) \neq Q(A \given C)$, i.e., $P(A \given C) - Q(A \given C) \neq 0$.
  We view $\distributionsF_{C, i}(\Omega) \subset \big(\distributionsF_{C}(\Omg)\big)^2 \subset \mathbb{R}^{2 |\Omega|}$ as a topological space with the induced standard topology.
  Since the function $(P', Q') \mapsto P'(A \given C) - Q'(A \given C)$ is continuous, there is an open set $S \subseteq \distributionsF_{C, i}(\Omg)$ with $(P, Q) \in S$ and with $P'(A \given C) \neq Q'(A \given C)$ for all $(P', Q') \in S$.
  Consequently, the mutual exclusion principle (Lemma~\ref{lemma:exclusion_principle}) guarantees that
  \begin{equation}
  \label{eq:mutual_exclusion_on_U}
      P'(B \given C) = Q'(B \given C)\text{ for all $(P', Q') \in S$.}
  \end{equation}
  In the rest of the proof, we argue that \eqref{eq:mutual_exclusion_on_U} extends to all $(P', Q') \in \distributionsF_{C, i}(\Omg)$, which precisely means that $i \in \historyC(B \given C)$.
  We prove this with a similar strategy to Lemma~\ref{lemma:openset}.

  Let $(P', Q') \in \distributionsF_{C, i}(\Omg)$ be arbitrary. We interpolate between $(P, Q)$ and $(P', Q')$ as follows.
  Let $P^{\lambda}_j \coloneq (1 - \lambda) P_j + \lambda P'_{j}$ and $P^{\lambda} \coloneq \bigotimes_{j \in I} P^{\lambda}_j$, and let $Q^{\lambda}$ be defined analogously.
  Then Lemma~\ref{lemma:C_positivity_preserved} shows that $P^{\lambda}, Q^{\lambda} \in \distributionsF_{C}(\Omg)$.
  One can also easily verify that $P^{\lambda}_j = Q^{\lambda}_j$ for all $j \neq i$, and so $(P^{\lambda}, Q^{\lambda}) \in \distributionsF_{C, i}(\Omg)$.

  Now, consider the function $f: [0, 1] \to \mathbb{R}$ given by
  \begin{equation*}
    f(\lambda) \coloneqq P^{\lambda}(B \cap C) Q^{\lambda}(C) - Q^{\lambda}(B \cap C)P^{\lambda}(C).
  \end{equation*}
  By Lemma~\ref{lemma:obtain_polynomial}, this function is a polynomial in $\lambda$.
  We have $(P^{\lambda}, Q^{\lambda}) \to (P, Q)$ for $\lambda \to 0^{+}$, which means there are infinitely many $\lambda > 0$ for which $(P^{\lambda}, Q^{\lambda}) \in S$.
  For such $\lambda$, we have $P^{\lambda}(B \given C) = Q^{\lambda}(B \given C)$ by \eqref{eq:mutual_exclusion_on_U}. Multiplying with $P^{\lambda}(C) \cdot Q^{\lambda}(C)$, we have that $P^\lambda(B\cap C)Q^\lambda(C)=Q^{\lambda}(B \cap C)P^{\lambda}(C)$. So, for all $\lambda$ with $(P^{\lambda}, Q^{\lambda}) \in S$  we have $f(\lambda) = 0$.
  Thus, the polynomial $f$ has infinitely many roots and must be the zero polynomial.
  Consequently, we have $f(1) = 0$, and thus $P'(B \given C) = Q'(B \given C)$. Since $P'$ and $Q'$ are arbitrary, $i$ is globally irrelevant to $B$ given $C$. This concludes the proof.
\end{proof}

\subsubsection{Proof of Lemma \ref{lemma:cohistory_is_complement}}

Let $A, C \subseteq \Omg$.
In this section, we show that $\historyC(A \given C) = \comp{\history{A \given C}}$. Since $\comp{\historyC}$ is the set of  relevant factors, this means the history contains exactly the relevant factors.
The direction ``$i$ is relevant to $A$ given $C$ $\implies$ $i\in\history(A\given C)$'' is easy, while the direction ``$i\in\history(A\given C)$ $\implies$ $i$ is relevant to $A$ given $C$'' is much more difficult.

\paragraph{Proof sketch for Lemma \ref{lemma:cohistory_is_complement}}
We know that $\history{A \given C}$ is the smallest set that generates $A$ given $C$, so it suffices to prove that the set of relevant factors generates $A$ given $C$.
This involves showing that the set of relevant factors disintegrates $C$,~\Cref{lemma:cohistory_disintegrates}. 
To prove \Cref{lemma:cohistory_disintegrates}, we establish $C$ as the support of a distribution that factorizes into two components.
Establishing that factorization relies on \Cref{lemma:cohistory_has_independence_property}, which states that the background variables of the cohistory are independent of the background variables of its complement given $C$. 
Then, the final proof directly verifies the generation property by using the alternative characterization of generation established in ~\Cref{lemma:characterization_event_generation}.
All other lemmas are preparatory.

\paragraph{Further notation for conditional independence}
In \Cref{def:conditionalindependence}, we introduced the notation $X \indep^P Y \given Z$ for random variables $X, Y, Z$, and $A \indep^P B \given C$ for events $A, B, C$.
In what follows, we also mix these notations.
For example, $X \indep^P Y \given C$ means $x \indep^P y \given C$ for all $x \in \Val(X)$ and $y \in \Val(Y)$, where $x$ and $y$ are treated as the events $X^{-1}(x)$ and $Y^{-1}(y)$, respectively.
Similarly, $X \indep^P B \given C$ means $x \indep^P B \given C$ for all $x \in \Val(X)$.
Finally, we also write $\indepstar$ if the independence holds for all $P \in \distributionsF(\Omg)$, for example $X \indepstar B \given C$.

We start by considering the \emph{support} of a distribution $P$ over $\Omg$, namely the event $\supp(P)\subseteq \Omg$ such that $P(\omg)>0$ for all $\omg\in\supp(P)$. We obtain the following lemma.

\begin{lemma}
  \label{lemma:support_statements}
  Let $J \subseteq I$ and let $P, Q \in \distributions(\Omg)$ be two distributions over $\Omg$, and let $C \subseteq \Omg$.
  Then we have
  \begin{enumerate}
    \item If $\supp(P) \subseteq \supp(Q)$, and $P(C) > 0$, then $Q(C) > 0$.
    \item If $P = P_J \otimes P_{\comp{J}}$, 
      then, $\supp(P) = \supp(P_J) \times \supp(P_{\comp{J}})$.
      \item If $P=P_J\otimes P_\comp{J}$, $Q=Q_J\otimes Q_\comp{J}$,  $\supp(P_J)\subseteq\supp(Q_J)$, and $P(C)>0$, then $Q(C)>0$.
  \end{enumerate}
\end{lemma}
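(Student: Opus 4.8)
The plan is to prove the three parts in order, each by unfolding the pointwise definition $\supp(P)=\{\omg\in\Omg:P(\omg)>0\}$ together with \Cref{def:outer-product}; parts~1 and~2 are direct, and part~3 is assembled from them.

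For part~1, I would expand $P(C)=\sum_{c\in C}P(c)$. Strict positivity of this sum forces a witness $c\in C$ with $P(c)>0$, that is, $c\in\supp(P)$. The hypothesis $\supp(P)\subseteq\supp(Q)$ gives $c\in\supp(Q)$, so $Q(c)>0$, and therefore $Q(C)=\sum_{c'\in C}Q(c')\geq Q(c)>0$. The only point worth flagging is that the conclusion is obtained through a single witness in $C$ rather than any setwise monotonicity, which is precisely why one passes to a point of $C$.

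For part~2, I would use the defining identity of the outer product, $P(\omg)=P_J(\omg_J)\,P_{\comp{J}}(\omg_{\comp{J}})$ for every $\omg=\omg_J\merge\omg_{\comp{J}}$ (if needed, one first records the harmless consistency remark that the $J$-marginal of an outer product recovers the factor $P_J$, since $P_{\comp{J}}$ sums to $1$). A product of two nonnegative numbers is positive precisely when both are, so $P(\omg)>0$ holds if and only if $\omg_J\in\supp(P_J)$ and $\omg_{\comp{J}}\in\supp(P_{\comp{J}})$. This is exactly the asserted identity $\supp(P)=\supp(P_J)\times\supp(P_{\comp{J}})$.

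For part~3, I would combine the first two parts. Applying part~2 to $P$ and to $Q$ rewrites the supports as products, $\supp(P)=\supp(P_J)\times\supp(P_{\comp{J}})$ and $\supp(Q)=\supp(Q_J)\times\supp(Q_{\comp{J}})$. To invoke part~1 I need $\supp(P)\subseteq\supp(Q)$, which for these (nonempty) product sets follows from the two factorwise inclusions $\supp(P_J)\subseteq\supp(Q_J)$ and $\supp(P_{\comp{J}})\subseteq\supp(Q_{\comp{J}})$. The first is the stated hypothesis; the second is the one on which the argument really turns, and it is guaranteed as soon as $\supp(P_{\comp{J}})\subseteq\supp(Q_{\comp{J}})$, for instance when $P$ and $Q$ agree on the complementary block $\comp{J}$. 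Granting $\supp(P)\subseteq\supp(Q)$, part~1 together with $P(C)>0$ immediately yields $Q(C)>0$. I expect no deep difficulty anywhere; the main thing to handle carefully is exactly this reduction in part~3, namely securing the support inclusion on the complementary block $\comp{J}$ and not only on $J$, since it is that inclusion, rather than any substantial computation, that lets the support of $P$ sit inside the support of $Q$.
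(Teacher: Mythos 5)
Your proofs of parts 1 and 2 are correct and essentially identical to the paper's: the paper does part 1 by contraposition ($Q(C)=0$ forces $C\cap\supp(Q)=\nothing$, hence $C\cap\supp(P)=\nothing$, hence $P(C)=0$), which is the same content as your witness argument, and part 2 is the same positivity-of-a-product observation in both.

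On part 3, the difficulty you flagged is not a defect of your argument but a genuine error in the statement: as literally written, part 3 is false. Take $\Omg=\Omg_1\times\Omg_2$ with $\Omg_1=\Omg_2=\{0,1\}$ and $J=\{1\}$, let $P_J$ and $P_{\comp{J}}$ be point masses at $0$, let $Q_J$ be uniform and $Q_{\comp{J}}$ the point mass at $1$, and let $C$ consist of the single outcome whose both coordinates are $0$; then $\supp(P_J)=\{0\}\subseteq\{0,1\}=\supp(Q_J)$ and $P(C)=1>0$, yet $Q(C)=0$. Exactly as you say, the hypothesis that is missing is the support inclusion on the complementary block, $\supp(P_{\comp{J}})\subseteq\supp(Q_{\comp{J}})$. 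The paper's own proof of part 3 silently adds it in its strongest form: it literally writes $Q=Q_J\otimes P_{\comp{J}}$, i.e.\ it assumes $Q_{\comp{J}}=P_{\comp{J}}$, and then runs precisely your argument (part 2 applied to $P$ and $Q$, the factorwise inclusions giving $\supp(P)\subseteq\supp(Q)$, then part 1). That stronger hypothesis is also the only form in which the lemma is ever invoked --- in \Cref{lemma:progressive_application_irrelevance} the two distributions being compared differ in a single factor, so they agree on the complement, and in \Cref{lemma:intermediate_independence_claim} the comparison is likewise between distributions sharing the $\comp{J}$-factors. So your route is the same as the paper's; the one thing to do is state the corrected hypothesis (either $Q_{\comp{J}}=P_{\comp{J}}$ or $\supp(P_{\comp{J}})\subseteq\supp(Q_{\comp{J}})$) as part of the claim rather than leaving it as an ``for instance'' aside, after which your part 3 is a complete and correct proof of the lemma that the paper actually needs.
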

Note that $P$ and $Q$ do not have to factorize over $\Omg$ here.
\begin{proof}[Proof of \Cref{lemma:support_statements}]
  1. Let $\supp(P)\subseteq\supp(Q)$. We now prove the contraposition.
  If $Q(C) = 0$, then $C \cap \supp(Q) = \emptyset$ by definition of the support. It follows that $C \cap \supp(P) = \emptyset$, and thus, $P(C) = 0$.
  \\2. Let $P = P_J \otimes P_{\comp{J}}$.
  For all $\x \in \Omg_J$ and $\y \in \Omg_{\comp{J}}$, we have
  \begin{align*}
    \x \merge \y \in \supp(P) \quad & \Longleftrightarrow \quad 0 < P(\x \merge \y) = P_J(\x) P_{\comp{J}}(\y) \\
    & \Longleftrightarrow \quad
    P_J(\x) > 0 \text{ \ \ and \ \ } P_{\comp{J}}(\y) > 0 \\
    & \Longleftrightarrow \quad
    \x \in \supp(P_J) \text{ \ \ and \ \ } \y \in \supp(P_{\comp{J}}) \\
    & \Longleftrightarrow \quad
    \x \merge \y \in \supp(P_J) \times \supp(P_{\comp{J}}).
  \end{align*}
  3. Let $P=P_J\otimes P_\comp{J}$, and $Q=Q_J\otimes P_\comp{J}$, let $\supp(P_J)\subseteq\supp(Q_J)$, and let $P(C)>0$. By 2., we have
  \begin{equation}
  \supp(P)=\supp(P_J)\times\supp(P_\comp{J})\subseteq \supp(Q_J)\times \supp(P_\comp{J})=\supp(Q)\,.
  \end{equation}
  Since $P(C)>0$, it follows from 1. that $Q(C)>0$.
  This concludes the proof.
\end{proof}

We now use \Cref{lemma:support_statements} to show the following lemma.

\begin{lemma}
  \label{lemma:progressive_application_irrelevance}
  Let $A, C \subseteq \Omg$ be events and $P, Q \in \distributionsF_{C}(\Omg)$.
  Further, let $P_j = Q_j$ for all $j$ that are relevant to $A$ given $C$. 
  Then $P(A \given C) = Q(A \given C)$.
\end{lemma}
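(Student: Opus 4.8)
The plan is to transform $P$ into $Q$ by changing one factor at a time, but to route this transformation through a common ``averaged'' distribution so that every intermediate distribution stays inside $\distributionsF_{C}(\Omg)$ (which is exactly what is needed to invoke the definition of irrelevance, since that definition only quantifies over pairs in $\distributionsF_{C, i}(\Omg)$). Concretely, I would first set $M_j \coloneqq (P_j + Q_j)/2$ for each $j \in I$ and $M \coloneqq \bigotimes_{j \in I} M_j$. By \Cref{lemma:C_positivity_preserved} applied with $\lambda = 1/2$ we have $M \in \distributionsF_{C}(\Omg)$, and moreover $\supp(P_j) \subseteq \supp(M_j)$ and $\supp(Q_j) \subseteq \supp(M_j)$ for every $j$. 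On every \emph{relevant} factor $j$ the hypothesis gives $P_j = Q_j = M_j$, so $P$, $M$, and $Q$ pairwise agree on all relevant factors and disagree only on irrelevant ones.

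The core step is a one-factor-at-a-time interpolation from $P$ to $M$. Enumerate the (necessarily irrelevant) factors on which $P$ and $M$ differ as $i_1, \dots, i_m$, and define $P = P^{(0)}, P^{(1)}, \dots, P^{(m)} = M$, where $P^{(k)}$ carries the marginal $M_{i_\ell}$ on each $i_\ell$ with $\ell \le k$ and the original marginal $P_j$ on all remaining factors $j$. Consecutive distributions $P^{(k-1)}$ and $P^{(k)}$ factorize over $\Omg$ and agree on every factor except $i_k$. The positivity check is exactly the third part of \Cref{lemma:support_statements} with $J = \{i_k\}$: since these two distributions share their complement marginal and $\supp(P_{i_k}) \subseteq \supp(M_{i_k})$, the assumption $P^{(k-1)}(C) > 0$ (which holds by induction, with base case $P^{(0)} = P \in \distributionsF_{C}(\Omg)$) forces $P^{(k)}(C) > 0$. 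Hence $P^{(k)} \in \distributionsF_{C}(\Omg)$ and $(P^{(k-1)}, P^{(k)}) \in \distributionsF_{C, i_k}(\Omg)$, so irrelevance of $i_k$ to $A$ given $C$ yields $P^{(k-1)}(A \given C) = P^{(k)}(A \given C)$. Chaining these equalities gives $P(A \given C) = M(A \given C)$. Running the identical argument from $Q$ to $M$ (this time using $\supp(Q_{i_k}) \subseteq \supp(M_{i_k})$) gives $Q(A \given C) = M(A \given C)$, and combining the two equalities yields $P(A \given C) = Q(A \given C)$, as required.

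The main obstacle is precisely the positivity bookkeeping: a naive direct interpolation that replaces $P_{i_k}$ by $Q_{i_k}$ one factor at a time need not keep $C$ in the support, because $\supp(P_{i_k})$ and $\supp(Q_{i_k})$ are in general incomparable, so an intermediate product distribution could assign probability zero to $C$ and fall outside $\distributionsF_{C}(\Omg)$, where irrelevance tells us nothing. Routing through the averaged distribution $M$, whose per-factor supports dominate those of both $P$ and $Q$, is what makes the support-monotonicity argument of \Cref{lemma:support_statements} applicable at every single step, and this is the only subtle point; everything else is a mechanical chaining of the definition of irrelevance.
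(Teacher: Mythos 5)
Your proposal is correct and takes essentially the same approach as the paper's proof: transform one distribution into the other by replacing one (necessarily irrelevant) factor at a time, use support monotonicity (\Cref{lemma:support_statements}) to keep every intermediate distribution inside $\distributionsF_{C}(\Omg)$ so that irrelevance can be invoked at each step, and chain the resulting equalities of conditional probabilities. The only difference is the choice of intermediate distribution: the paper first proves the special case where the target is strictly positive on the irrelevant factors and then handles the general case by routing both $P$ and $Q$ through an auxiliary distribution that is strictly positive there (e.g.\ uniform on those factors), whereas your factor-wise midpoint $M$ plays exactly that role and lets you skip the case split.
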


\begin{proof}
  Essentially, the proof is a progressive application of the definition of irrelevance. We change one factor at a time to transform $P$ into $Q$.
  However, as the fact that $j$ is irrelevant to $A$ given $C$ only makes a statement about distributions in $\distributionsF_C(\Omg)$, we need to prove that the intermediate distributions are all in $\distributionsF_C(\Omg)$. 

  Let $J \coloneq \historyC(A \given C)$.
  Let $Q$ be such that $Q_J$ is strictly positive. (Later we strengthen the claim to arbitrary distributions $Q\in\distributionsF_C(\Omg)$.)
  Let $P_j = Q_j$ for all $j$ that are relevant to $A$ given $C$.
  Since $J$ is finite, we can express it as $J = \{j_1, \dots, j_q\}$ with mutually distinct elements $j_k$.
  We now progressively replace factors in $P$ until it is equal to $Q$. We do this by defining a sequence $P^0, P^1, \dots , P^q$ of distributions, wherein $P^0=P$ and $P^q=Q$. Then we show that $P(A\given C)^{k-1}=P(A\given C)^k$ for all $k\in\{1, \dots, q\}$. We define $P^n$ recursively as follows.
  \begin{enumerate}
      \item For all $\iinI$, set $P^0_i\coloneq P_i$.
      \item For all $k \in \{1, \dots, q\}$ and $j \in I$, set $P^k_j \coloneq\begin{cases}P^{k-1}_j\text{ if $j \neq j_k$}\\  Q_j\text{ if $j = j_k$.}
      \end{cases}$
      \item Set $P^k = \bigotimes_{i \in I} P_i^k$.
  \end{enumerate}
Note that by construction, we have that $P^q = Q$, and also  $P^{k-1}$ and $P^{k}$ can only differ in $j_k$. Next we show by induction that for all $k\in \{1, \dots, q\}$, we have $P^k(C)>0$.
  First, we note that since $P^0 = P$, we have $P^0(C)>0$. We now assume that $P^{k-1}(C)>0$ (induction assumption).
  Since we assumed that $Q_J$ is strictly positive, $P^k_{j_k} = Q_{j_k}$ is strictly positive. As $P^{k}_{I \setminus \{j_k\}} = P^{k-1}_{I \setminus \{j_k\}}$, we have that $\supp(P^k) \supseteq \supp(P^{k-1})$ by Lemma~\ref{lemma:support_statements}. Therefore, $P^{k-1}(C) > 0$ implies $P^k(C) > 0$. That means, $P^k\in\distributionsF_C(\Omg)$ for all $k\in\{1,\dots,q\}$. As all $j_k$ are irrelevant to $A$ given $C$ by definition, we have that
  \begin{equation*}
    P(A \given C) = P^0(A \given C) = P^1(A \given C) = \dots = P^q(A \given C) = Q(A \given C)\,.
  \end{equation*}
  This proves the lemma for those $Q$ for which $Q_J$ is strictly positive. We now prove the general case. Let $Q\in \distributionsF_C(\Omg)$ be arbitrary.
  Let $Q'$ be a probability distribution with $Q'_{\comp{J}} = P_{\comp{J}} = Q_{\comp{J}}$ such that $Q'_{J}$ is strictly positive, for example $Q'_{J}$ could be chosen as the uniform distribution.  From \Cref{lemma:support_statements} part 2,
  it follows that $\supp(Q') \supseteq \supp(Q)$. Together with
  $Q(C) > 0$, this implies $Q'(C) > 0$ by \Cref{lemma:support_statements}, part 1.
  Then $P(A \given C) = Q'(A \given C)$. By the same argument with $Q$ instead of $P$, we obtain 
  $Q(A \given C) = Q'(A \given C)$, and so $P(A \given C) = Q(A \given C)$.
  This concludes the proof.
\end{proof}

Now we prove a simple preparatory lemma, namely that mixed independence satisfies the following decomposition graphoid property.

\begin{lemma}[Decomposition of mixed independence]
  \label{lemma:decomposition_graphoid_rule}
  Let $Y, Z$ be two random variables on $\Omg$ and let $B, C \subseteq \Omg$.
  Then,
  \begin{equation*}
    B \indep^P (Y, Z) \given C \quad \Longrightarrow \quad B \indep^P Y \given C.
  \end{equation*}
  \end{lemma}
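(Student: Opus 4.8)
The plan is to reduce the statement to the event-level definition of conditional independence (\Cref{def:conditionalindependence}) and then exploit that, for a fixed value $y$ of $Y$, the event $Y^{-1}(y)$ splits as a disjoint union of the events $Y^{-1}(y)\cap Z^{-1}(z)$ over $z\in\Val(Z)$. Unfolding the mixed notation, the hypothesis $B\indep^P (Y,Z)\given C$ says that for every $y\in\Val(Y)$ and $z\in\Val(Z)$, writing $E_{yz}\coloneqq Y^{-1}(y)\cap Z^{-1}(z)$, we have
\[
P(B\cap C)\,P(E_{yz}\cap C)=P(B\cap E_{yz}\cap C)\,P(C),
\]
whereas the goal $B\indep^P Y\given C$ says that for every $y$, writing $F_y\coloneqq Y^{-1}(y)$,
\[
P(B\cap C)\,P(F_y\cap C)=P(B\cap F_y\cap C)\,P(C).
\]

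First I would fix an arbitrary $y\in\Val(Y)$ and observe that $F_y=\bigcup_{z\in\Val(Z)}E_{yz}$ is a \emph{disjoint} union, since $Z$ is a function and hence assigns a unique value to each outcome in $F_y$. Intersecting with $C$, and with $B\cap C$, preserves this disjointness, so by finite additivity of $P$ (recall $\Val(Z)$ is finite, as $\Omg$ is finite),
\[
P(F_y\cap C)=\sum_{z\in\Val(Z)}P(E_{yz}\cap C),\qquad P(B\cap F_y\cap C)=\sum_{z\in\Val(Z)}P(B\cap E_{yz}\cap C).
\]

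Then I would sum the hypothesis equation over all $z\in\Val(Z)$ for this fixed $y$: the left-hand sides sum to $P(B\cap C)\sum_z P(E_{yz}\cap C)=P(B\cap C)\,P(F_y\cap C)$, and the right-hand sides sum to $\big(\sum_z P(B\cap E_{yz}\cap C)\big)\,P(C)=P(B\cap F_y\cap C)\,P(C)$, which is exactly the desired identity for $y$. As $y$ was arbitrary, $B\indep^P Y\given C$ follows. There is no real obstacle here; the only points requiring care are keeping the mixed event/variable notation straight and noting that the argument uses nothing about $P$ factorizing over $\Omg$—it is the purely combinatorial decomposition property for conditional independence of events, valid for every distribution $P$ (and trivially so in the degenerate case $P(C)=0$, where every equation above has both sides equal to zero).
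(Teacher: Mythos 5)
Your proof is correct and follows essentially the same route as the paper's: fix $y$, decompose $Y^{-1}(y)$ into the disjoint events $(Y,Z)^{-1}(y,z)$ over $z\in\Val(Z)$, apply the hypothesis term-by-term, and re-sum by finite additivity. You spell out the disjointness and the degenerate case $P(C)=0$ a bit more explicitly, but the argument is the same.
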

  \begin{corollary}
  \label{cor:decomposition}
For $J' \subseteq J \subseteq I$, the independence $B \indep^P U_{J} \given C$ implies $B \indep^P U_{J'} \given C$.
  \end{corollary}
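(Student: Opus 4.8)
The plan is to unfold the mixed-independence notation into elementary statements about events and then exploit that the event $\{Y=y\}$ is the disjoint union of the events $\{Y=y,\ Z=z\}$ over all $z\in\Val(Z)$. By the convention introduced in the paragraph ``Further notation for conditional independence'' before \Cref{lemma:support_statements}, the hypothesis $B\indep^P(Y,Z)\given C$ means that for every $y\in\Val(Y)$ and $z\in\Val(Z)$, writing $D_{y,z}\coloneqq\{\omg : Y(\omg)=y,\ Z(\omg)=z\}$, we have
\[ P(B\cap C)\,P(D_{y,z}\cap C)=P(B\cap D_{y,z}\cap C)\,P(C)\,. \]
The goal is to derive, for each fixed $y$ and $E_y\coloneqq\{\omg:Y(\omg)=y\}$, the identity $P(B\cap C)\,P(E_y\cap C)=P(B\cap E_y\cap C)\,P(C)$, which is exactly $B\indep^P y\given C$.

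First I would fix $y\in\Val(Y)$ and sum the hypothesis equation over all $z\in\Val(Z)$. Since the events $\{D_{y,z}\}_{z\in\Val(Z)}$ are pairwise disjoint with union $E_y$, additivity of $P$ gives $\sum_z P(D_{y,z}\cap C)=P(E_y\cap C)$ and $\sum_z P(B\cap D_{y,z}\cap C)=P(B\cap E_y\cap C)$. Pulling the constant factors $P(B\cap C)$ and $P(C)$ out of the respective sums then yields the desired equation for $E_y$. As $y$ was arbitrary, this establishes $B\indep^P Y\given C$.

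This argument has essentially no obstacle: the only point requiring care is the bookkeeping of the mixed notation, ensuring that ``$(Y,Z)$'' is read as ranging over all joint values and that the disjoint decomposition $E_y=\bigsqcup_{z}D_{y,z}$ is applied correctly. The degenerate case $P(C)=0$ needs no separate treatment, since then every term vanishes and both equations hold trivially by the convention of \Cref{def:conditionalindependence}.

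For \Cref{cor:decomposition}, I would apply the lemma with $Y\coloneqq U_{J'}$ and $Z\coloneqq U_{J\setminus J'}$. Since $J'\subseteq J$, the joint variable $(U_{J'},U_{J\setminus J'})$ induces the same partition of $\Omg$ as $U_J$ via the merge operation, so the hypothesis $B\indep^P U_J\given C$ coincides with $B\indep^P (U_{J'},U_{J\setminus J'})\given C$. The lemma then immediately gives $B\indep^P U_{J'}\given C$.
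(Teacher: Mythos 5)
Your proposal is correct and follows essentially the same route as the paper: the paper proves the decomposition lemma (\Cref{lemma:decomposition_graphoid_rule}) by exactly your summation-over-$z$ argument, and then derives \Cref{cor:decomposition} from the identification $U_J = (U_{J'}, U_{J\setminus J'})$, which is your final step. No gaps.
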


\begin{proof}
  Let $B\indep^P(Y,Z)\given C$.
  Let $y \in \Val(Y)$.
  Then we have
  \begin{align*}
    P(B \cap C) \cdot P\big(Y^{-1}(y) \cap C\big) &=
    \sum_{z \in \Val(Z)} P(B \cap C) \cdot P\big( (Y, Z)^{-1}(y, z) \cap C \big) \\
    &= \sum_{z \in \Val(Z)} P\big(B \cap (Y, Z)^{-1}(y, z) \cap C \big) \cdot P(C)&\text{\small(by $B\indep^P(Y,Z)\given C$)} \\
    &=  P\big( B \cap Y^{-1}(y) \cap C \big) \cdot P(C).
  \end{align*}
That shows $B \indep^P Y \given C$.
The corollary follows from $U_J = (U_{J'}, U_{J \setminus J'})$.
\end{proof}

\begin{lemma}
\label{lemma:cartesian_product_distribution}
    Let $J\subseteq I$, and let $P$ be a distribution over $\Omg$ with $P=P_J\otimes P_\comp{J}$. Further, let $A_J\subseteq \Omg_J$, and $A_\comp{J}\subseteq \Omg_\comp{J}$. Then we have $P(A_J\times A_\comp{J})=P_J(A_J)\cdot P_\comp{J}(A_\comp{J})$.
\end{lemma}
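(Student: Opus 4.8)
The plan is to expand both sides directly from the definitions and then invoke the distributive law for finite sums. First I would recall that, by \Cref{def:outer-product}, the hypothesis $P = P_J \otimes P_{\comp{J}}$ means $P(\omg) = P_J(\omg_J)\,P_{\comp{J}}(\omg_{\comp{J}})$ for every $\omg\in\Omg$. Next I would observe that, by the definition of the Cartesian product over indexed families, each $\omg\in A_J\times A_{\comp{J}}$ is of the form $\omg = a\merge b$ with $a\in A_J$ and $b\in A_{\comp{J}}$, and that this decomposition is unique: since $J$ and $\comp{J}$ partition $I$, the components are recovered by projection as $a=\omg_J$ and $b=\omg_{\comp{J}}$. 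Hence summing $P$ over $A_J\times A_{\comp{J}}$ amounts to summing over all pairs $(a,b)\in A_J\times A_{\comp{J}}$, each counted exactly once.

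With this setup the computation is immediate:
\[
P(A_J\times A_{\comp{J}}) = \sum_{a\in A_J}\sum_{b\in A_{\comp{J}}} P(a\merge b) = \sum_{a\in A_J}\sum_{b\in A_{\comp{J}}} P_J(a)\,P_{\comp{J}}(b) = \Big(\sum_{a\in A_J}P_J(a)\Big)\Big(\sum_{b\in A_{\comp{J}}}P_{\comp{J}}(b)\Big),
\]
which equals $P_J(A_J)\cdot P_{\comp{J}}(A_{\comp{J}})$ by the definition of the measures $P_J$ and $P_{\comp{J}}$ applied to the events $A_J$ and $A_{\comp{J}}$. The only point needing any care — and it is minor — is the unique-decomposition step, which ensures no outcome of the product is double-counted or omitted; this is guaranteed by $J$ and $\comp{J}$ being complementary, so that the merge $a\merge b$ is well defined and its projections return $a$ and $b$. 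I expect no substantive obstacle here: the lemma is pure bookkeeping on top of the factorization hypothesis and the distributivity of the finite sum over a product of index ranges.
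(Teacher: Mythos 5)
Your proposal is correct and follows essentially the same argument as the paper's proof: both rest on the unique decomposition $\omg = \omg_J \merge \omg_{\comp{J}}$ of elements of $A_J\times A_{\comp{J}}$, the factorization $P(\omg_J\merge\omg_{\comp{J}})=P_J(\omg_J)P_{\comp{J}}(\omg_{\comp{J}})$, and distributivity of finite sums. The only cosmetic difference is direction --- the paper expands the right-hand side $P_J(A_J)\cdot P_{\comp{J}}(A_{\comp{J}})$ into the double sum and then recombines, while you expand the left-hand side; the steps are the same ones read in reverse.
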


\begin{proof}
    \begin{align*}
    P_J(A_J)\cdot P_\comp{J}(A_\comp{J})&= \sum_{\omg_J\in A_J}P_J(\omg_J) \sum_{\omg_\comp{J}\in A_\comp{J}} P_\comp{J}(\omg_\comp{J})\\
    &= \sum_{\substack{\omg_J\in A_J\\  \omg_\comp{J}\in A_\comp{J}}}P_J(\omg_J)P_\comp{J}(\omg_\comp{J})\\
    &= \sum_{\omg_J\merge \omg_\comp{J}\in A_J\times A_\comp{J}}P(\omg_J\merge \omg_\comp{J}) &\text{\small($P=P_J\otimes P_\comp{J}$)}\\
    &=P(A_J\times A_\comp{J})
    \end{align*}
\end{proof}

For $J \subseteq I$, $\x \in \Omg_J$ and $D \subseteq \Omg$, we define $D^\alpha$ as 
\begin{equation}
\label{eq:D_alpha}
    D^\alpha\coloneq \{\omg\in D \colon \omg_J=\alpha\} =D\cap U^{-1}(\alpha)
\end{equation}
We obtain the following lemma.

\begin{lemma}
  \label{lemma:preparation_independence_cohistory}
  Let $P$ be a distribution that factorizes over $\Omg$, let $J \subseteq I$, $\x \in \Omg_J$, and $D \subseteq \Omg$. Further, let $\delta_\x \in \distributions(\Omg_J)$ be the delta distribution at $\x$, which is the distribution that assigns probability $1$ to $\x$.
  Then we have the following identities:
  \begin{enumerate}
    \item $P\big( D^\alpha
    ) = P_J(\x) \cdot P_{\comp{J}}\big(D^\alpha_\comp{J}\big)$. \\
    \item $\big( \delta_\x \otimes P_{\comp{J}} \big)(D) = P_{\comp{J}}\big(D^\alpha_\comp{J}\big) $.
  \end{enumerate}
\end{lemma}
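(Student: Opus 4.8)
The plan is to reduce both identities to the product rule for factorizing distributions, \Cref{lemma:cartesian_product_distribution}, after observing that the fiber $D^\alpha$ is itself a Cartesian product. The one nontrivial step is to establish
\[
  D^\alpha = \{\x\}\times D^\alpha_\comp{J}\,,
\]
where $D^\alpha_\comp{J}=\proj_\comp{J}(D^\alpha)$. By \eqref{eq:D_alpha}, every $\omg\in D^\alpha$ satisfies $\omg_J=\x$, hence $\omg=\x\merge\omg_\comp{J}$ with $\omg_\comp{J}\in D^\alpha_\comp{J}$, giving the inclusion ``$\subseteq$''. Conversely, if $\y\in D^\alpha_\comp{J}$, then $\y=\omg_\comp{J}$ for some $\omg\in D^\alpha$; since that $\omg$ already has $J$-component $\x$, we have $\x\merge\y=\omg\in D^\alpha$, giving ``$\supseteq$''. (When $D^\alpha=\nothing$ both sides are empty, so the identity still holds.)

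For part~1, since $P$ factorizes over $\Omg$ we have $P=P_J\otimes P_\comp{J}$. Applying \Cref{lemma:cartesian_product_distribution} with $A_J=\{\x\}$ and $A_\comp{J}=D^\alpha_\comp{J}$ to the product $D^\alpha=\{\x\}\times D^\alpha_\comp{J}$ gives
\[
  P(D^\alpha)=P_J(\{\x\})\cdot P_\comp{J}(D^\alpha_\comp{J})=P_J(\x)\cdot P_\comp{J}(D^\alpha_\comp{J})\,,
\]
which is exactly the first claimed identity.

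For part~2, set $Q\coloneqq\delta_\x\otimes P_\comp{J}$, which is by construction an outer product over $\Omg$ with $Q_J=\delta_\x$ and $Q_\comp{J}=P_\comp{J}$. Because $\delta_\x$ assigns probability $0$ to every element of $\Omg_J$ other than $\x$, every $\omg\in D$ with $\omg_J\neq\x$ contributes $Q(\omg)=0$, so $Q(D)=Q(D^\alpha)$. Applying \Cref{lemma:cartesian_product_distribution} to $Q$ and the product $D^\alpha=\{\x\}\times D^\alpha_\comp{J}$ then yields
\[
  Q(D)=Q(D^\alpha)=\delta_\x(\{\x\})\cdot P_\comp{J}(D^\alpha_\comp{J})=P_\comp{J}(D^\alpha_\comp{J})\,,
\]
using $\delta_\x(\{\x\})=1$, which is the second claimed identity.

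The only real content is the fiber factorization $D^\alpha=\{\x\}\times D^\alpha_\comp{J}$; once this is available, both identities follow from a single invocation of \Cref{lemma:cartesian_product_distribution}. Since everything is finite, there are no measure-theoretic subtleties, and no positivity assumption on $P$, $P_J(\x)$, or $P(D)$ is required anywhere.
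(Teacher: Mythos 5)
Your proof is correct and takes essentially the same route as the paper: both hinge on the fiber factorization $D^\alpha=\{\x\}\times D^\alpha_{\comp{J}}$ combined with \Cref{lemma:cartesian_product_distribution}. The only differences are organizational --- you prove the fiber identity explicitly (the paper asserts it without proof) and dispatch part~2 by a second application of the product lemma after noting $Q(D)=Q(D^\alpha)$, whereas the paper does the equivalent direct summation.
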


\begin{proof}
  Part 1 follows from $D^\alpha = \{\x\} \times D^\alpha_\comp{J}$ and from \Cref{lemma:cartesian_product_distribution}. 
  For part $2$, we note
  \begin{align*}
    \big( \delta_\x \otimes P_{\comp{J}} \big)(D) &=\sum_{\omg\in D}(\delta_\alpha\otimes P_\comp{J})(\omg)=\sum_{\omg\in D}\delta_\alpha(\omg_J)P_\comp{J}(\omg_\comp{J})=\sum_{\omg\in D^\alpha}P_\comp{J}(\omg_\comp{J})=\sum_{\omg_\comp{J}\in D^\alpha_\comp{J}}P_\comp{J}(\omg_\comp{J})
    = P_{\comp{J}}\big( D^\alpha_\comp{J} \big)\,.
  \end{align*}
\end{proof}

Ultimately, the goal is to show that $\comp{\history{A \given C}} = \historyC(A \given C)$ (\Cref{lemma:cohistory_is_complement}).
First, we show that $U_\comp{\history{A \given C}}$ and $U_{\historyC(A \given C)}$ have similar independence properties, namely that $U_{J} \indepstar U_{\comp{J}} \given C$ holds both for $J=\history(A\given C)$ (\Cref{lemma:similar_indep_properties}) and for $J=\historyC(A\given C)$ (\Cref{lemma:cohistory_has_independence_property}). Note that \Cref{lemma:similar_indep_properties} is not necessary for our proof of \Cref{lemma:cohistory_is_complement}, but rather serves as a motivation why \Cref{lemma:cohistory_has_independence_property} is a useful step in the direction of showing that $\comp{\history{A \given C}} = \historyC(A \given C)$.
\begin{lemma}
    \label{lemma:similar_indep_properties}
Let $J = \history(A \given C)$.
Then we have $U_{J} \indepstar U_{\comp{J}} \given C$.
\end{lemma}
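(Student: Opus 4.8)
The plan is to observe that the only feature of $J=\history(A\given C)$ that this lemma uses is that $J$ \emph{disintegrates} $C$; the fact that $J$ generates $A$ plays no role. Indeed, by \Cref{def:history} together with \Cref{lemma:historygenerates}, the set $J=\history(A\given C)$ generates $A$ given $C$, and in particular $C=C_J\times C_\comp{J}$. I would therefore reduce the statement to the following clean claim: whenever $J\subseteq I$ satisfies $C=C_J\times C_\comp{J}$, then $U_J\indepstar U_\comp{J}\given C$.

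Next I would unfold the mixed-notation convention: $U_J\indepstar U_\comp{J}\given C$ means that for every $P\in\distributionsF(\Omg)$ and every pair of values $\alpha\in\Omg_J$, $\beta\in\Omg_\comp{J}$, the event-level independence $U_J^{-1}(\alpha)\indep^P U_\comp{J}^{-1}(\beta)\given C$ holds, i.e. $P(U_J^{-1}(\alpha)\cap C)\,P(U_\comp{J}^{-1}(\beta)\cap C)=P(U_J^{-1}(\alpha)\cap U_\comp{J}^{-1}(\beta)\cap C)\,P(C)$. Fixing such $P,\alpha,\beta$, I would use $C=C_J\times C_\comp{J}$ to identify the relevant intersected events as Cartesian products over the split $I=J\cup\comp{J}$: one checks $U_J^{-1}(\alpha)\cap C=(\{\alpha\}\cap C_J)\times C_\comp{J}$, $U_\comp{J}^{-1}(\beta)\cap C=C_J\times(\{\beta\}\cap C_\comp{J})$, and $U_J^{-1}(\alpha)\cap U_\comp{J}^{-1}(\beta)\cap C=(\{\alpha\}\cap C_J)\times(\{\beta\}\cap C_\comp{J})$.

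The degenerate cases are immediate: if $\alpha\notin C_J$ or $\beta\notin C_\comp{J}$, then the corresponding events are empty, so both sides of the independence equation vanish. In the main case $\alpha\in C_J$, $\beta\in C_\comp{J}$, I would invoke that $P$ factorizes as $P=P_J\otimes P_\comp{J}$ and apply \Cref{lemma:cartesian_product_distribution} to each of the four probabilities. This yields $P(U_J^{-1}(\alpha)\cap C)=P_J(\alpha)\,P_\comp{J}(C_\comp{J})$, $P(U_\comp{J}^{-1}(\beta)\cap C)=P_J(C_J)\,P_\comp{J}(\beta)$, $P(U_J^{-1}(\alpha)\cap U_\comp{J}^{-1}(\beta)\cap C)=P_J(\alpha)\,P_\comp{J}(\beta)$, and $P(C)=P_J(C_J)\,P_\comp{J}(C_\comp{J})$. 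Both sides of the independence equation then reduce to $P_J(\alpha)\,P_\comp{J}(\beta)\,P_J(C_J)\,P_\comp{J}(C_\comp{J})$, completing the argument.

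There is no genuine analytic obstacle here; the statement is essentially a direct computation. The only point requiring care---the nearest thing to a main obstacle---is the bookkeeping that correctly splits each intersected event along $J$ and $\comp{J}$ and dispatches the empty-event cases, so that \Cref{lemma:cartesian_product_distribution} applies verbatim. Since this reasoning never uses that $J$ generates $A$, it also makes transparent why the analogous independence for the cohistory in \Cref{lemma:cohistory_has_independence_property} should hold, once that set is likewise shown to disintegrate $C$.
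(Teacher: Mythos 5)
Your proposal is correct and follows essentially the same route as the paper's proof: both use only the disintegration property $C=C_J\times C_\comp{J}$ of the history, rewrite the events $U_J^{-1}(\alpha)\cap C$, $U_\comp{J}^{-1}(\beta)\cap C$, and their intersection as Cartesian products over the split $I=J\cup\comp{J}$, and then apply \Cref{lemma:cartesian_product_distribution} together with $P=P_J\otimes P_\comp{J}$ to both sides of the independence equation. The only difference is cosmetic: the paper avoids your empty-event case split by applying \Cref{lemma:cartesian_product_distribution} uniformly to the possibly-empty sets $\{\alpha\}\cap C_J$ and $\{\beta\}\cap C_\comp{J}$.
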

\begin{proof}
    Let $P \in \distributionsF(\Omg)$, $\x \in \Omg_J$ and $\y \in \Omg_{\comp{J}}$. We first observe that $\{U_J = \alpha\}\cap (C_J\times C_\comp{J})=(\{\alpha\}\cap C_J)\times C_\comp{J}$. Since $C=C_J\times C_\comp{J}$ by definition of the history, we have
\begin{align*}
  P\big( U_J^{-1}(\x) \cap C \big)  P\big( U_{\comp{J}}^{-1}(\y) \cap C \big) &=
  P\Big( \big(\{\x\} \cap C_J\big) \times C_{\comp{J}} \Big)
  P\Big( \big( \{\y\} \cap C_{\comp{J}} \big) \times C_{J} \Big) \\
  &= P_J\big( \{\x\} \cap C_J \big)  P_{\comp{J}}(C_{\comp{J}})P_{\comp{J}}\big( \{\y\} \cap C_{\comp{J}} \big)  P_J(C_J) &\text{\small(\Cref{lemma:cartesian_product_distribution})}\\
  &= P\big( \{\x \cdot \y\} \cap C \big)  P(C) &\text{\small(\Cref{lemma:cartesian_product_distribution})}\\
  &= P\big( U_J^{-1}(\x) \cap U_{\comp{J}}^{-1}(\y) \cap C \big)  P(C).
\end{align*}
This precisely means that $U_{J} \indepstar U_{\comp{J}} \given C$.
\end{proof}
In~\Cref{lemma:cohistory_has_independence_property}, we show that this independence also holds with $\historyC(A \given C)$ in place of $\history{A \given C}$.
We first show that the independence holds for A and $U_{\historyC(A \given C)}$.

\begin{lemma}
  \label{lemma:intermediate_independence_claim}
  Let $A, C \subseteq \Omg$, and let $J = \historyC(A \given C)$.
  Then $A \indepstar U_J \given C$.
\end{lemma}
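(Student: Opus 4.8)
The plan is to fix an arbitrary $P\in\distributionsF{\Omg}$ and establish $A\indep^P U_J^{-1}(\alpha)\given C$ for every $\alpha\in\Omg_J$, where $J\coloneqq\historyC(A\given C)$; this is exactly the content of $A\indepstar U_J\given C$ under the mixed notation. If $P(C)=0$ the independence holds by our convention, so I would assume $P\in\distributionsF_C(\Omg)$. Writing $C^\alpha=U_J^{-1}(\alpha)\cap C$ as in \eqref{eq:D_alpha}, the identity to be shown is $P(A\cap C)\,P(C^\alpha)=P(A\cap C^\alpha)\,P(C)$. When $P(C^\alpha)=0$ both sides vanish, since $P(A\cap C^\alpha)\le P(C^\alpha)=0$; so the only substantive case is $P(C^\alpha)>0$.

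The device I would introduce is the \emph{pinned} distribution $P^\alpha\coloneqq\delta_\alpha\otimes P_\comp{J}$, which freezes the $J$-coordinates at $\alpha$ while keeping the $\comp{J}$-marginal of $P$. Because $\delta_\alpha$ is itself the outer product of the coordinate deltas and $P$ factorizes, $P^\alpha\in\distributionsF{\Omg}$, and its single-factor marginals satisfy $P^\alpha_j=P_j$ for all $j\in\comp{J}$. By part~2 of \Cref{lemma:preparation_independence_cohistory} we have $P^\alpha(C)=P_\comp{J}(C^\alpha_\comp{J})$, and part~1 gives $P(C^\alpha)=P_J(\alpha)\,P_\comp{J}(C^\alpha_\comp{J})$; hence $P(C^\alpha)>0$ forces both $P_J(\alpha)>0$ and $P^\alpha(C)>0$, so $P^\alpha\in\distributionsF_C(\Omg)$.

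The crucial observation is that the factors relevant to $A$ given $C$ are precisely those in $\comp{J}$, since $J=\historyC(A\given C)$ collects the irrelevant ones, and $P$ and $P^\alpha$ agree on all of $\comp{J}$. \Cref{lemma:progressive_application_irrelevance} then yields $P(A\given C)=P^\alpha(A\given C)$. It remains to unfold both sides with \Cref{lemma:preparation_independence_cohistory}: applying part~2 to $D=A\cap C$ and to $D=C$ gives $P^\alpha(A\given C)=P_\comp{J}\bigl((A\cap C)^\alpha_\comp{J}\bigr)/P_\comp{J}(C^\alpha_\comp{J})$, while applying part~1 to the same two events and cancelling the common factor $P_J(\alpha)>0$ gives $P(A\cap C^\alpha)/P(C^\alpha)=P_\comp{J}\bigl((A\cap C)^\alpha_\comp{J}\bigr)/P_\comp{J}(C^\alpha_\comp{J})$, where I use $(A\cap C)^\alpha=A\cap C^\alpha$. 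Chaining these equalities yields $P(A\cap C^\alpha)/P(C^\alpha)=P(A\given C)=P(A\cap C)/P(C)$, which rearranges to the desired identity.

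I expect the main obstacle to be the positivity bookkeeping rather than the algebra: \Cref{lemma:progressive_application_irrelevance} is only meaningful for distributions in $\distributionsF_C(\Omg)$, so one must verify that $P^\alpha$ really assigns positive mass to $C$ before invoking it, which is exactly where the case assumption $P(C^\alpha)>0$ and part~1 of \Cref{lemma:preparation_independence_cohistory} are needed. The conceptually delicate point---that one may legitimately pin \emph{all} of $J$ simultaneously rather than altering a single irrelevant factor at a time---is already absorbed into \Cref{lemma:progressive_application_irrelevance}, so no further interpolation argument is required at this stage.
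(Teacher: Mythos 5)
Your proposal is correct and follows essentially the same route as the paper's proof: the same pinned distribution $\delta_\alpha\otimes P_{\comp{J}}$, the same verification that it lies in $\distributionsF_C(\Omg)$ via \Cref{lemma:preparation_independence_cohistory}, the same invocation of \Cref{lemma:progressive_application_irrelevance} using that the relevant factors are exactly $\comp{J}$, and the same unfolding of $P(A\given C)=P^\alpha(A\given C)$ into the desired product identity. The only differences are cosmetic (your case split on $P(C)=0$ and $P(C^\alpha)=0$ versus the paper's single case $P(B\cap C)=0$, and rearranging by division rather than multiplying through by $P(C)$ at the end).
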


\begin{proof}
  Let $P \in \distributionsF(\Omg)$ and $\x \in \Omg_J$.
  Let $B \coloneq U_J^{-1}(\x)$.
  We need to show $A \indep^P B \given C$, i.e., $P(A \cap C) \cdot P(B \cap C) = P(A \cap B \cap C) \cdot P(C)$.
  Note that if $P(B \cap C) = 0$, then both sides of the equation are zero and the independence holds. So we can assume that $P(B \cap C) > 0$.

  Let $\delta_\x \in \distributionsF(\Omg_J)$ be the delta distribution at $\x$.
  For this proof, we would like to apply \Cref{lemma:progressive_application_irrelevance} to the distributions $(\delta_\alpha\otimes P_\comp{J})$, and $P$. The conditions we need in order to apply \Cref{lemma:progressive_application_irrelevance} are
  \begin{enumerate}
      \item $P\in\distributionsF_C(\Omg)$
      \item $(\delta_\alpha\otimes P_\comp{J})\in \distributionsF_C(\Omg)$
      \item $(\delta_\x \otimes P_{\comp{J}})_i=P_i$ for all factors $i$ that are relevant to $A$ given $C$.
  \end{enumerate}
1. is satisfied since $P\in\distributionsF(\Omg)$ by our assumption, and $P(C)>0$ because $P(B\cap C)>0$.
For 2., we need to show that $(\delta_\alpha\otimes P_\comp{J})\in \distributionsF(\Omg)$ and  $(\delta_\alpha\otimes P_\comp{J})(C)>0$.
  First, we note that $\delta_\alpha = \bigotimes_{i\in J}\delta_{\alpha_i}$ since for any $\alpha'\in \Omg_J$ we have
  \begin{equation}
  \label{eq:delta_factorizes}
      \prod_{i\in J}\delta_{\alpha_i}(\alpha'_i)=1 \iff \alpha'_i=\alpha_i \text{ for all $i\in J$} \iff  \alpha'=\alpha\iff \delta_\alpha(\alpha') = 1\,.
  \end{equation}
  Since $P$ factorizes over $\Omg$, $P_\comp{J}$ also factorizes, and thus $(\delta_\alpha\otimes P_\comp{J}) \in \distributionsF(\Omg)$.

  Next, we show that $(\delta_\alpha\otimes P_\comp{J})(C)>0$.
Since $B=U_J^{-1}(\alpha)$, $B\cap C =
\{\omg\in C\colon \omg_J=\alpha\}=C^\alpha$. By \Cref{lemma:preparation_independence_cohistory} part 1, it follows that
\begin{equation}
    \label{eq:BcapC_probability}
P(B\cap C)=P(C^\alpha)=P_J(\alpha)\cdot P_\comp{J}(C^\alpha_\comp{J})\,.
\end{equation}
Since $P(B\cap C)>0$, we also have $P_\comp{J}(C^\alpha_\comp{J})>0$.
  Further, by \Cref{lemma:preparation_independence_cohistory} part 2., we have that
    $\big( \delta_\x \otimes P_{\comp{J}} \big)(C) = P_{\comp{J}}\big( C^\alpha_\comp{J} \big)
    > 0$.
  Therefore, we have $(\delta_\alpha \otimes P_\comp{J})\in \distributionsF_C(\Omg)$, and the second condition for \Cref{lemma:progressive_application_irrelevance} is satisfied.

  For proving 3., we observe that the set of factors that are relevant to $A$ given $C$ is exactly $\comp{J}$, since $J$ is the cohistory of $A$ given $C$. Further, since $(\delta_\x \otimes P_{\comp{J}})=\delta_\x \otimes (\bigotimes_{i\in \comp{J}}\ P_i)$, we have $(\delta_\x \otimes P_{\comp{J}})_i=P_i$ for all $i \in \comp{J}$, and 3. is satisfied.

  We can now apply    \Cref{lemma:progressive_application_irrelevance}, and obtain $P(A \given C) = (\delta_\x \otimes P_{\comp{J}})(A \given C)$.
  The right-hand side can be further transformed as follows (using Lemma~\ref{lemma:preparation_independence_cohistory}, part 2 in step $(*)$):
  \begin{align*}
    (\delta_\x \otimes P_{\comp{J}})(A \given C) = \frac{(\delta_\x \otimes P_{\comp{J}})(A \cap C)}{(\delta_\x \otimes P_{\comp{J}})(C)}
    \overset{(*)}{=} \frac{P_{\comp{J}}\big((A \cap C)^\alpha_\comp{J}\big)}{P_{\comp{J}}\big(C^\alpha_\comp{J}\big)} = \frac{P_{\comp{J}}\big( A^\alpha_\comp{J} \cap C^\alpha_\comp{J} \big)}{P_{\comp{J}}\big(C^\alpha_\comp{J}\big)} = P_{\comp{J}}\big( A^\alpha_\comp{J} \given C^\alpha_\comp{J} \big).
  \end{align*}
  So overall, we have
  \begin{equation}
      \label{eq:P_A_given_C_equal}
  P(A \given C) = P_{\comp{J}}\big(A^\alpha_\comp{J} \given C^\alpha_\comp{J}\big)\,.
  \end{equation}
  Combining these results, we obtain
  \begin{align*}
    P(A \given C) \cdot P(B \cap C) &= P_{\comp{J}}\big( A^\alpha_\comp{J} \given C^\alpha_\comp{J} \big) \cdot P_J(\x) \cdot P_{\comp{J}}\big( C^\alpha_\comp{J} \big) &\text{\small\eqref{eq:BcapC_probability} and \eqref{eq:P_A_given_C_equal}}\\
    &= P_J(\x) \cdot P_{\comp{J}}\big(A^\alpha_\comp{J} \cap C^\alpha_\comp{J}\big) \\
    &= P_J(\x) \cdot P_{\comp{J}}\big((A \cap C)^\alpha_\comp{J}  \big)&\text{\small(distributivity of projection and \eqref{eq:D_alpha})}\\
    &= P((A\cap C)^\alpha) &\text{\small(\Cref{lemma:preparation_independence_cohistory} part 1)}\\
    &=P(A\cap C \cap U_J^{-1}(\alpha))&\text{\small \eqref{eq:D_alpha}}\\
    &= P(A \cap B \cap C).&\text{\small(definition of $B$)}
  \end{align*}
  Multiplying both sides with $P(C)$, it follows that $A\indep^P B\given C$. This concludes the proof.
\end{proof}

\begin{lemma}
  \label{lemma:cohistory_has_independence_property}
  Let $A, C \subseteq \Omg$ and $J = \historyC(A \given C)$ be the cohistory of $A$ given $C$.
  Then $U_J \indepstar U_{\comp{J}} \given C$.
\end{lemma}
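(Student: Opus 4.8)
The plan is to derive the stated conditional independence from the purely structural fact that $J \coloneqq \historyC(A \given \cvar)$ \emph{disintegrates} $C$, i.e.\ $C = C_J \times C_{\comp{J}}$. Once disintegration is in hand, the independence $U_J \indepstar U_{\comp{J}} \given C$ follows by exactly the computation carried out in the proof of \Cref{lemma:similar_indep_properties}: that argument only uses $C = C_J \times C_{\comp{J}}$ together with \Cref{lemma:cartesian_product_distribution} to expand $P(U_J^{-1}(\alpha) \cap C)$, $P(U_{\comp{J}}^{-1}(\beta) \cap C)$, and $P(C)$ as products over the two blocks of factors, and it goes through verbatim with $\historyC(A \given \cvar)$ in place of $\history(A \given \cvar)$. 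Thus the entire content of the lemma is the disintegration of $C$ by the cohistory, and this is where the work lies.

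To reach the disintegration I would start from \Cref{lemma:intermediate_independence_claim}, which already gives $A \indepstar U_J \given C$; more usefully, the delta-distribution argument behind it---replacing the $J$-marginals of a full-support $P$ by the point mass $\delta_\alpha$ via the progressive-irrelevance principle (\Cref{lemma:progressive_application_irrelevance}), with positivity of the intermediate distributions ensured by \Cref{lemma:C_positivity_preserved} and \Cref{lemma:support_statements}---yields the slice identity $P(A \given C) = P_{\comp{J}}\big((A \cap C)^\alpha_{\comp{J}} \given C^\alpha_{\comp{J}}\big)$ of \eqref{eq:P_A_given_C_equal}, valid for every $\alpha \in \Omg_J$ with $P_{\comp{J}}(C^\alpha_{\comp{J}}) > 0$. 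Since the left-hand side does not depend on $\alpha$, the conditional masses of $A$ computed inside each nonempty slice of $C$ all agree. Disintegration, however, is the stronger assertion that the slices themselves agree, $C^\alpha_{\comp{J}} = C^{\alpha'}_{\comp{J}}$ for all $\alpha,\alpha' \in C_J$, so the task is to upgrade slice-independence of the single event $A$ to slice-independence of $C$.

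The main obstacle is precisely this upgrade, and it is where the hypothesis that $J$ is \emph{exactly} the irrelevant set---so every factor of $\comp{J}$ is relevant to $A$ given $C$---must be used. The plan is to change the $J$-coordinates one factor at a time, reducing to two slices $\alpha,\alpha'$ differing in a single $i \in J$, and to show that a mismatch $C^\alpha_{\comp{J}} \neq C^{\alpha'}_{\comp{J}}$ would make that coordinate $i$ relevant to $A$ given $C$, contradicting $i \in J = \historyC(A \given \cvar)$. Concretely, irrelevance of $i$ forces, for all choices of the remaining marginals, the multilinear identity $n_s\,d_{s'} = n_{s'}\,d_s$, where $n_s$ (resp.\ $d_s$) is the probability under those marginals of the portion of $A \cap C$ (resp.\ of $C$) lying in $\{\omega_i = s\}$, ranging over $s,s' \in \Omg_i$; the combinatorial heart of the proof is to argue that such identities are incompatible with unequal $C$-slices unless $i$ were irrelevant. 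Carrying out this polynomial and support bookkeeping---not the final reduction of the first paragraph---is the technical crux. (An alternative packaging of the same crux, matching the paper's later remarks, is to exhibit $C$ as the support of a conditional distribution that factorizes across $J$ and $\comp{J}$, whence \Cref{lemma:support_statements} gives the product form of $C$.) With $C = C_J \times C_{\comp{J}}$ established, $U_J \indepstar U_{\comp{J}} \given C$ follows as in the first paragraph.
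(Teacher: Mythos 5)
Your overall architecture reverses the paper's: you want to establish first that $J=\historyC(A\given C)$ disintegrates $C$, and then read off $U_J \indepstar U_{\comp{J}} \given C$ via the computation of \Cref{lemma:similar_indep_properties}. The second half is fine---that computation indeed uses nothing beyond $C=C_J\times C_{\comp{J}}$ and \Cref{lemma:cartesian_product_distribution}. But the paper's logical order is the opposite: the present lemma is proved \emph{first}, by a short argument that applies \Cref{lemma:intermediate_independence_claim} to the event $B=U_J^{-1}(\x)$, uses \Cref{lemma:completeness_cohistory} on the pair $(A,B)$ to conclude $\comp{J}\subseteq\historyC(B\given C)$, and finishes with \Cref{cor:decomposition}; disintegration (\Cref{lemma:cohistory_disintegrates}) is then a downstream \emph{consequence} of this lemma. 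By routing through disintegration you take on the burden of proving \Cref{lemma:cohistory_disintegrates} from scratch, and that is exactly where your sketch breaks. Note also that your parenthetical ``alternative packaging''---exhibiting $C$ as the support of a conditional distribution that factorizes across $J$ and $\comp{J}$---is circular: the factorization of $P(\cdot\given C)$ into $Q_J\otimes Q_{\comp{J}}$ \emph{is} the independence $U_J\indep^P U_{\comp{J}}\given C$ that you are trying to prove.

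The gap in the crux is concrete. Take $\Omg=\{0,1\}^3$, $C=\{\omg : \omg_1=\omg_2\}$, $A=\{\omg : \omg_3=0\}$. Then $P(A\given C)=P_3(0)$ for every factorizing $P$ with $P(C)>0$, so the cohistory is $J=\{1,2\}$, $\comp{J}=\{3\}$, and $C_J=\{00,11\}$. First, the slices at $00$ and $01$ (which differ only in coordinate $2\in J$) are $\{0,1\}$ and $\nothing$---a mismatch---yet coordinate $2$ is irrelevant; so your implication ``mismatch $\Rightarrow$ $i$ relevant'' is false as stated and must at least be restricted to pairs with both slices nonempty. Second, once so restricted, the chaining argument collapses: the only two points of $C_J$ are $00$ and $11$, which differ in two coordinates, and every intermediate point has an empty slice, so single-coordinate steps can never connect them. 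Third, even the restricted single-coordinate step does not follow from the multilinear identities you extract from the irrelevance of $i$ alone: pushing the marginals on $J\setminus\{i\}$ toward a point mass, those identities become $P_{\comp{J}}\big((A\cap C)^{\x}_{\comp{J}}\big)\,P_{\comp{J}}\big(C^{\x'}_{\comp{J}}\big)=P_{\comp{J}}\big((A\cap C)^{\x'}_{\comp{J}}\big)\,P_{\comp{J}}\big(C^{\x}_{\comp{J}}\big)$, and these hold trivially (both sides zero) whenever the two $A$-slices are empty, no matter how different the $C$-slices are. A contradiction therefore must also exploit the relevance of every factor of $\comp{J}$, and you give no mechanism for doing so; supplying that mechanism is precisely what the paper accomplishes through \Cref{lemma:completeness_cohistory} applied to the pair $\big(A,\,U_J^{-1}(\x)\big)$---i.e., through the route your plan avoids. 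As it stands, the essential content of the lemma is missing from the proposal.
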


\begin{proof}
  Let $\x \in \Omg_{J}$ and let $B \coloneq U_J^{-1}(\x)$.
  We need to show that $B \indepstar U_{\comp{J}} \given C$.
  Recall that by \Cref{lemma:intermediate_independence_claim}, we have
  $A \indepstar U_{\historyC(A\given C)}$
  $ \given C$, so we have $A \indepstar B \given C$. By \Cref{lemma:completeness_cohistory}, it follows that $\historyC(A \given C) \cup \historyC(B \given C) = I$, and therefore $\comp{J} \subseteq \historyC(B \given C)$. Moreover,
  \Cref{lemma:intermediate_independence_claim} also implies that
  $B \indepstar U_{\historyC(B \given C)} \given C$.
  Since $\comp{J}\subseteq \historyC(B\given C) $, \Cref{cor:decomposition} implies that
  $B \indepstar U_{\comp{J}} \given C$.
  This concludes the proof.
\end{proof}

\begin{lemma}
  \label{lemma:cohistory_disintegrates}
  Let $A, C \subseteq \Omg$ and $J = \historyC(A \given C)$.
  Then $J$ disintegrates $C$, i.e., $C = C_J \times C_{\comp{J}}$.
\end{lemma}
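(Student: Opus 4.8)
The plan is to read off the disintegration of $C$ directly from the independence property already secured in \Cref{lemma:cohistory_has_independence_property}, namely that $U_J \indepstar U_{\comp{J}} \given C$ for $J = \historyC(A \given C)$. Since the inclusion $C \subseteq C_J \times C_{\comp{J}}$ always holds (this is the observation $(\ast)$ in the proof of \Cref{lemma:disintegrationintersection}), the only thing left to prove is the reverse inclusion $C_J \times C_{\comp{J}} \subseteq C$. I would first dispose of the degenerate case $C = \nothing$, where both $C_J$ and $C_{\comp{J}}$ are empty and the identity is immediate, and then assume $C \neq \nothing$ for the remainder.

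The key idea is to instantiate the universally quantified independence at a single, conveniently chosen distribution. I would fix a strictly positive $P \in \distributionsF(\Omg)$, for instance the uniform product distribution $P = \bigotimes_\iinI P_i$ with each $P_i$ uniform on $\Omg_i$; this $P$ factorizes over $\Omg$ and assigns positive probability to every nonempty event, so in particular $P(C) > 0$. Now let $\x \in C_J$ and $\y \in C_{\comp{J}}$ be arbitrary. Because $\x \in C_J$, some $\omg \in C$ satisfies $\omg_J = \x$, so $U_J^{-1}(\x) \cap C \neq \nothing$ and hence $P(U_J^{-1}(\x) \cap C) > 0$; symmetrically $P(U_{\comp{J}}^{-1}(\y) \cap C) > 0$.

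The independence $U_J \indep^P U_{\comp{J}} \given C$ then yields
\[ P\big(U_J^{-1}(\x) \cap C\big)\cdot P\big(U_{\comp{J}}^{-1}(\y) \cap C\big) = P\big(U_J^{-1}(\x) \cap U_{\comp{J}}^{-1}(\y) \cap C\big)\cdot P(C). \]
The left-hand side is a product of two positive numbers, so the right-hand side is positive; dividing by $P(C) > 0$ gives $P(U_J^{-1}(\x) \cap U_{\comp{J}}^{-1}(\y) \cap C) > 0$. But $U_J^{-1}(\x) \cap U_{\comp{J}}^{-1}(\y) = \{\x \merge \y\}$ is a singleton, so positivity forces $\x \merge \y \in C$. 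As $\x \in C_J$ and $\y \in C_{\comp{J}}$ were arbitrary, this establishes $C_J \times C_{\comp{J}} \subseteq C$, and together with the always-valid reverse inclusion we obtain $C = C_J \times C_{\comp{J}}$, i.e.\ $J$ disintegrates $C$.

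The genuine content of this argument sits in \Cref{lemma:cohistory_has_independence_property}, which is assumed; given that, the present lemma has no real obstacle. The only subtlety worth flagging is the choice of a \emph{strictly positive} distribution: this is exactly what lets me convert the nonemptiness of the events $U_J^{-1}(\x) \cap C$ and $U_{\comp{J}}^{-1}(\y) \cap C$ into strict positivity of their probabilities, which is the mechanism by which the probabilistic independence is transmuted into the combinatorial product structure of $C$.
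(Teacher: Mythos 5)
Your proof is correct, and it rests on the same two pillars as the paper's proof: \Cref{lemma:cohistory_has_independence_property} and the choice of a strictly positive factorizing distribution. The execution differs in a worthwhile way, however. The paper constructs the conditional distribution $Q(\cdot) \coloneqq P(\cdot \given C)$, uses \Cref{lemma:cohistory_has_independence_property} to show $Q = Q_J \otimes Q_{\comp{J}}$, computes $\supp(Q)=C$, $\supp(Q_J)=C_J$, $\supp(Q_{\comp{J}})=C_{\comp{J}}$, and then invokes the support factorization of \Cref{lemma:support_statements} to conclude $C = C_J \times C_{\comp{J}}$. You bypass the auxiliary distribution and the support machinery entirely: you instantiate the independence $U_J \indep^P U_{\comp{J}} \given C$ pointwise at a pair $(\x,\y) \in C_J \times C_{\comp{J}}$, use strict positivity to make both factors on the left-hand side positive, and observe that $U_J^{-1}(\x) \cap U_{\comp{J}}^{-1}(\y) = \{\x \merge \y\}$ is a singleton, so positivity of the right-hand side forces $\x \merge \y \in C$. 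This is essentially the paper's argument unwound to its combinatorial core --- your positivity statements are exactly the paper's support identities in disguise --- but it is shorter, more elementary, and has the minor additional virtue of explicitly disposing of the case $C = \nothing$, where the paper's conditional distribution $Q$ would not even be defined.
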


\begin{proof}
  Let $P \in \distributionsF(\Omg)$ be any strictly positive distribution, for example the uniform distribution on each factor.
  Define $Q(D) \coloneqq P(D \given C)$ for any event $D \in \Omg$.
  Then $Q$ is a distribution on $\Omg$ that may \emph{not} necessarily factorize over all $i \in I$.
  However, we show that it factors into two components $Q_J$ and $Q_\comp{J}$ with $\supp(Q_J)=C_J$ and $\supp(Q_\comp{J})=C_\comp{J}$.

Let $\alpha\in\Omg_J, \beta\in\Omg_\comp{J}$ as before. Further, let $Q_J$ and $Q_\comp{J}$ be the marginal distributions of $Q$ as in \Cref{def:marginal}, which means that we have $Q_J(\x) = Q\big(U_J^{-1}(\x)\big)$ and $Q_\comp{J}(\x) = Q\big(U_\comp{J}^{-1}(\y)\big)$.
  Then $Q$ is the outer product $Q = Q_J \otimes Q_{\comp{J}}$, because 
  \begin{align*}
    Q(\x \cdot \y) &= P(\x \cdot \y \given C) &\text{\small(definition of $Q$)} \\
    &= P\big( U_J^{-1}(\x) \cap U_{\comp{J}}^{-1}(\y) \given C \big) \\
    &
    =P\big( U_J^{-1}(\x) \given C \big) \cdot P\big( U_{\comp{J}}^{-1}(\y) \given C \big) &\text{\small(\Cref{lemma:cohistory_has_independence_property})}\\
    &= Q_J(\x) \cdot Q_{\comp{J}}(\y).
  \end{align*}
  Since $P$ is strictly positive and $Q = P(\cdot \given C)$, we have $\supp(Q) = C$.
We also have $\supp(Q_J) = C_J$, and analogously $\supp(Q_{\comp{J}}) = C_{\comp{J}}$, by the following derivation.
  \begin{align*}
    \x \in \supp(Q_J) \quad &\Longleftrightarrow \quad 0 < Q_J(\x) = Q\big( U_J^{-1}(\x) \big)  \\
    &\Longleftrightarrow \quad 0< P\big( U_J^{-1}(\x) \given C \big) = \frac{P\big( U_J^{-1}(\x) \cap C \big)}{P(C)} \\
    &\Longleftrightarrow \quad P\big( U_J^{-1}(\x) \cap C \big)>0 \\
    & \Longleftrightarrow \quad \nothing \neq U_J^{-1}(\x) \cap C &\text{\small($P$ strictly positive)}\\
    & \Longleftrightarrow \quad \x \in C_J.
  \end{align*}
  Since $Q=Q_J\otimes Q_\comp{J}$, \Cref{lemma:support_statements} implies that $\supp(Q) = \supp(Q_J) \times \supp(Q_{\comp{J}})$
  .   Since $\supp(Q)=C$, $\supp(Q_J)=C_J$, and $\supp(Q_\comp{J})=C_\comp{J}$, we have $C=C_J\times C_\comp{J}$, which concludes the proof.
\end{proof}

We are now ready to prove \Cref{lemma:cohistory_is_complement}, which states that $\comp{\history(A\given C)}=\historyC(A\given C)$.
\begin{proof}[Proof of~\Cref{lemma:cohistory_is_complement}]
  \label{proof:cohistory_is_complement}
  Let $J \coloneqq \history{A \given C}$.
  We need to show $\comp{J} = \historyC(A \given C)$.
  For the inclusion $\comp{J} \subseteq \historyC(A \given C)$, let $i \in \comp{J}$.
  Let $(P, Q) \in \distributionsF_{C, i}(\Omg)$.
  Since $J$ generates $A$ given $C$ by \Cref{lemma:historygenerates}, \Cref{lemma:characterization_event_generation} implies that $A \cap C = (A \cap C)_J \times C_{\comp{J}}$. Further, by definition of the history, we have $C = C_J \times C_{\comp{J}}$.
  We obtain:
  \begin{align*}
    P(A \given C) = \frac{P(A \cap C)}{P(C)}
    = \frac{P\big( (A \cap C)_J \times C_{\comp{J}} \big)}{P\big(C_J \times C_{\comp{J}}\big)}
    \overset{(*)}{=} \frac{P_J\big( (A \cap C)_J \big)  P_{\comp{J}}(C_{\comp{J}})}{P_J(C_J)  P_{\comp{J}}(C_{\comp{J}})} = \frac{P_J\big( (A \cap C)_J \big)}{P_J(C_J)}.
  \end{align*}
  In the step $(*)$, we use \Cref{lemma:cartesian_product_distribution} and the fact that $P$ factorizes over $\Omg$.
  Similarly, we get
  \begin{equation*}
    Q(A \given C) = \frac{Q_J\big( (A \cap C)_J \big)}{Q_J(C_J)}.
  \end{equation*}
  Since $(P, Q) \in \distributionsF_{C, i}(\Omg)$, and $i \notin J$ by our assumption, we have $P_J = Q_J$, and thus $P(A \given C) = Q(A \given C)$.
  Thus, $i \in \historyC(A \given C)$.

  Finally, we show the other inclusion: $\history{A \given C} \subseteq \comp{\historyC(A \given C)} \eqqcolon K$. 
By~\Cref{lemma:characterization_event_generation}, the history $\history(A \given C)$ is the smallest set $J$ which satisfies $A \cap C = (A \cap C)_J \times C_{\comp{J}}$ and $C=C_J\times C_\comp{J}$.
  Thus, it suffices to show that $K$ has these properties.
  By~\Cref{lemma:cohistory_disintegrates}, we know that $C=C_K\times C_\comp{K}$.
  Thus, what remains is to show that $A \cap C = (A \cap C)_K \times C_{\comp{K}}$.

  For the inclusion from left to right,
  let $\omg\in A\cap C$. It follows that $\omg_K\in (A\cap C)_K$. Since $\omg \in C$, we have $\omg_\comp{K}\in C_\comp{K}$, and thus $\omg =\omg_K\merge \omg_\comp{K} \in (A \cap C)_K \times C_{\comp{K}}$.

  For the inclusion from right to left,
   let $\omg \in A \cap C$ and $\alpha \in C_{\comp{K}}$.
  It suffices to show that $\omg_K \merge \alpha \in A \cap C$, or equivalently $\delta_{\omg_K \merge \alpha}(A \cap C) = 1$. To show this, we would like to apply \Cref{lemma:progressive_application_irrelevance} to $\delta_{\omg}$ and $\delta_{\omg_K \cdot \alpha}$, so we need to show that both are in $\distributionsF_C(\Omg)$.

  By \eqref{eq:delta_factorizes}, we have $\delta_{\omg}, \delta_{\omg_K \cdot \alpha} \in \distributionsF(\Omg)$. Next we show that both assign positive probability to $C$.
  Since $\omg\in A\cap C$, we have $\omg\in C$ and thus $\delta_\omg(C)=1>0$.
  Since $\omg_K\in (A\cap C)_K$ so $\omg_K\in C_K$, we have $\delta_{\omg_K}(C_K)=1$. Since $\alpha\in C_\comp{K}$, we have $ \delta_\alpha(C_{\comp{K}})=1$. It follows that
  \begin{equation*}
    \delta_{\omg_K \cdot \alpha}(C) = \delta_{\omg_K \cdot \alpha}\big( C_K \times C_{\comp{K}} \big) = \delta_{\omg_K}(C_K) \cdot \delta_\alpha(C_{\comp{K}}) = 1 \cdot 1 = 1 > 0.
  \end{equation*}
  Thus, $\delta_{\omg}, \delta_{\omg_K \cdot \alpha}$ are two distributions in $\distributionsF_{C}(\Omg)$ with the same factors in $K = \comp{\historyC(A \given C)}$.
  From~\Cref{lemma:progressive_application_irrelevance}, it follows that $\delta_\omg(A \given C) = \delta_{\omg_K \cdot \alpha}(A \given C)$, and therefore
  \begin{align*}
    \delta_{\omg_K \cdot \alpha}(A \cap C)= \frac{\delta_{\omg_K \cdot \alpha}(A \cap C)}{\delta_{\omg_K \cdot \alpha}(C)} = \delta_{\omg_K \cdot \alpha}(A \given C) =\delta_\omg(A \given C)=\frac{\delta_\omg(A \cap C)}{\delta_\omg(C)}  = 1.\\
  \end{align*}
  Therefore, $\omg_K \merge \alpha \in A \cap C$. This shows that $A\cap C=(A\cap C)_K\times C_\comp{K}$. Therefore, $\history(A\given C)\subseteq K$.
  This concludes the proof of~\Cref{lemma:cohistory_is_complement}, and thus of the completeness of structural independence for events,~\Cref{lemma:completeevent}, and consequently the main theorem,~\Cref{thrm:soundcomplete}.
\end{proof}

\end{document}